\def\isarxivversion{1} 
\DeclareMathOperator{\proj}{proj}
\DeclareMathOperator{\poly}{poly}
\DeclareMathOperator{\out}{out}
\DeclareMathOperator{\test}{test}
\DeclareMathOperator{\tr}{tr}
\DeclareMathOperator*{\argmax}{arg\,max}
\DeclareMathOperator*{\argmin}{arg\,min}
\newcommand{\wh}{\widehat}
\newcommand{\wt}{\widetilde}
\newcommand{\eps}{\epsilon}
\newcommand{\N}{\mathcal{N}}
\newcommand{\R}{\mathbb{R}}
\renewcommand{\P}{\mathbb{P}}
\renewcommand{\varepsilon}{\epsilon}
\renewcommand{\tilde}{\wt}
\renewcommand{\hat}{\wh}
\renewcommand{\eps}{\epsilon}
\newtheorem{propo}{Proposition}
\newtheorem{thm}[propo]{Theorem}
\newtheorem{asmp}{Assumption}
\newtheorem{theorem}{Theorem}[section]
\newtheorem{lemma}[theorem]{Lemma}
\newtheorem{definition}[theorem]{Definition}
\newtheorem{proposition}[theorem]{Proposition}
\newtheorem{corollary}[theorem]{Corollary}
\newtheorem{fact}[theorem]{Fact}
\newtheorem{remark}[theorem]{Remark}
\newcommand{\round}[1]{\left( #1 \right)}
\newcommand{\curly}[1]{\left\lbrace #1 \right\rbrace}
\newcommand{\squarebrack}[1]{\left\lbrack #1 \right\rbrack}
\newcommand{\sumi}[2]{\sum\limits_{i=#1}^{#2}}
\newcommand{\sumj}[2]{\sum\limits_{j=#1}^{#2}}
\newcommand{\abs}[1]{\left\lvert #1 \right\rvert}
\newcommand{\norm}[1]{\left\lVert#1\right\rVert}
\newcommand{\esqnorm}[1]{\left\lVert#1\right\rVert_2^2}
\newcommand{\enorm}[1]{\left\lVert#1\right\rVert_2}
\newcommand{\zero}{\mathbf{0}}
\newcommand{\one}{\mathbf{1}}
\newcommand{\e}{\mathbf{e}}
\newcommand{\p}{\mathbf{p}}
\newcommand{\q}{\mathbf{q}}
\newcommand{\s}{\mathbf{s}}
\newcommand{\uvec}{\mathbf{u}}
\newcommand{\vvec}{\mathbf{v}}
\newcommand{\w}{\mathbf{w}}
\newcommand{\x}{\mathbf{x}}
\newcommand{\y}{\mathbf{y}}
\newcommand{\z}{\mathbf{z}}
\newcommand{\A}{\mathbf{A}}
\newcommand{\B}{\mathbf{B}}
\newcommand{\Hmat}{\mathbf{H}}
\newcommand{\I}{\mathbf{I}}
\newcommand{\M}{\mathbf{M}}
\newcommand{\Smat}{\mathbf{S}}
\newcommand{\U}{\mathbf{U}}
\newcommand{\V}{\mathbf{V}}
\newcommand{\W}{\mathbf{W}}
\newcommand{\X}{\mathbf{X}}
\newcommand{\Z}{\mathbf{Z}}
\newcommand{\SIGMA}{\mathbf{\Sigma}}
\newcommand{\Ccal}{\mathcal{C}}
\newcommand{\Dcal}{\mathcal{D}}
\newcommand{\Ecal}{\mathcal{E}}
\newcommand{\Ncal}{\mathcal{N}}
\newcommand{\Ocal}{\mathcal{O}}
\newcommand{\Pcal}{\mathcal{P}}
\newcommand{\Xcal}{\mathcal{X}}
\newcommand{\betavec}{\boldsymbol{\beta}}
\newcommand{\muvec}{\boldsymbol{\mu}}
\newcommand{\BigO}[1]{\mathcal{O}\round{#1}}
\newcommand{\Rd}[1]{\mathbb{R}^{#1}}
\newcommand{\Natural}{\mathbb{N}}
\DeclareMathOperator*{\myE}{\mathbb{E}}
\newcommand{\E}[1]{\myE\squarebrack{#1}}
\newcommand{\Exp}[2]{\myE_{#1}\squarebrack{#2}}
\newcommand{\Prob}[1]{\mathbb{P}\squarebrack{#1}}
\newcommand{\Var}[1]{\mathrm{Var}\squarebrack{#1}}
\newcommand{\inv}[1]{\frac{1}{#1}}
\newcommand{\indicator}[1]{\mathbbm{1}\curly{#1}}
\definecolor{b2}{RGB}{51,153,255}
\definecolor{mygreen}{RGB}{80,180,0}
\newcommand{\Zhao}[1]{{\color{mygreen}[Zhao: #1]}}
\newcommand{\Weihao}[1]{{\color{cyan}[Weihao: #1]}}
\newcommand{\Raghav}[1]{{\color{blue}[Raghav: #1]}}
\begin{document}

\ifdefined\isarxivversion

\title{Meta-learning for mixed  linear regression }

\date{}

\author{
Weihao Kong\thanks{\texttt{kweihao@gmail.com}. University of Washington}
\and
Raghav Somani\thanks{\texttt{raghavs@cs.washington.edu}. University of Washington}
\and
Zhao Song\thanks{\texttt{zhaos@ias.edu}. Princeton University and Institute for Advanced Study}
\and
Sham Kakade\thanks{\texttt{sham@cs.washington.edu}.  University of Washington}
\and
Sewoong Oh\thanks{\texttt{sewoong@cs.washington.edu}. University of Washington}
}

\else

\icmltitlerunning{
Meta-learning for mixed  linear regression}

\twocolumn[
\icmltitle{Meta-learning for mixed  linear regression}




\icmlsetsymbol{equal}{*}

\begin{icmlauthorlist}
\icmlauthor{Weihao Kong}{uw}
\icmlauthor{Raghav Somani}{uw}
\icmlauthor{Zhao Song}{pri}
\icmlauthor{Sewoong Oh}{uw}
\end{icmlauthorlist}

\icmlaffiliation{uw}{University of Washington, Seattle, Washington, USA}
\icmlaffiliation{pri}{Princeton University/Institute for Advanced Study}

\icmlcorrespondingauthor{Weihao Kong}{kweihao@gmail.com}
\icmlcorrespondingauthor{Raghav Somani}{raghavs@cs.washington.edu}
\icmlcorrespondingauthor{Zhao Song}{zhaos@ias.edu}
\icmlcorrespondingauthor{Sewoong Oh}{sewoong@cs.washington.edu}

\icmlkeywords{Machine Learning, ICML}

\vskip 0.3in
]

\fi




\ifdefined\isarxivversion
\begin{titlepage}
  \maketitle
  \begin{abstract}


In modern supervised learning, there are a large number of tasks, but many of them are associated with only a small amount of labelled data. These include data from medical image processing and robotic interaction. 
Even though each individual task cannot be meaningfully trained in isolation, 
 one seeks to meta-learn across the tasks from past experiences by exploiting some similarities. 
We study a  fundamental question of interest: 
When can abundant tasks with small data compensate for lack of tasks with big data? 
 We focus on a canonical scenario where each task is drawn from a mixture of $k$ linear regressions, 
 and identify sufficient conditions for such a graceful exchange to hold; The total number of examples necessary with only small data tasks scales similarly as when big data tasks are available.
 To this end, we introduce a novel spectral approach and 
 show that we 
 can efficiently utilize small data tasks with the help of $\tilde\Omega(k^{3/2})$ medium data tasks each with $\tilde\Omega(k^{1/2})$ examples.


  \end{abstract}
  \thispagestyle{empty}
\end{titlepage}

\else

\begin{abstract}

\end{abstract}

\fi

\section{Introduction}
\label{sec:intro} 

Recent advances in machine learning highlight successes on a small set  of tasks where a large number of labeled examples have been collected and exploited. These include image classification with 1.2 million labeled examples~\cite{deng2009imagenet} and French-English machine translation with 40 million paired sentences~\cite{bojar2014findings}. For common tasks, however, collecting clean labels is costly, as they require human expertise (as in medical imaging) or  physical interactions (as in robotics), for example. Thus collected real-world datasets follow a long-tailed distribution, in which a dominant set of tasks only have a small number of training examples~\cite{wang2017learning}. 

Inspired by human ingenuity in quickly solving novel problems by leveraging prior experience,  {\em meta-learning}  approaches aim to jointly learn from past experience to quickly adapt to new tasks with little available data \cite{Sch87,TP12}. This has had a significant impact in few-shot supervised learning, where each task is associated with only a few training examples. By leveraging structural similarities among those tasks, one can achieve accuracy far greater than what can be achieved for each task in isolation \cite{FAL17,RL16,KZS15,OLL18,triantafillou2019meta,rusu2018meta}. 
The success of such approaches hinges on the following fundamental question: When can we jointly train small data tasks to achieve the accuracy of  large data tasks?




We investigate this trade-off under a canonical scenario where the tasks are linear regressions in $d$-dimensions and the regression parameters are drawn i.i.d.~from a discrete set of a support size $k$. 
Although widely studied, existing literature addresses the scenario  where 
all tasks have the same fixed number of examples. 
We defer formal comparisons to Section~\ref{sec:related}. 

On one extreme, when large training data of sample size $\Omega(d)$ is available, each task can easily be learned in isolation; here, $\Omega(k\log k)$ such tasks are sufficient to learn all $k$ regression parameters. This is illustrated by a solid circle in Figure~\ref{fig:intro}. On the other extreme, when each task has only one example, existing approaches require exponentially many tasks (see Table~\ref{tab:related}). This is illustrated by a solid square.

Several aspects of  few-shot supervised learning makes training linear models  challenging. The number of training examples varies significantly across tasks, all of which are significantly smaller than the dimension of the data $d$. The number of tasks are also limited, which  restricts 
any algorithm with exponential sample complexity. 
An example distribution of 
such heterogeneous tasks is illustrated in Figure~\ref{fig:intro} with a bar graph in blue, where both the solid circle and square are far outside of the regime covered by the typical  distribution of tasks. 


\begin{figure}[ht] 
\begin{center} 
\centerline{\includegraphics[width=.5\columnwidth]{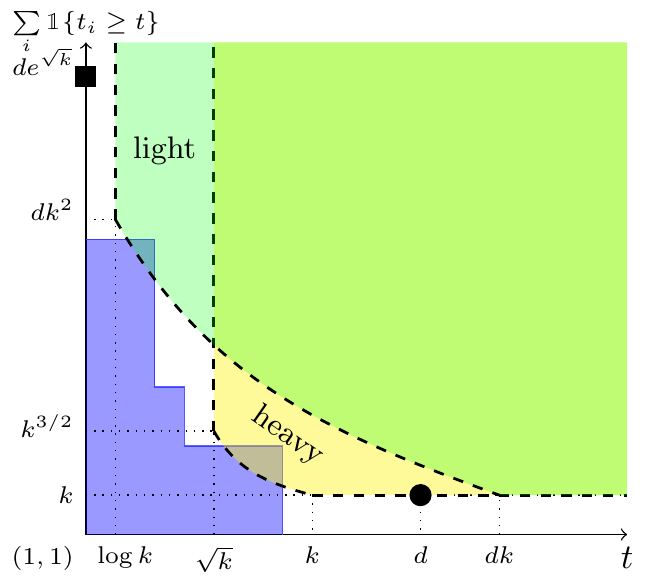}}
\caption{ 
Realistic pool of meta-learning tasks do not include large data tasks (circle) or extremely large number of small data tasks (square), where existing approaches achieve high accuracy. 
The horizontal axis denotes the number of examples $t$ per task, and the vertical axis denotes the number of tasks in the pool that have at least $t$ examples. The proposed approach succeeds whenever any point in the light (green) region, and any point in the heavy (yellow) region are both covered by the blue bar graph, as is in this example. The blue graph summarizes the pool of tasks in hand, illustrating the cumulative count of tasks with more than $t$ examples.
We ignore constants and ${\rm poly}\log$ factors.}
\label{fig:intro} 
\end{center} 
\end{figure} 

In this data scarce regime,  we show that we can still efficiently achieve any desired accuracy in estimating the meta-parameters defining the meta-learning problem. This is shown in the informal version of our main result in Corollary~\ref{cor:main}. As long as we have enough number of {\em light tasks} each with $t_L=\tilde\Omega(1)$ examples, we can achieve any accuracy with the help of a small number of {\em heavy tasks} each with $t_H= \tilde{\Omega}(\sqrt{k})$ examples. We only require the total number of examples that we have jointly across all light tasks to be of order $t_Ln_L=\tilde\Omega(dk^2)$; the number of light tasks $n_L$ and the number of examples per task $t_L$ trade off gracefully. This is illustrated by the green region in Figure~\ref{fig:intro}. Further, we only need a small number of heavy tasks  with $t_Hn_H=\tilde\Omega(k^{3/2})$, shown in the yellow region. As long as the cumulative count of tasks in blue graph intersects with the light (green) and heavy (yellow) regions, we can recover the meta-parameters accurately. 




\begin{corollary}[Special case of Theorem~\ref{thm:main_reg_est}, informal]\hfill
\label{cor:main}
Given two batch of samples, the first batch with
\[
t_L = \tilde\Omega(1)\;,\;t_Ln_L = \tilde\Omega\round{dk^2},
\]
and the second batch with 
\[
t_H = \tilde\Omega\big(\sqrt{k}\big)\;,\;t_Hn_H = \tilde\Omega\big(k^{2}\big),
\]
Algorithm~\ref{alg:estimate} estimates the meta-parameters up to any desired accuracy of $\BigO{1}$ with a high probability, under a certain assumptions on the meta-parameters.
\end{corollary}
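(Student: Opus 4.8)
Since the statement is the specialization of Theorem~\ref{thm:main_reg_est} to a constant target accuracy $\epsilon = \BigO{1}$ and to ``benign'' meta-parameters (balanced mixing weights $p_l=\Theta(1/k)$, constant pairwise separation of the $\beta_l$, bounded $\norm{\beta_l}$ and noise variance), the plan is to exhibit the two-stage spectral estimator underlying Algorithm~\ref{alg:estimate} and to bound the error contributed by each stage; the $\tilde\Omega$'s then absorb the $\mathrm{poly}\log$ and $\mathrm{poly}(1/\epsilon)$ factors produced by the concentration arguments. The two stages are: (i) recover the $k$-dimensional subspace $U=\mathrm{span}\{\beta_l\}$ from the light batch, and (ii) project the heavy batch into this subspace and separate the $k$ components there by a moment/spectral decomposition, then lift back.

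\textbf{Stage 1 (subspace from light tasks).} For a light task with regression vector $\beta$ and examples $(x_a,y_a)_{a\le t_L}$ with isotropic Gaussian covariates, $\E{y_a x_a}=\beta$, so for $a\neq b$ the product $(y_ax_a)(y_bx_b)^{\top}$ is an unbiased estimate of $\beta\beta^{\top}$; symmetrizing and averaging over distinct pairs within each task and then over the $n_L$ light tasks gives an estimator $\widehat M_2$ of $M_2:=\sum_l p_l\beta_l\beta_l^{\top}$. Because the summands are rank-one matrices built from products of two Gaussians (heavy-tailed in operator norm), I would truncate the products at a $\mathrm{poly}\log$ threshold and apply a matrix Bernstein bound to get $\norm{\widehat M_2-M_2}_{\mathrm{op}}=\widetilde{\mathcal{O}}\big(\sqrt{d/(t_Ln_L)}\,\big)$. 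Since $p_l=\Theta(1/k)$ forces $\lambda_k(M_2)=\Omega(1/k)$, a Davis--Kahan/Wedin bound converts this into $\norm{\widehat U\widehat U^{\top}-UU^{\top}}_{\mathrm{op}}\le \epsilon'$ for the top-$k$ eigenspace $\widehat U$ of $\widehat M_2$, provided $\sqrt{d/(t_Ln_L)}\lesssim \epsilon'/k$, i.e.\ $t_Ln_L=\tilde\Omega(dk^2)$; the condition $t_L=\tilde\Omega(1)$ ensures the within-task debiasing is well defined and the truncated products concentrate.

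\textbf{Stage 2 (separate the components inside $\widehat U$, using heavy tasks).} Projecting each heavy task's data by $z_a=\widehat U^{\top}x_a\in\R^k$ reduces the problem to a $k$-dimensional mixed linear regression with parameters $\bar\beta_l=\widehat U^{\top}\beta_l$; since $\widehat U$ is only $\epsilon'$-accurate, every downstream statistic is perturbed by $\BigO{\epsilon'}$, which must be tracked. From the heavy tasks I would form low-dimensional moment statistics in $\R^k$ — a refined estimate of $\bar M_2=\sum_l p_l\bar\beta_l\bar\beta_l^{\top}$ together with a third-order / cross-task-inner-product statistic — exploiting the within-task correlations among the $t_H$ examples to drive down the variance, and then run a whitening-plus-Jennrich-type simultaneous-diagonalization step to recover $\{(\widehat p_l,\widehat{\bar\beta}_l)\}_{l\le k}$, finally setting $\widehat\beta_l=\widehat U\widehat{\bar\beta}_l$. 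The conditioning of this decomposition is controlled by $p_{\min}=\Theta(1/k)$ and by the $\Theta(1/\sqrt k)$ scale of the singular values of $\bar M_2$, so the moment statistics must be accurate to $\mathrm{poly}(1/k)$; a U-statistic variance computation then shows this needs $t_Hn_H=\tilde\Omega(k^{2})$ effective samples and, crucially, $t_H=\tilde\Omega(\sqrt k)$ per task so the per-task contribution self-averages below the $\mathrm{poly}(1/k)$ signal scale (hence $n_H=\tilde\Omega(k^{3/2})$ tasks). An optional last step re-labels the light-task data using $\{\widehat\beta_l\}$ and runs per-cluster least squares to polish each estimate, costing only an extra $\tilde\Omega(dk)$ samples absorbed by the Stage~1 budget.

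\textbf{Main obstacle.} Two points are delicate. First, the heavy-tailed operator-norm concentration of sums of rank-one products of Gaussians: obtaining exactly the linear-in-$d$, quadratic-in-$k$ dependence (rather than $d^2$ or $k^3$) requires careful truncation and a Bernstein/decoupling argument in place of a naive union bound. Second, and more conceptually, is the backward bookkeeping of accuracy through the pipeline: Stage~2 runs on the estimated subspace $\widehat U$, so its decomposition must be shown robust to an $\BigO{\epsilon'}$ subspace error, while at the same time Stage~2 is intrinsically ill-conditioned because $p_{\min}\asymp 1/k$ and the relevant singular values are $\asymp 1/\sqrt k$; reconciling ``how accurate must Stage~1 be'' with ``how many heavy samples does Stage~2 then consume'' is precisely what pins down the stated $dk^2$ versus $k^{2}$ (and $\sqrt k$, $k^{3/2}$) trade-off, and is the crux of the argument.
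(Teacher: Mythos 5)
Your Stage~1 (subspace estimation from within-task debiased outer products, truncated matrix Bernstein, Wedin) is essentially the paper's Algorithm~\ref{alg:subspace_estimation} and Lemma~\ref{lem:projection}, including the key point that $t_L\geq2$ suffices. Your Stage~2, however, is a genuinely different route, and it is the route the paper deliberately \emph{rejects}. You propose whitening plus a Jennrich-type simultaneous-diagonalization of low-order moments in the projected space; the paper instead performs distance-based \emph{clustering} of the heavy tasks (Algorithm~\ref{alg:clustering}, inspired by Vempala--Wang), using a median-of-estimates construction of the pairwise quantities $(\hat\beta_i-\hat\beta_j)^\top UU^\top(\hat\beta_i-\hat\beta_j)$ borrowed from the sublinear property-testing line of work, followed by a separate likelihood-based \emph{classification} of light tasks (Algorithm~\ref{alg:classification}). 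Table~\ref{tab:related} and the surrounding discussion explicitly note that tensor/moment-decomposition methods in this regime give guarantees that either blow up with $1/\sigma_k$ of a moment matrix or are exponential in $k$; replacing the clustering step with a Jennrich-type step would re-introduce exactly the conditioning dependence the paper is designed to avoid, and it is not clear your approach achieves $t_Hn_H=\tilde\Omega(k^2)$.

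Two more concrete issues. First, your justification for the $t_H=\tilde\Omega(\sqrt k)$ requirement (``self-averaging of the per-task moment contribution below the $\mathrm{poly}(1/k)$ signal scale'') is not the paper's reason and does not obviously hold for a moment/tensor estimator: in the paper, $\sqrt k$ arises as a hard lower bound for the two-sample testing problem ``$\beta_1=\beta_2$ versus $\|\beta_1-\beta_2\|^2\gg0$'' that underlies the distance-based clustering, citing the earlier property-testing lower bounds. A moment-tensor estimator is a sum over tasks and would not by itself force $t_H=\Omega(\sqrt k)$. Second, you describe the last step as ``optional polishing''; in the paper it is Step~3, an essential classification phase. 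The clustering step only produces rough estimates $\tilde\w_i$ accurate to $\BigO{\Delta}$; the refined $\eps$-accuracy of Theorem~\ref{thm:main_reg_est} is achieved only after likelihood-based classification of light tasks with $t_{L2}=\Omega(\Delta^{-4}\log(kd))$ examples (Lemma~\ref{lem:refined-est}), followed by per-cluster least squares. This three-phase separation of roles is what lets the heavy budget stay at $t_Hn_H=\tilde\Omega(k^2)$ while still reaching arbitrary constant accuracy, and it is precisely the structural point your two-stage sketch omits.
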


We design a novel spectral approach inspired by \cite{vempala2004spectral} that first learns a subspace using the light tasks, and then clusters the heavy tasks in the projected space. To get the desired tight bound on the sample complexity, we improve upon a perturbation bound from \cite{li2018learning}, and borrow techniques from recent advances in property testing in \cite{kong2019sublinear}.


\section{Problem formulation and notations} 
\label{sec:model} 

There are two perspectives
on approaching meta-learning: 
 optimization based 
 \cite{LZC17,BHT18,ZWL18,ZSK19,RFK19}, and probabilistic \cite{grant2018recasting,FXL18,KYD18,HSP18}. 
 Our approach is motivated by the probabilistic view and we present a brief preliminary in Section~\ref{sec:metalearning}. 
In Section~\ref{sec:linear}, we  present a simple but  
 canonical scenario where the tasks are linear regressions, which is the focus of this paper. 

\subsection{Review of probabilistic view on meta-learning}
\label{sec:metalearning}

A standard meta-training for few-shot supervised learning assumes that  
we are given a collection of $n$ meta-training tasks $\{{\cal T}_i\}_{i=1}^n$  drawn from some distribution $\P\round{{\cal T}}$.  
Each task is associated with a  dataset of size $t_i$, collectively denoted as a {\em meta-training dataset} 
${\cal D}_{\rm  meta\text{-} train}=\curly{ \{(\x_{i,j},y_{i,j})\in \Rd{d} \times \Rd{}\}_{j\in[t_i]} }_{i\in[n]}$. 
Exploiting  some structural similarities 
in $\P{({\cal T})}$, the goal is to train 
a model for a new task ${\cal T}^{\rm new}$, coming from $\P\round{{\cal T}}$, from a small  amount of {\em training dataset} ${\cal D}=\curly{(\x^{\rm new}_j,y^{\rm new}_j)}_{j\in[\tau]}$.

Each  task ${\cal T}_i$ is associated with 
a {\em model parameter} $\phi_i$, where the meta-training data is independently drawn from: $(\x_{i,j},y_{i,j}) \sim {\mathbb P}_{\phi_i}(y | \x) {\mathbb P}(\x)$ for all $j\in[t_i]$. 
The prior distribution of the 
tasks, and hence the model parameters, is 
fully characterized by a 
{\em meta-parameter} $\theta$ such that $\phi_i \sim {\mathbb P}_{\theta}(\phi)$.

Following the definition from \cite{grant2018recasting}, the {\em meta-learning problem} is defined as estimating the most likely meta-parameter given meta-training data by solving 
\begin{eqnarray}
    \theta^* \;\; \in \;\; \argmax_\theta \;\;\log \,\P{ ( \theta\,|\, {\cal D}_{\rm meta\text{-}data} )}\;,
    \label{eq:metalearning}
\end{eqnarray}
which is a special case of empirical Bayes methods for learning the prior distribution from data \cite{carlin2010bayes}. 
Once meta-learning is done, the model parameter of a newly arriving task can be 
estimated by a Maximum a Posteriori (MAP) estimator: 
\begin{eqnarray}
   \hat{\phi} \;\in\; \argmax_\phi
   \;\; \log \,\P{( \phi  \,|\, {\cal D}, \theta^*)}\;,
   \label{eq:learn_MAP}
\end{eqnarray}
or a Bayes optimal estimator:  
\begin{eqnarray}
   \hat{\phi} \;\in\; \argmin_\phi
   \;\; {\mathbb E}_{\phi'\sim \P{( \phi'  \,|\, {\cal D}, \theta^*)}} 
   [ \,\ell(\phi,\phi') \, ] \;,
   \label{eq:learn_Bayes}
\end{eqnarray}
for a choice of a loss function $\ell$. 
This estimated parameter is then used for predicting the label of a new data point $\x$ in task ${\cal T}^{\rm new}$ as 
\begin{eqnarray}
    \hat{y}\;\;\in\;\; \argmax_{y} \;\; {\mathbb P}_{\hat\phi}(y|\x) \;.
    \label{eq:predict}
\end{eqnarray}

\paragraph{General notations.} 
We define $\squarebrack{n}\coloneqq\curly{1,\ldots,n}\ \forall\ n\in\Natural$; $\norm{\x}_{p} \coloneqq \round{\sum_{x\in\x}\abs{x}^p}^{1/p}$ as the standard $\ell_p$-norm; and $B_{p,k}(\muvec,r) \coloneqq \curly{\x\in\Rd{k} \mid \norm{\x-\muvec}_{p} = r}$. 
$\Ncal\round{\muvec,\SIGMA}$ denotes the multivariate normal distribution with mean $\muvec\in\Rd{d}$ and covariance $\SIGMA\in\Rd{d\times d}$, and  $\indicator{E}$ denotes the indicator  of an event $E$.


\subsection{Linear regression with a discrete prior} 
\label{sec:linear}
In general, the meta-learning problem of \eqref{eq:metalearning} is computationally intractable and no statistical guarantees are known.  
To investigate the trade-offs involved, we assume  a simple but canonical scenario where the tasks are linear regressions: 
\begin{eqnarray}
     \x_{i,j}\sim\Pcal_{\x}\;,\;\;\;\;\; 
    y_{i,j} = \beta_i^\top \x_{i,j}+\epsilon_{i,j}\;,  
    \label{eq:model}
\end{eqnarray}
for the $i$-th task and $j$-th example. 
Each task is associated with a model parameter $\phi_i=\round{\beta_i\in{\mathbb R}^d, \sigma_i \in {\mathbb R}_+}$. 
The noise $\eps_{i,j}$ is i.i.d.~as~$\eps_{i,j}\sim \Pcal_{\eps_i}$, and $\Pcal_{\eps_i}$ is a centered sub-Gaussian distribution with parameter $\sigma_i^2$. Without loss of generality, we assume that $\Pcal_{\x}$ is an isotropic (i.e. $\E{\x_{i,j}\x_{i,j}^\top} = \I_d$) centered sub-Gaussian distribution. If $\Pcal_{\x}$ is not isotropic, we assume there are large number of $\x_{i,j}$'s for whitening such that $\Pcal_{\x}$ is sufficiently close to isotropic.

We do not make any assumption on the prior of $\phi_i$'s 
other than that they come from a discrete distribution of a support size $k$. Concretely, the meta-parameter 
$\theta=\round{\W\in{\mathbb R}^{d\times k},\; \s\in{\mathbb R}^{k}_+,\; \p\in{\mathbb R}^{k}_+\cap B_{1,k}(\zero,1)}$
defines a discrete prior (which is also known as mixture of  linear experts \cite{CL13}) on $\phi_i$'s, 
where 
$\W=[\w_1,\ldots, \w_k] $ 
are the $k$ candidate model parameters, and  
$\s=[s_1,\ldots,s_k] $ are the $k$ candidate noise parameters. The $i$-th task is randomly chosen from one of the $k$ components from  distribution $\p$, denoted by $z_i\sim{\rm multinomial}(\p)$. The  training data is independently drawn from~\eqref{eq:model} for each $j\in[t_i]$  with $\beta_i=\w_{z_i}$ and $\sigma_i = s_{z_i}$.

 We want to characterize 
 the sample complexity of this meta-learning. This depends on how complex the 
 ground truths prior $\theta$ is. 
 This can be measured by the number of components $k$, the separation between the parameters $\W$, the minimum mixing probability $p_{\rm min}$, and the minimum positive eigen-value $\lambda_{\rm min}$ of the matrix $\sum_{j=1}^{k}p_j\w_j\w_j^\top$.


{\bf Notations.} 
We define $\rho_i \coloneqq \sqrt{s_{z_i}^2+\esqnorm{\w_{z_i}}}$ as the sub-Gaussian norm of a label $y_{i,j}$ in the $i$-th task, and $\rho^2 \coloneqq \max_{i} \rho_i^2$. 
Without loss of generality, we assume $\rho=1$, which can be always  achieved by scaling the meta-parameters appropriately. We also define $p_{\rm min}\coloneqq\min_{j\in[k]}p_j$, and $\Delta\coloneqq \min_{i,j\in\squarebrack{k}, i\neq j}\enorm{\w_i-\w_j}$ and assume $p_{\rm min}, \Delta>0$. $\omega\in\R_+$ is such that two $n\times n$ matrices can be multiplied in $\BigO{n^\omega}$ time.

\section{Algorithm} 
\label{sec:algorithm}

We propose a novel  spectral approach (Algorithm~\ref{alg:estimate}) to solve the meta-learning linear regression,   
 consisting of three sub-algorithms: \textit{subspace estimation}, \textit{clustering}, and \textit{classification}.
These sub-algorithms require different types of tasks, depending on how many labelled examples are available. 
 
Clustering  requires 
{\em heay tasks}, where each task is associated with  many labelled examples, but we need a smaller number of such tasks. 
On the other hand, 
for subspace estimation and classification, {\em light tasks} 
are sufficient, where each task is associated with a few labelled examples. However, we need a large number of such tasks. 
In this section, we present the  intuition behind our algorithm design, and the types of tasks required. Precisely analyzing these requirements is the main contribution of this paper, to be presented in Section~\ref{sec:main}.

\subsection{Intuitions behind the algorithm design}

We give a sketch of the algorithm below.   
Each step of meta-learning is spelled out in full detail in Section~\ref{sec:algorithm_detail}. 
This provides an estimated meta-parameter  $\hat{\theta}=\big(\hat{\W},\; \hat{\s},\; \hat{\p}\big)$.  
When a new task arrives, this can be readily applied to  solve for 
prediction, as defined in Definition~\ref{def:predictor}.

\begin{algorithm}[h]
   \caption{}
   \label{alg:estimate}
\medskip
{\bf Meta-learning}
\begin{enumerate}
    \item {\em Subspace estimation.}~Compute subspace $\U$ which approximates ${\rm span}\curly{\w_1,\ldots, \w_k}$, with singular value decomposition. 
    \item {\em Clustering.}~Project the heavy tasks onto the subspace of $\U$, perform distance-based $k$ clustering, and estimate $\tilde{\w}_i$ for each cluster.
    \item {\em Classification.}~Perform likelihood-based classification of the light tasks using $\tilde{\w}_i$ estimated from the \textit{Clustering} step, and compute the more refined estimates $(\hat{\w}_i, \hat{s}_i, \hat{p}_i)$ of $(\w_i, s_i, p_i)$ for $i\in [k]$.
\end{enumerate}

{\bf Prediction}
\begin{enumerate}
    \item[4.] {\em Prediction.}~Perform MAP or Bayes optimal prediction using the estimated meta-parameter as a prior.
\end{enumerate}
\end{algorithm}

{\bf Subspace estimation.}  
The subspace spanned by the regression vectors, ${\rm span}\{\w_1,\ldots, \w_k\}$, can be easily estimated using data from the (possibly) light tasks with only $t_i \ge 2$. Using any two independent examples from the same task $(\x_{i,1}, y_{i,1}), (\x_{i,2}, y_{i,2})$, it holds that $\E{y_{i,1}y_{i,2}\x_{i,1}\x_{i,2}^\top} = \sum_{j=1}^k p_j \w_j\w_j^\top$. 
With a total of $\Omega(d \log d)$ such examples, the matrix $\sum_{j=1}^k p_j \w_j\w_j^\top$ can be accurately estimated under spectral norm, and so is the column space ${\rm span}\{\w_1,\ldots, \w_k\}$. We call this step \textit{subspace estimation}.

{\bf Clustering.}
Given an accurate estimation of the subspace ${\rm span}\{\w_1,\ldots, \w_k\}$, we can reduce the problem from a $d$-dimensional to a $k$-dimensional regression problem by  projecting $\x$ onto the subspace of $\U$. Tasks with $t_i=\Omega(k)$ examples can be individually trained as the unknown parameter is now in ${\mathbb R}^k$.
The fundamental question we address is: \textit{What can we do when $t_i=o(k)$?} 
We propose clustering such light tasks based on their estimates of the regression vector $\beta_i$'s, and jointly solve a single regression problem for each cluster.
 
To this end, we borrow techniques from recent advances in property estimation for linear regression. 
Recently, in the contextual bandit setting, \citet{kong2019sublinear} proposed an estimator for the correlation between the linear regressors between a pair of datasets. Concretely, given two datasets  $\{\x_{1,j}, y_{1,j}\}_{j\in [t]}$ and $\{\x_{2,j}, y_{2,j}\}_{j\in [t]}$ whose true (unknown) regression vectors are $\beta_1$ and $\beta_2$, one can estimate $\esqnorm{\beta_1}$, $\esqnorm{\beta_2}$ and $\beta_1^\top \beta_2$ accurately with $t = \Ocal\big(\sqrt{d}\big)$. We use this technique to estimate  $\esqnorm{\beta_{i_2}-\beta_{i_2}}$, whose value can be used to check if the two tasks are in the same clusters. We cluster the  tasks with $t_i=\Omega\big(\sqrt{k}\big)$ into $k$ disjoint clusters. We call this step \textit{clustering}.

After clustering, resulting estimated $\tilde{\w}_i$'s have two sources of error: the error in the subspace estimation, and the error in the parameter estimation for each cluster. If we cluster more heavy tasks, we can reduce the second error but not the first.
We could increase the samples used in subspace estimation, but there is a more sample efficient way: classification. 
 
{\bf Classification.}
We start the classification step, once each cluster has enough (i.e.~$\Omega(k)$) datapoints to obtain a rough estimation of their corresponding regression vector. In this regime, we have  $\BigO{1}$ error in the estimated $\tilde{\w}_i$'s. This is sufficient for us to add more datapoints to grow each of the clusters. When enough data points are accumulated (i.e.~$\tilde\Omega(d)$ for each cluster), then we can achieve any desired accuracy with this larger set of accurately classified tasks. This separation of the roles of the three sub-algorithms is critical in achieving the tightest sample complexity. 

In contrast to the necessary condition of  $t_i=\Omega\big(\sqrt{k}\big)$ for the clustering step, we show that one can accurately determine which cluster a new task belongs to with only $t_i = \Omega(\log k)$ examples once we have a rough initial estimation $\tilde\W$ of the parameter $\W$. 
We grow the clusters by adding tasks with 
a logarithmic number of examples until we have enough data points per cluster to achieve the desired  accuracy. We call this step \textit{classification}. This concludes our algorithm for the parameter estimation (i.e. meta-learning) phase.

\section{Main results}
\label{sec:main}

Suppose we have 
$n_H$ heavy tasks each with at least $t_H$ training examples, and $n_L$ light tasks
each with at least $t_L$ training examples. 
If heavy tasks are  data rich ($t_H\gg d$), we can learn $\W$ straightforwardly from a relatively small number, i.e.~$n_H = \Omega(k \log k)$.
If the light tasks are data rich ($t_L \gg k$), they can be straightforwardly clustered on the projected $k$-dimensional subspace.  
We therefore focus on the  following challenging regime of data scarcity.

\begin{asmp}
    \label{asmp}
    The heavy dataset ${\cal D}_{H}$ consists of 
    $n_H$ heavy tasks, each with at least $t_H$ samples. 
    The first light dataset ${\cal D}_{L1}$
    consists of $n_{L1}$ light tasks, 
    each with at least $t_{L1}$ samples. 
    The second light dataset ${\cal D}_
    {L2}$
    consists of 
    $n_{L2}$ tasks,
    each with at least $t_{L2}$ samples.
    We assume $t_{L1}, t_{L2}<k, $ and $t_H<d$. 
\end{asmp}

To give  more fine grained analyses on the 
sufficient conditions, 
we assume two types of light tasks are available with potentially differing sizes (Remark~\ref{rem:twotype}). 
In meta-learning step in Algorithm~\ref{alg:estimate},  subspace estimation uses ${\cal D}_{L1}$, 
clustering uses ${\cal D}_H$, and 
classification uses ${\cal D}_{L2}$. 
We provide proofs of the main results  in Appendices~\ref{sec:proof_est},  \ref{sec:proof_pred}, and 
\ref{sec:lb-predict-proof}.


\subsection{Meta-learning}

We characterize a sufficient condition to
achieve a target accuracy $\epsilon$ in estimating the meta-parameters $\theta=(\W,\s,\p)$. 

\begin{thm}[Meta-learning]\label{thm:main_reg_est}
For any failure probability $\delta \in (0,1)$, and accuracy $\epsilon \in (0,1)$, given three batches of samples under Assumption~\ref{asmp}, meta-learning step of Algorithm~\ref{alg:estimate} estimates the meta-parameters with accuracy 
\begin{align*}
\enorm{\hat{\w}_i-\w_i} &\;\le\; \eps s_i\; , \\ 
\abs{\hat{s}_i^2-s_i^2}&\;\le\;  \frac{\eps}{\sqrt{d}}s_i^2\; ,\quad\text{and}\\
\abs{\hat{p}_i-p_i} &\;\le\; \eps \sqrt{\frac{t_{L2}}{d}}\;p_i\;, 
\end{align*}
with probability at least $1-\delta$, if the following holds. The numbers of tasks satisfy 
\begin{align*}
n_{L1} &= \Omega \round{\frac{d\,\log^3\round{\frac{d}{p_{\rm min}\Delta\delta}}}{t_{L1}} \cdot \min \curly{ \Delta^{-6} p_{\min}^{-2} , \Delta^{-2} \lambda_{\min}^{-2} }  }\;,\\
n_{H} & =\; \Omega\round{\frac{  \, \log(k/\delta)}{t_H\; p_{\rm min}\Delta^2}\round{k+\Delta^{-2}}}\;,\\
n_{L2} & =\; \Omega\round{\frac{  \,d\log^2(k/\delta)}{t_{L2}p_{\rm min}\eps^2 }}\;,
\end{align*}
and the numbers of samples  per task  satisfy 
$t_{L1}\ge 2$, 
 $t_{L2} = \Omega\round{\log\round{{kd}/({p_{\rm min}\delta\eps)}}/\Delta^4}$, 
 and $t_H = \Omega\round{\Delta^{-2}\sqrt{k}\log\round{{k}/({p_{\rm min}\Delta\delta})}}$, where $\lambda_{\min}$ is the smallest non-zero eigen value of $\M\coloneqq\sum_{j=1}^k p_j \w_j\w_j^\top \in \mathbb{R}^{d \times d}$. 
\end{thm}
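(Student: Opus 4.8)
The plan is to bound the error accumulated in each of the three sub-algorithms of Algorithm~\ref{alg:estimate} — subspace estimation on $\Dcal_{L1}$, clustering on $\Dcal_H$, classification and refinement on $\Dcal_{L2}$ — and to take a union bound over the corresponding failure events at the end.

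\emph{Step 1: subspace estimation.} From each light task in $\Dcal_{L1}$ with $t_{L1}\ge 2$ I would form two independent plug-in estimates $\widehat\beta_i^{(1)},\widehat\beta_i^{(2)}$ of $\beta_i$ and average the symmetrized rank-one products to get an estimator $\widehat\M$ with $\myE[\widehat\M]=\M=\sum_j p_j\w_j\w_j^\top$. Since the entries of $\widehat\M$ are degree-four polynomials in sub-Gaussian variables, I would apply a truncated matrix Bernstein inequality to obtain $\opnorm{\widehat\M-\M}\le \eps_S$ once the total number of samples $t_{L1}n_{L1}$ is $\widetilde\Omega(d/\eps_S^{2})$. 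Then I would convert the spectral error into a subspace error by two routes: (i) Davis--Kahan/Wedin with eigengap $\lambda_{\min}$, giving $\sin\angle(\U,\mathrm{span}\{\w_j\})\lesssim \opnorm{\widehat\M-\M}/\lambda_{\min}$ and hence the $\Delta^{-2}\lambda_{\min}^{-2}$ branch of $n_{L1}$; and (ii) a sharpened perturbation bound improving on \citet{li2018learning}, exploiting the structure $\M=\sum_j p_j\w_j\w_j^\top$ to argue directly that a unit vector almost orthogonal to $\U$ must be almost orthogonal to every $\w_j$, giving the $\Delta^{-6}p_{\min}^{-2}$ branch; the minimum of the two is what appears in the statement. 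The conclusion of this step is $\enorm{(\I-\U\U^\top)\w_i}\le c_0\,s_i$ for a small absolute constant $c_0$.

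\emph{Step 2: clustering.} Projecting the heavy tasks onto $\U$ reduces the problem to a $k$-dimensional mixed regression whose components are still $\Omega(\Delta)$-separated. For each pair $(i_1,i_2)$ of heavy tasks I would run the correlation estimator of \citet{kong2019sublinear} in the projected space, which with $t_H=\widetilde\Omega(\sqrt{k})$ samples estimates $\esqnorm{\beta_{i_1}-\beta_{i_2}}$ to additive error $\Delta^2/10$ with failure probability $\delta/(3n_H^2)$; a union bound over all pairs then makes thresholding the estimated pairwise distances at $\Delta^2/2$ recover the true partition exactly, provided each of the $k$ components is represented — which holds with probability $1-\delta/3$ by a coupon-collector argument once $n_H=\widetilde\Omega(\log(k/\delta)/p_{\min})$. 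Pooling the $\widetilde\Omega(k+\Delta^{-2})$ projected samples in each recovered group and running ordinary least squares (now in dimension $k$), then adding back the Step-1 subspace error, yields $\enorm{\widetilde\w_i-\w_i}\le c_1 s_i$ for a small absolute constant $c_1$.

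\emph{Step 3: classification and refinement.} Given the $O(1)$-accurate $\widetilde\W$, I would classify each task in $\Dcal_{L2}$ by $\argmin_j \sum_\ell (y_{i,\ell}-\widetilde\w_j^\top\x_{i,\ell})^2$. A Bernstein bound on the gap of residual sums of squares between the true and any wrong component, using the separation $\Delta$ and the $O(1)$ accuracy of $\widetilde\W$, shows the misclassification probability of a fixed task is at most $\delta p_{\min}/(3n_{L2})$ once $t_{L2}=\widetilde\Omega(\Delta^{-4}\log(kd/(p_{\min}\delta\eps)))$; a union bound gives perfect classification. Conditioned on this, cluster $i$ collects $N_i=\widetilde\Omega(d\log^2(k/\delta)/\eps^2)$ correctly labelled examples, and the three claimed bounds follow from standard estimates on isotropic sub-Gaussian designs: OLS gives $\enorm{\widehat\w_i-\w_i}\le\eps s_i$; concentration of the residual second moment over $N_i\propto d$ samples gives the $\sqrt d$ improvement $\abs{\widehat s_i^2-s_i^2}\le (\eps/\sqrt d)s_i^2$; and a binomial tail on the cluster counts among the $n_{L2}$ tasks gives $\abs{\widehat p_i-p_i}\le \eps\sqrt{t_{L2}/d}\,p_i$. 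Combining the failure probabilities from the three steps and rescaling constants finishes the proof.

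\emph{Main obstacle.} The technically hardest part is Step~1: the naive Davis--Kahan bound loses a factor of $\lambda_{\min}$, which can be far smaller than $p_{\min}\Delta^2$, so obtaining the $\Delta^{-6}p_{\min}^{-2}$ branch requires genuinely sharpening the perturbation bound of \citet{li2018learning}, and controlling the heavy-tailed fourth-moment matrix $\widehat\M$ in spectral norm with only $\widetilde\Omega(d)$ total samples (rather than $\widetilde\Omega(d\cdot\mathrm{poly}(k))$) needs a careful truncation argument. A secondary subtlety, which is exactly the ``separation of roles'' the paper emphasizes, is verifying that the $O(1)$ errors produced in Steps~1--2 are small enough \emph{as absolute constants} to drive the exponential concentration underlying the classification step in Step~3.
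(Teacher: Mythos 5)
Your proposal follows essentially the same three-step decomposition as the paper's proof: invoke the subspace-estimation lemma (corresponding to Lemma~\ref{lem:projection}, with the sharpened perturbation bound of Lemma~\ref{lem:lemma_5_in_ll18}), then the clustering lemma (Lemma~\ref{lem:distance}), then the classification/refinement lemma (Lemma~\ref{lem:refined-est}), chaining the error tolerances so that each step supplies the accuracy the next step needs, and union-bounding the failure events. The identification of the two routes in Step~1 (Davis--Kahan with $\lambda_{\min}$ versus a direct argument giving $\Delta^{-6}p_{\min}^{-2}$) and the characterization of the main obstacle both match the paper's reasoning.

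Two details you should make explicit. First, in Step~2 you assert that the pairwise distance estimator achieves additive error $\Delta^2/10$ with failure probability $\delta/(3n_H^2)$ from $t_H=\widetilde\Omega(\sqrt k)$ samples, but the plain correlation estimator from \citet{kong2019sublinear} has only bounded variance, not sub-Gaussian tails; the paper obtains the exponential tail by splitting the heavy task's samples into $L=\Theta(\log(n_H/\delta))$ batches, computing an estimate on each, and taking the median (this is the median step in Algorithm~\ref{alg:clustering}). Without this median-of-estimates boost, the high-probability claim over all $\binom{n_H}{2}$ pairs is unjustified. Second, the classification rule $\argmin_j\sum_\ell(y_{i,\ell}-\widetilde\w_j^\top\x_{i,\ell})^2$ in Step~3 ignores the estimated noise levels; with heteroscedastic $s_i$'s it systematically favors clusters with larger $\widetilde r_j$. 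The paper's rule $\argmin_\ell\frac{1}{2\widetilde r_\ell^2}\sum_j(y-\x^\top\widetilde\w_\ell)^2+t_{L2}\log\widetilde r_\ell$ includes exactly the variance-normalization and $\log\widetilde r_\ell$ penalty needed to make the Bernstein gap argument in Lemma~\ref{lem:growing-label} go through. A minor point: the targets coming out of Steps~1--2 should be at the $\Delta$-scale (e.g.\ $\enorm{(\I-\U\U^\top)\w_i}\le\Delta/10$ and $\enorm{\widetilde\w_i-\w_i}\le\Delta/10$), not $c_0 s_i$, since $\Delta$ can be much smaller than $s_i$; it is the separation that drives the exponential concentration in the classification step.
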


In the following remarks, we explain each of the conditions.  

\begin{remark}[Dependency in ${\cal D}
_{L1}$]
The total number of samples used in subspace estimation is  $n_{L1} t_{L1}$. The sufficient condition scales linearly in $d$ which matches the  information theoretically necessary condition up to logarithmic factors. 
If the  matrix $\M$ is well conditioned, for example when $\w_i$'s are all orthogonal to each other, subspace estimation is easy, and $n_{L1}t_{L1}$ scales as $\Delta^{-2} \lambda_{\min}^{-2}$. Otherwise, the problem gets harder, and we need $\Delta^{-6} p_{\min}^{-2}$ samples.
Note that in this regime, tensor decomposition approaches often fails to provide any meaningful guarantee (see Table~\ref{tab:related}). In proving this result, 
we improve upon a matrix perturbation bound  in 
\cite{li2018learning} to 
shave off a $k^6$ factor on $n_{L1}$
 (see Lemma~\ref{lem:lemma_5_in_ll18}). 
 
 
\end{remark}

\begin{remark}[Dependency in ${\cal D}_H$]
The 
clustering step 
requires 
$t_H =\tilde{\Omega}(\sqrt{k})$, which is necessary for distance-based 
  clustering approaches such as 
  single-linkage clustering. 
  From \cite{kong2018estimating, kong2019sublinear}   we know that it is necessary (and sufficient) 
  to have $t = \Theta(\sqrt{k})$, 
even for a  simpler 
 testing problem between $\beta_1 = \beta_2$ or $\esqnorm{\beta_1-\beta_2}\gg 0$, from 
 two labelled datasets  
 with two linear models $\beta_1$ and $\beta_2$. 
 
 Our clustering step is inspired by \cite{vempala2004spectral} on clustering under Gaussian mixture models, where  
 the algorithm succeeds if  
 $t_H =\tilde\Omega( \Delta^{-2} \sqrt{k} )$. 
 Although a straightforward adaptation  fails, 
 we match the sufficient condition. 
 
We only require the  number of heavy samples $n_Ht_H$ to be $\tilde\Omega\round{{k}/{p_{\rm min}}}$  up to logarithmic factors,  
 which is information theoretically necessary.
\end{remark}

\begin{remark}[Gain of using two types of light tasks]
\label{rem:twotype}
To get the tightest guarantee, it is necessary to use a different set of light tasks to perform the final estimation step. 
First notice that the first light dataset $\Dcal_{L1}$ does not cover the second light dataset since we need $t_{L2}\ge \Omega(\log (kd))$ which does not need to hold for the first dataset $\Dcal_{L1}$. On the other hand, the second light dataset does not cover the first light dataset in the setting where $\Delta$ or $p_{\rm min}$ is very small. 

\end{remark}

\begin{remark}[Dependency in ${\cal D}_{L2}$]
Classification and prediction use the same routine to classify the given task. 
 Hence, the $\log k$ requirement in $t_{L2}$ is tight, as it matches  
 our lower bound in  Proposition~\ref{thm:lb-predict}. 
 The extra terms in the $log$ factor come from the union bound over all $n_{L2}$ tasks to make sure all the tasks are correctly classified. It is possible to replace it by $\log(1/\eps)$ by showing that $\eps$ fraction of incorrectly classified tasks does not change the estimation by more than $\eps$. We only require $n_{L2}t_{L2} = \Omega(d/p_{\rm min})$ up to logarithmic factors, 
which is information  theoretically necessary. 
\end{remark}

\subsection{Prediction}

Given an estimated meta-parameter 
$\hat\theta=(\hat\W,\hat\s,\hat\p)$, and a new dataset ${\cal D}=\{(\x^{\rm new}_j,y^{\rm new}_j)\}_{j\in[\tau]}$, 
we make predictions on the new task with unknown parameters using two estimators: MAP estimator and Bayes optimal estimator.

\begin{definition}
\label{def:predictor}
Define the maximum a posterior (\textit{MAP}) estimator as
\[
\hat{\beta}_{\rm MAP}({\cal D}) \coloneqq \hat{\w}_{\hat{i}}\; ,\quad\text{where}\quad \hat{i} \coloneqq \argmax_{i\in\squarebrack{k}} \log \hat{L}_i\;, \text{ and }
\]
\begin{eqnarray*}
\hat{L}_{i} &\coloneqq \exp \round{-\sum\limits_{j=1}^\tau\frac{\round{y^{\rm new}_j-\hat\w_i^\top \x_j^{\rm new}}^2}{2\hat{s}_i^2}-\tau\log\hat{s}_i+\log\hat{p}_i}.
\end{eqnarray*}
Define the posterior mean estimator as
\[
\hat{\beta}_{\rm Bayes}({\cal D}) \coloneqq \frac{\sum_{i=1}^k \hat{L}_i\hat{\w}_{i}}{\sum_{i=1}^k \hat{L}_i}.
\]
\end{definition}

If the true prior, $\{(\w_i,  s_i, p_i)\}_{i\in [k]}$, is known. The posterior mean estimator achieves the smallest expected squared $\ell_2$ error, ${\mathbb E}_{{\cal D},\beta^{\rm new}}\squarebrack{{\esqnorm{{\hat{\beta}({\cal D})-\beta^{\rm new}}}}}$. Hence, we refer to it as 
Bayes optimal estimator. The MAP estimator maximizes the probability of exact recovery. 

\begin{thm}[Prediction]\label{thm:main_reg_pred}
Under the hypotheses of Theorem~\ref{thm:main_reg_est} with $\eps\le \min\curly{\Delta/10, \Delta^2\sqrt{d}/50}$, 
the expected prediction errors of both the MAP  and Bayes optimal estimators $\hat\beta({\cal D})$ are bound as
\begin{align}
&
{\mathbb E}\squarebrack{\round{\x^\top \hat{\beta}({\cal D})-y}^2}
\;\le\;  \delta+\round{1+\eps^2}\sum_{i=1}^kp_is_i^2\; ,
\label{eq:pred}
\end{align}
if $\tau\ge \Theta\round{\log (k/\delta)/\Delta^4}$, 
where
the true meta-parameter is 
$\theta=\{(\w_i,s_i,p_i)\}_{i=1}^k$, 
the expectation is over the new task with model parameter $\phi^{\rm new}=(\beta^{\rm new},\sigma^{\rm new}) \sim {\mathbb P}_{\theta} $, training dataset ${\cal D}\sim{\mathbb P}_{\phi^{\rm new}} $, 
and test data $(\x,y)\sim {\mathbb P}_{\phi^{\rm new}}$. 
\end{thm}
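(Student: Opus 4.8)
The plan is to reduce the prediction error to two sources: the event that the predictor mis-identifies the latent component of the new task, and the irreducible noise conditioned on correct identification. First I would condition on the high-probability event $\mathcal{E}$ (probability $\ge 1-\delta$) from Theorem~\ref{thm:main_reg_est} that $\hat\theta$ is accurate, i.e. $\enorm{\hat\w_i-\w_i}\le\eps s_i$, $|\hat s_i^2-s_i^2|\le (\eps/\sqrt d)s_i^2$, $|\hat p_i-p_i|\le\eps\sqrt{t_{L2}/d}\,p_i$, after suitably relabeling the estimated components to match the ground truth. On the complement, I bound the contribution crudely; since the squared error $(\x^\top\hat\beta-y)^2$ has expectation $O(1)$ (everything is sub-Gaussian with $\rho=1$), this contributes at most $O(\delta)$ to \eqref{eq:pred}, absorbed into the stated $\delta$ term (up to constants, or after rescaling $\delta$).

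Next, working on $\mathcal{E}$, let $z^{\rm new}$ be the true latent label of the new task, so $\beta^{\rm new}=\w_{z^{\rm new}}$, $\sigma^{\rm new}=s_{z^{\rm new}}$. The key lemma is a classification-correctness statement: with $\tau=\Theta(\log(k/\delta)/\Delta^4)$ fresh examples and an $\eps$-accurate prior with $\eps\le\min\{\Delta/10,\Delta^2\sqrt d/50\}$, the MAP index $\hat i$ equals $z^{\rm new}$ except with probability $O(\delta)$ over $\mathcal{D}$. This is exactly the routine the paper says classification and prediction share, so I expect it to be proved once (in the appendix) and invoked here. The argument: for the wrong component $i\ne z^{\rm new}$, the log-likelihood gap $\log\hat L_{z^{\rm new}}-\log\hat L_i$ concentrates around $\tfrac{\tau}{2\hat s_i^2}\enorm{\hat\w_i-\hat\w_{z^{\rm new}}}^2 \gtrsim \tau\Delta^2$ in expectation — using $\enorm{\w_i-\w_{z^{\rm new}}}\ge\Delta$, the $\eps\le\Delta/10$ bound to ensure the estimated centers are still $\Omega(\Delta)$-separated, and the $\eps\le\Delta^2\sqrt d/50$ bound to control the $\tau\log(\hat s_i/\hat s_{z^{\rm new}})$ and $\log(\hat p_i/\hat p_{z^{\rm new}})$ terms — with sub-exponential fluctuations of order $\sqrt{\tau}\cdot(\text{const})$ plus a cross term from the label noise $\epsilon^{\rm new}_j$ of order $\sqrt{\tau}\,\Delta$. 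A Bernstein/sub-exponential tail then gives failure probability $\exp(-c\tau\Delta^4)\le\delta/k$, and a union bound over the $k-1$ wrong components closes it.

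Finally I assemble the bound. On $\mathcal{E}\cap\{\hat i=z^{\rm new}\}$, for the MAP estimator $\hat\beta(\mathcal{D})=\hat\w_{z^{\rm new}}$, so for a fresh test point $(\x,y)$ with $y=\w_{z^{\rm new}}^\top\x+\epsilon$,
\begin{align*}
\mathbb{E}\squarebrack{(\x^\top\hat\beta(\mathcal{D})-y)^2 \mid z^{\rm new}}
&= \enorm{\hat\w_{z^{\rm new}}-\w_{z^{\rm new}}}^2 + s_{z^{\rm new}}^2
\;\le\; (1+\eps^2)s_{z^{\rm new}}^2,
\end{align*}
using isotropy of $\Pcal_\x$ and independence of $\epsilon$. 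Taking expectation over $z^{\rm new}\sim\p$ gives $(1+\eps^2)\sum_i p_i s_i^2$. The mis-classification and $\mathcal{E}^c$ events each contribute $O(\delta)$ since the conditional second moment of $(\x^\top\hat\beta-y)^2$ is $O(1)$ there — here one needs $\hat\beta(\mathcal{D})$ to have bounded norm, which holds on $\mathcal{E}$ and, for the Bayes estimator, always, since it is a convex combination of the $\hat\w_i$. For the Bayes estimator the same chain works because on the good event the posterior weights concentrate overwhelmingly on $z^{\rm new}$ (the same log-likelihood gap shows $\hat L_{z^{\rm new}}/\sum_i\hat L_i \ge 1-\delta/k$), so $\hat\beta_{\rm Bayes}$ is within $O(\delta)$ of $\hat\w_{z^{\rm new}}$ in the relevant sense; alternatively one invokes that the Bayes estimator minimizes expected squared error and compare to the MAP estimator's bound. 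Summing the three pieces yields \eqref{eq:pred}.

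The main obstacle is the classification-correctness lemma under an only-approximately-correct prior: one must verify that the perturbations in $\hat\w_i,\hat s_i,\hat p_i$ (especially the $\hat s_i$ and $\hat p_i$ errors entering multiplicatively through $\tau\log\hat s_i$ and additively through $\log\hat p_i$) do not erode the $\tau\Delta^2$ log-likelihood margin, and to get the tight $\tau=\Theta(\log(k/\delta)/\Delta^4)$ sample bound the concentration of $\sum_j(y^{\rm new}_j-\hat\w_i^\top\x^{\rm new}_j)^2$ must be done carefully as a sub-exponential random variable, tracking that the dominant fluctuation scale is $\Delta^2\sqrt\tau$ so that the Bernstein exponent is $\tau\Delta^4$, not something weaker.
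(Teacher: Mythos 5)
Your proof follows essentially the same route as the paper's: establish that the MAP rule recovers the true latent component $z^{\rm new}$ with high probability (the shared classification argument in Lemma~\ref{lem:growing-label}), then decompose the expected squared prediction error into the good event (contributing $\sum_i p_i(\esqnorm{\hat\w_i-\w_i}+s_i^2)\le(1+\eps^2)\sum_i p_i s_i^2$) and the bad events (contributing $O(\delta)$ since everything has $O(1)$ norm). A few imprecisions are worth flagging. First, your claim that the MAP index equals $z^{\rm new}$ except with probability $O(\delta)$ over ${\cal D}$ is not true for a fixed task whose mixing weight $p_{z^{\rm new}}$ is very small: the $\log(\hat p_h/\hat p_i)$ term in the log-posterior gap contributes $-\log(1/p_h)$, which $\tau\Delta^2$ cannot dominate when $p_h$ is, say, exponentially small. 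The paper's Lemma~\ref{lem:MAP-error} handles this with a case split, using $p_h\ge\delta/k$ (so $\log(1/p_h)\le\log(k/\delta)$ is beaten by $\tau\ge\Theta(\log(k/\delta)/\Delta^4)$) versus $p_h<\delta/k$ (where the combined mass of such rare components is $\le\delta$ and their contribution is bounded crudely by a constant). Your final assembly would still give the right bound, but the intermediate statement needs that caveat. Second, you attribute the control of the $\log(\hat p_i/\hat p_{z^{\rm new}})$ term to the hypothesis $\eps\le\Delta^2\sqrt d/50$; that hypothesis actually controls the noise-variance estimation error (it ensures $(1\pm\Delta^2/50)\hat s_i^2$ sandwiches $s_i^2+\esqnorm{\hat\w_i-\w_i}$, given $|\hat s_i^2-s_i^2|\le(\eps/\sqrt d)s_i^2$), whereas $\log(1/p_h)$ is controlled by the choice of $\tau$ together with the case split above. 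Third, the ``Bayes minimizes expected squared error'' shortcut does not directly apply: $\hat\beta_{\rm Bayes}$ is optimal for the \emph{estimated} prior $\hat\theta$, not the true $\theta$, so its risk under $\theta$ is not automatically below the MAP risk. The paper's Lemma~\ref{lem:Bayes-error} does the direct argument you also sketched, showing $\sum_{i\ne h}\hat L_i/\hat L_h\le\delta$ on the good event so $\hat\beta_{\rm Bayes}$ is within $O(\delta)$ of $\hat\w_h$.
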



Note that the $\sum_{i=1}^k p_is_i^2$ term in \eqref{eq:pred} is due to the noise in $y$, and can not be avoided by any estimator. 
With an accurate meta-learning, we can achieve a prediction error arbitrarily close to this statistical limit, with $\tau=\BigO{\log k}$. 
Although both predictors achieve the same guarantee, Bayes optimal estimator achieves smaller training and test errors in Figure~\ref{fig:pred}, especially in challenging regimes with small data.

\begin{figure}[ht]
    \vskip -0.1in
    \begin{center}
    \subfigure[Training error]{\label{fig:train_err_32}\includegraphics[width=0.48\linewidth]{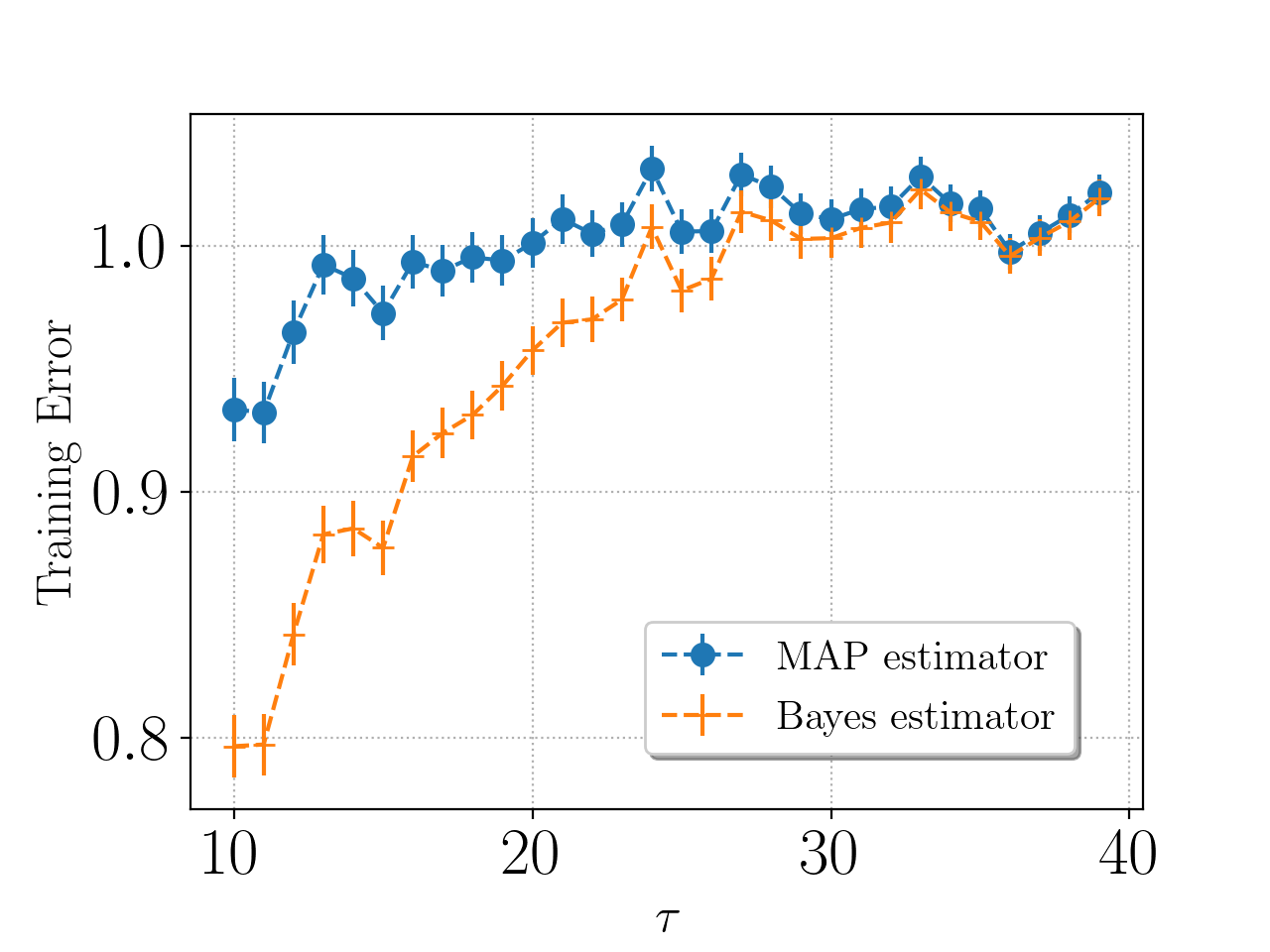}}
    \subfigure[Prediction error]{\label{fig:pred_err_32}\includegraphics[width=0.48\linewidth]{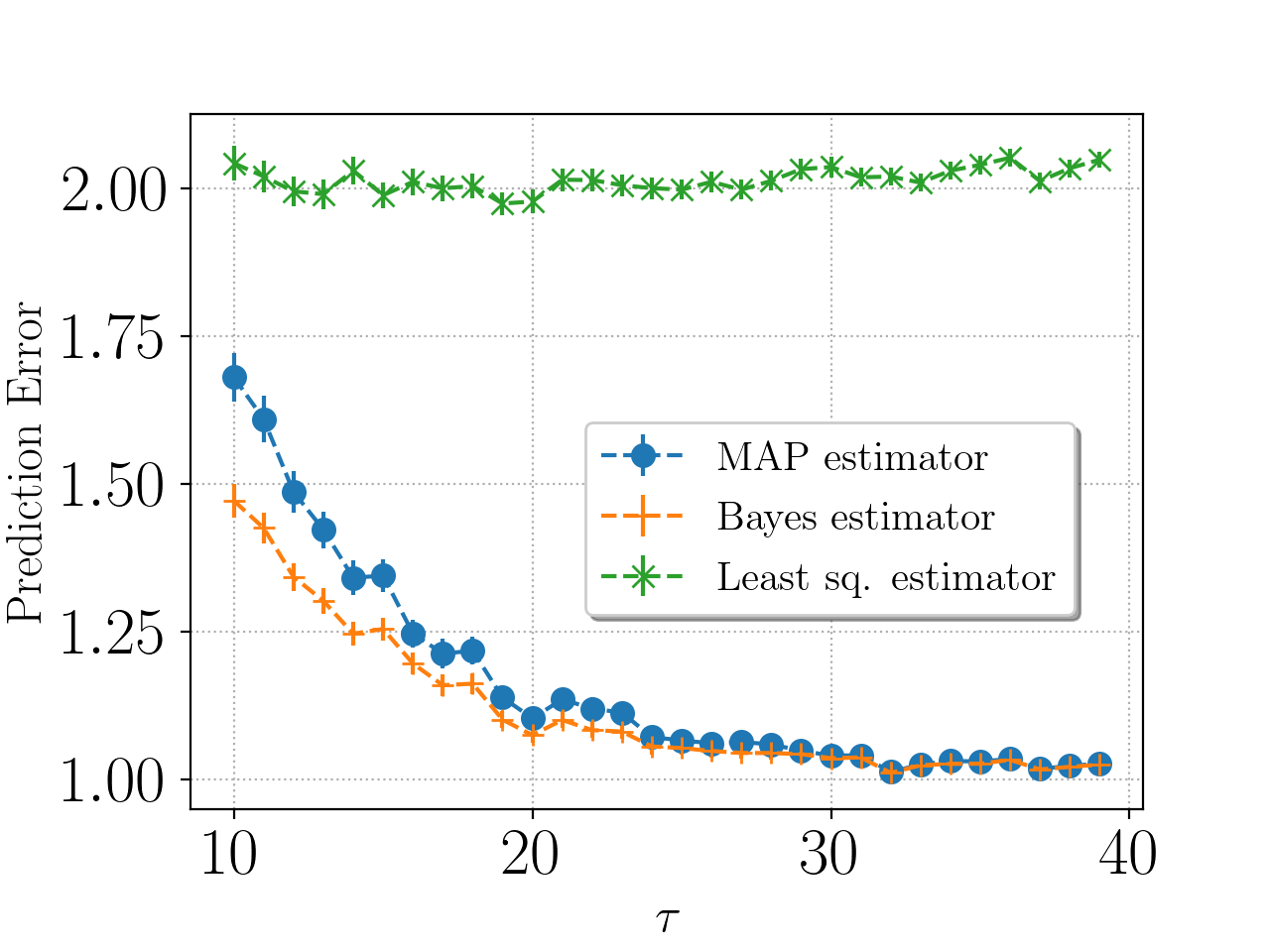}}
    \caption{Bayes optimal estimator achieves smaller errors for an example. Here, $k=32$, $d=256$, $\W^\top\W = \I_k$, $\s=\one_k$, $\p = \one_k/k$, and $\Pcal_\x$ and $\Pcal_\eps$ are standard Gaussian distributions. The parameters were learnt using the Meta-learning part of Algorithm~\ref{alg:estimate} as a continuation of simulations discussed in Appendix~\ref{sec:simulations}, where we provide extensive  experiments confirming our analyses.}
    \label{fig:pred}
    \end{center}
\end{figure}

We show that $\tau= \Omega(\log k)$ training samples are necessary (even if the ground truths meta-parameter $\theta$ is known) to achieve error approaching this statistical limit. Let $ \Theta_{k,\Delta,\sigma}$ denote the set of all meta-parameters with $k$ components, satisfying $\|\w_i-\w_j\|_2 \geq \Delta$ for $i\neq j\in[k]$ and $s_i\leq \sigma$ for all $i\in [k]$. 
The following minimax lower bound shows that 
there exists a threshold scaling as $\BigO{\log k}$ below which no algorithm can achieve the fundamental limit of  $\sigma^2$, which is $\sum_{i=1}^k p_is_i^2$ in this minimax setting.  

\begin{remark}[Lower bound for prediction]\label{thm:lb-predict}
For any $\sigma,\Delta>0$, 
if $\tau =  \round{(1+\Delta^2)/\sigma^2}^{-1}\log(k-1)$, then 
    \begin{eqnarray}
    \inf_{\hat{y}}
    \sup_{\theta\in\Theta_{k,\Delta,\sigma}}
    \E{ \round{\hat{y}({\cal D},\theta)- y}^2} \; = \; \sigma^2 + \Omega\round{\Delta^2} \;,
    \end{eqnarray}
     where the minimization is over all measurable functions of the meta-parameter $\theta$ and the training data ${\cal D}$ of size $\tau$.
    \label{thm:lb}
\end{remark}

\section{Details of the algorithm and the analyses}
\label{sec:algorithm_detail}

We explain and analyze  each step in  Algorithm \ref{alg:estimate}. 
These analyses  imply our main result in meta-learning, which is explicitly written in Appendix~\ref{sec:proof_est}.

\subsection{Subspace estimation} 

In the following, we use  $k$\_SVD$(\cdot, k)$ routine that outputs the top $k$-singular vectors.
As ${\mathbb E}[{\hat\M}] = 
\M :=\sum_{j=1}^k p_j \w_j\w_j^\top$, 
this outputs an estimate of the subspace spanned by the true parameters. 
We show that as long as $t_{L1}\geq 2$, the accuracy only depends on the total 
number of examples, and it is sufficient to have 
$n_{L1}t_{L1}=\tilde{\Omega}(d)$.  

\begin{algorithm}[h]
   \caption{Subspace estimation}
   \label{alg:subspace_estimation}
\begin{algorithmic}
   \STATE {\bfseries Input:} data $\Dcal_{L1} = \curly{(\x_{i,j},y_{i,j})}_{i\in[n_{L1}],j\in[t_{L1}]}$, $k\in\Natural$
   \STATE {\bf compute} for all $i\in[n_{L1}]$ 
   \STATE \hspace{-.5cm} $\hat{\beta}^{(1)}_i \gets \frac{2}{t_{L1}}\sum\limits_{j=1}^{t_{L1}/2}y_{i,j}\x_{i,j}$,\ $\hat{\beta}^{(2)}_i \gets \frac{2}{t_{L1}}\sum\limits_{j=t_{L1}/2+1}^{t_{L1}}y_{i,j}\x_{i,j}$
   \STATE $\hat\M \gets \round{2n_{L1}}^{-1}\sum_{i=1}^{n_{L1}} \round{\hat\beta_i^{(1)}\hat\beta_i^{(2)\top} + \hat\beta_i^{(2)}\hat\beta_i^{(1)\top}}$
   \STATE $\U \gets $ $k$\_SVD$\round{\hat\M,k}$
    \OUTPUT $\U$
\end{algorithmic}
\end{algorithm}

The dependency on the accuracy $\eps$ changes based on the ground truths meta-parameters. 
In an ideal case when $\W$ is an orthonormal matrix (with condition number one), the sample complexity is $\tilde{\Ocal}\round{d/(p_{\rm min}^2 \varepsilon^2) }$. 
For the worst case $\W$, 
it is $\tilde{\Ocal}\round{d/\round{p_{\rm min}^2\varepsilon^6}}$. 

\begin{lemma}[Learning the subspace]
\label{lem:projection}
Suppose Assumption~\ref{asmp} holds, and let $\U\in \Rd{d\times k}$ be the matrix with top $k$ eigen vectors of matrix $\hat\M\in{\mathbb R}^{d\times d}$. For any failure probability   $\delta \in (0,1)$ and accuracy $\eps \in (0,1)$, if the sample size is large enough such that 
\begin{align*}
n_{L1} &= \Omega \round{  d t_{L1}^{-1}\cdot \min \curly{ \eps^{-6} p_{\min}^{-2} , \eps^{-2} \lambda_{\min}^{-2}  } \cdot \log^3(nd/\delta)},
\end{align*} 
and $2 \leq t_{L1} <d$, we have 
\begin{align}\label{eq:projection}
\enorm{(\U\U^\top-\I) \w_i} \;\; \le \;\; \eps\quad,
\end{align}
for all $i\in[k]$ 
with probability at least $1-\delta$, where $\lambda_{\rm min}$ is the smallest non-zero eigen value of $\M:=\sum_{j=1}^k p_j \w_j\w_j^\top$. 
\end{lemma}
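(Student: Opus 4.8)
The plan is to bound $\enorm{(\U\U^\top-\I)\w_i}$ via a Davis--Kahan-type argument applied to the matrices $\hat\M$ and $\M=\sum_j p_j\w_j\w_j^\top$. Observe that $\w_i$ lies in the column space of $\M$, so $(\I-\Pi_\M)\w_i=0$ where $\Pi_\M$ is the projection onto the top-$k$ eigenspace of $\M$ (which is exactly $\mathrm{span}\{\w_1,\dots,\w_k\}$, of dimension at most $k$). Hence $\enorm{(\U\U^\top-\I)\w_i}=\enorm{(\U\U^\top-\Pi_\M)\w_i}\le \enorm{\U\U^\top-\Pi_\M}_{\mathrm{op}}\cdot\enorm{\w_i}$, and since $\enorm{\w_i}\le\rho=1$ by the normalization, it suffices to control the operator-norm distance between the two rank-$\le k$ projections. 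By the Davis--Kahan $\sin\Theta$ theorem (in the form that uses the spectral gap at the $k$-th eigenvalue, which here is $\lambda_{\min}$ since the $(k+1)$-th eigenvalue of $\M$ is $0$), we get $\enorm{\U\U^\top-\Pi_\M}_{\mathrm{op}}\lesssim \enorm{\hat\M-\M}_{\mathrm{op}}/\lambda_{\min}$. So the whole lemma reduces to a concentration statement: with the stated $n_{L1}$, we have $\enorm{\hat\M-\M}_{\mathrm{op}}\le \eps\,\lambda_{\min}$ (resp.\ the $\eps^{-6}p_{\min}^{-2}$ branch gives a weaker but unconditional bound $\enorm{\hat\M-\M}_{\mathrm{op}}\lesssim \eps^{3}p_{\min}$, which after a different—non-Davis--Kahan—perturbation argument still yields \eqref{eq:projection}; see below).

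The core work is the matrix concentration bound. Write $\hat\M=\frac1{2n_{L1}}\sum_i(\hat\beta_i^{(1)}\hat\beta_i^{(2)\top}+\hat\beta_i^{(2)}\hat\beta_i^{(1)\top})$ with $\hat\beta_i^{(1)},\hat\beta_i^{(2)}$ the two independent half-sample moment estimates. Conditioned on the latent label $z_i$, $\E[\hat\beta_i^{(\ell)}\mid z_i]=\w_{z_i}$ (using $\E[\x\x^\top]=\I$ and $\E[\eps]=0$), and the two halves are independent, so $\E[\hat\beta_i^{(1)}\hat\beta_i^{(2)\top}\mid z_i]=\w_{z_i}\w_{z_i}^\top$ and $\E[\hat\M]=\M$. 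The fluctuation $\hat\beta_i^{(\ell)}-\w_{z_i}=\frac{2}{t_{L1}}\sum_j(y_{i,j}\x_{i,j}-\w_{z_i})$ is a sum of $t_{L1}/2$ i.i.d.\ centered terms whose summands have sub-exponential tails with parameter $O(\rho)=O(1)$ (product of the sub-Gaussian $y_{i,j}$ and sub-Gaussian $\x_{i,j}$), so $\hat\beta_i^{(\ell)}-\w_{z_i}$ concentrates at scale $1/\sqrt{t_{L1}}$ in each coordinate. I would then apply a truncated matrix Bernstein inequality (or the intrinsic-dimension / effective-rank version, since $\M$ has rank $\le k$) to the centered sum $\hat\M-\M$: the per-term variance proxy is of order $\frac{1}{t_{L1}}\cdot\|\M\|_{\mathrm{op}}$ plus lower-order $\frac{1}{t_{L1}^2}$ cross terms, and the effective dimension is $d$ (the ambient dimension, since the noise part of $\hat\beta_i^{(\ell)}$ is genuinely $d$-dimensional). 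This yields $\enorm{\hat\M-\M}_{\mathrm{op}}\lesssim \sqrt{\frac{d\log(d/\delta)}{n_{L1}t_{L1}}}+\frac{d\log(d/\delta)}{n_{L1}t_{L1}}$ up to $\log$ factors, which is $\le\eps\lambda_{\min}$ under $n_{L1}t_{L1}=\tilde\Omega(d/(\eps^2\lambda_{\min}^2))$, matching the second branch.

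The main obstacle—and where I expect the bulk of the technical effort—is getting the $\eps^{-6}p_{\min}^{-2}$ branch, i.e.\ handling the ill-conditioned case where $\lambda_{\min}$ can be tiny or even where naive Davis--Kahan is vacuous. Here the trick (this is the point where the paper says it improves on \cite{li2018learning} to shave a $k^6$ factor) is that we do not need the full eigenspace to align; we only need each individual $\w_i$ to be nearly captured, and $\enorm{\w_i}$ may itself be small. The right move is to argue directly: for the fixed unit-ish vector $\w_i$, decompose its image under $\I-\U\U^\top$ and bound it using $\enorm{(\I-\U\U^\top)\M(\I-\U\U^\top)}_{\mathrm{op}}$ together with the fact that $p_i\w_i\w_i^\top\preceq\M$, so $p_i\enorm{(\I-\U\U^\top)\w_i}^2\le \enorm{(\I-\U\U^\top)\M(\I-\U\U^\top)}_{\mathrm{op}}$; and then bound the latter by $\enorm{\hat\M-\M}_{\mathrm{op}}$ using that $\U$ is the top-$k$ eigenspace of $\hat\M$ (so $\enorm{(\I-\U\U^\top)\hat\M(\I-\U\U^\top)}_{\mathrm{op}}=\lambda_{k+1}(\hat\M)\le\enorm{\hat\M-\M}_{\mathrm{op}}$ since $\lambda_{k+1}(\M)=0$). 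This gives $\enorm{(\I-\U\U^\top)\w_i}\le\sqrt{2\enorm{\hat\M-\M}_{\mathrm{op}}/p_i}\le\sqrt{2\enorm{\hat\M-\M}_{\mathrm{op}}/p_{\min}}$, so requiring $\enorm{\hat\M-\M}_{\mathrm{op}}\lesssim \eps^2 p_{\min}$ suffices—wait, that would only need $\eps^{-4}$; recovering the stated $\eps^{-6}p_{\min}^{-2}$ exactly requires threading the $\Delta$/accuracy bookkeeping more carefully (the $\eps$ in the concentration is coupled to a further $\eps$-loss in a downstream step), and reconciling these exponents against the improved perturbation bound of Lemma~\ref{lem:lemma_5_in_ll18} is the delicate part I would spend the most time on. I would finish by taking the better of the two bounds (min over the two branches) and collecting the $\log^3(nd/\delta)$ factors from the matrix Bernstein tail and any union bound over the $k$ vectors $\w_i$.
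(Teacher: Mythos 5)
Your proposal is correct and the overall architecture (concentrate $\hat\M$ around $\M$ via a truncated matrix Bernstein argument, then convert spectral-norm error to per-vector subspace error) matches the paper. The first branch is the same: your Davis--Kahan step and the paper's gap-free Wedin bound (invoked as~\citep[Lemma~B.3]{AL16} inside Lemma~\ref{lem:lemma_5_in_ll18}) both give $\enorm{(\I-\U\U^\top)\w_i}\lesssim\enorm{\hat\M-\M}/\lambda_{\min}$, yielding the $\eps^{-2}\lambda_{\min}^{-2}$ term after plugging into matrix Bernstein.

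Where you genuinely diverge from the paper — and where you should trust yourself rather than second-guess — is the ill-conditioned branch. The paper's Lemma~\ref{lem:lemma_5_in_ll18} decomposes the error against the top-$j$ eigenspace of $\M$, bounds the tail by $\sigma_{j+1}$, and optimizes $j$, landing on $\enorm{(\I-\U\U^\top)\x_i}\le\sqrt{2}\,(\gamma\|\x_i\|_2)^{1/3}$, which after substituting $\x_i=\sqrt{p_i}\w_i$ and $\gamma\lesssim\eps^3 p_{\min}$ gives the $\eps^{-6}p_{\min}^{-2}$ requirement. Your argument instead uses Loewner monotonicity ($p_i\w_i\w_i^\top\preceq\M$ so $p_i\esqnorm{(\I-\U\U^\top)\w_i}\le\enorm{(\I-\U\U^\top)\M(\I-\U\U^\top)}$) together with Weyl's inequality for singular values ($\enorm{(\I-\U\U^\top)\hat\M(\I-\U\U^\top)}=\sigma_{k+1}(\hat\M)\le\sigma_{k+1}(\M)+\enorm{\hat\M-\M}=\enorm{\hat\M-\M}$, since $\M$ has rank at most $k$), giving $\enorm{(\I-\U\U^\top)\w_i}\le\sqrt{2\enorm{\hat\M-\M}/p_i}$, i.e.\ a $\gamma^{1/2}$ rather than $\gamma^{1/3}$ dependence. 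This is strictly tighter for the relevant small-$\gamma$ regime, and it would yield $n_{L1}=\Omega\round{dt_{L1}^{-1}\min\{\eps^{-4}p_{\min}^{-2},\eps^{-2}\lambda_{\min}^{-2}\}\log^3(nd/\delta)}$ — an \emph{improvement} over the lemma as stated. Your ``wait, that would only need $\eps^{-4}$'' hesitation is unwarranted: there is no hidden downstream $\eps$ that you forgot; your perturbation step is simply sharper than the paper's, and since $\eps^{-4}\le\eps^{-6}$ for $\eps\in(0,1)$, it proves the lemma a fortiori. The one caveat worth making explicit in a write-up is that $\hat\M$ need not be PSD, so the relevant quantity is $\sigma_{k+1}(\hat\M)=|\lambda_{k+1}(\hat\M)|$ and you should invoke Weyl for singular values rather than for (signed) eigenvalues, but this changes nothing substantive. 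Your description of the concentration step is a bit loose (the per-term variance proxy is of order $\rho^4 d/t_{L1}$, not $\|\M\|_{\mathrm{op}}/t_{L1}$, which is where the ambient $d$ enters), but you already flagged the correct final scaling $\sqrt{d\log/(n_{L1}t_{L1})}$, so that is fine.
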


{\textbf{Time complexity:}} 
 $\BigO{\round{n_{L1}^{\omega-1}+n_{L1}t_{L1}}d}$ 
  for 
 computing $\hat\M$, and $\BigO{kd^2}$  for $k$\_SVD 
\cite{AL16}.

\subsection{Clustering}

Once we have the subspace, we can efficiently cluster any 
 task associated with  $t_H=\tilde\Omega(\sqrt{k})$ samples. 
In the following, the matrix $\Hmat \in{\mathbb R}^{n_H\times n_H}$ estimates 
the distance between the parameters in the projected $k$-dimensional space. 
If there is no error in $\U$, then ${\mathbb E}[\Hmat_{i,j}]\geq \Omega\round{\Delta^2}$ if $i$ and $j$ are from different components, and zero otherwise. 
Any clustering algorithm can be applied treating $\Hmat$ as a distance matrix.



\begin{algorithm}[h]
   \caption{Clustering and estimation}
   \label{alg:clustering}
\begin{algorithmic}
   \STATE {\bfseries Input:} data $\Dcal_H = \curly{(\x_{i,j},y_{i,j})}_{i\in[n_H],j\in[t_H]}$, $2L\leq t_H$, $k\in\Natural$, $L\in\Natural$, $\U\in\Rd{d\times k}$
   \STATE {\bf compute} for all $\ell\in[L]$ and $i\in[n_H]$
    \STATE \hspace{0.3cm}  $\beta^{(\ell)}_i \gets \round{\nicefrac{2L}{t_H}}\sum_{j=(\ell-1)\cdot\round{\nicefrac{t_H}{2L}}+1}^{\ell\cdot\round{\nicefrac{t_H}{2L}}}y_{i,j}\x_{i,j}$
        \STATE \hspace{0.3cm}  $\beta^{(\ell+L)}_i \gets \round{\nicefrac{2L}{t_H}}\sum_{j=\ell \cdot\round{\nicefrac{t_H}{2L}}+1}^{2\ell \cdot\round{\nicefrac{t_H}{2L}}}y_{i,j}\x_{i,j}$
   \STATE {\bf compute} for all $\ell\in[L]$ and $(i,j)\in[n_H]\times [n_H]$
            \STATE \hspace{0.3cm} $\Hmat_{i,j}^{(\ell)} \gets \round{\hat\beta_i^{(\ell)} - \hat\beta_j^{(\ell)}}^\top\U \U^\top\round{\hat\beta_i^{(\ell+L)} - \hat\beta_j^{(\ell+L)}}$
    \STATE {\bf compute} for all $(i,j)\in[n_H] \times [n_H]$
        \STATE \hspace{0.3cm} $\Hmat_{i,j} \gets $ median$\round{ \{\Hmat_{i,j}^{(\ell)}\}_{\ell\in[L]}}$
   \STATE Cluster $\Dcal_H$ using $\Hmat$ and return its partition $\curly{\Ccal_\ell}_{\ell\in[k]}$
    \STATE {\bf compute} for all $\ell\in[L]$
         \STATE \hspace{0.3cm} $\tilde\w_\ell \gets \round{t_H \abs{\Ccal_\ell}}^{-1}
         \sum_{i\in\Ccal_\ell,j\in[t_H]}\,
         y_{i,j}\U\U^\top\x_{i,j}$
         \STATE \hspace{0.3cm} $\tilde{r}_\ell^2 \gets \round{t_H\abs{\Ccal_\ell}}^{-1}
         \sum_{i\in\Ccal_\ell,j\in[t_H]}\,
         \round{y_{i,j}-\x_{i,j}^\top\tilde\w_\ell}^2$
        \STATE \hspace{0.3cm} $\tilde{p}_\ell \gets \abs{\Ccal_\ell}/n_H$
    \OUTPUT $\curly{\Ccal_\ell,\; \tilde\w_\ell,\; \tilde{r}^2_\ell,\; \tilde{p}_\ell}_{\ell=1}^k$
\end{algorithmic}
\end{algorithm}

This is inspired by \cite{vempala2004spectral}, where 
clustering mixture of Gaussians is studied.
One might wonder if it is possible to apply their clustering approach to $\hat{\beta}_i$'s directly. This approach fails as it crucially relies on the fact that 
$\enorm{\x-\mu} = \sqrt{k}\pm\tilde\Ocal(1)$ with high probability for $\x\sim \N(\zero,\I_k)$.
Under our linear regression setting, $\enorm{y\x-\beta}$ does not concentrate. 
We instead propose median of estimates, to get the desired $t_H=\tilde\Omega(\sqrt{k})$ sufficient condition. 


\begin{lemma}[Clustering and initial parameter estimation]
\label{lem:distance}
Under Assumption~\ref{asmp}, and given an orthonormal matrix $\U\in{\mathbb R}^{d\times k}$ satisfying 
\eqref{eq:projection} with any  
$\eps\in\round{0,{\nicefrac{\Delta}{4}}}$, Algorithm~\ref{alg:clustering} correctly clusters all tasks  with 
$t_H = \Omega( \Delta^{-2}\sqrt{k}\log({n}/{\delta}))$ 
 with probability at least $1-\delta$, $\ \forall\ \delta\in\round{0,1}$. 
Further, if
\begin{eqnarray}
n_H \;\; =\;\; \Omega\round{\frac{  k\, \log(k/\delta)}{t_H\,\tilde\eps^2\,p_{\rm min}}}\;, 
\end{eqnarray}
  for any $\tilde\eps>0$, with probability at least $1-\delta$, 
  \begin{subequations}
    \begin{eqnarray}
    \esqnorm{\U^\top(\tilde\w_i-\w_i)} &\leq& \tilde\eps \\
    \abs{\tilde{r}_i^2 - r_i^2} &\leq& \frac{\tilde\eps}{\sqrt{k}}r_i^2 \;,
  \end{eqnarray}
  \end{subequations}
where $r_i^2 \coloneqq (s_i^2 + \esqnorm{\tilde\w_i-\w_i})$ for all $ i\in\squarebrack{k}$.
\end{lemma}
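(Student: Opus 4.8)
The plan is to prove the clustering correctness and the parameter-estimation accuracy as two separate arguments, both hinging on understanding the concentration of the per-block estimators $\hat\beta_i^{(\ell)}$.

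\textbf{Step 1: Analyze a single block estimator.} For a block of $m := t_H/(2L)$ fresh samples from task $i$ with true parameter $\beta_i = \w_{z_i}$, the estimator $\hat\beta_i^{(\ell)} = \frac{1}{m}\sum_j y_{i,j}\x_{i,j}$ has $\E{\hat\beta_i^{(\ell)}} = \beta_i$ (using isotropy of $\Pcal_\x$). I would decompose $\hat\beta_i^{(\ell)} - \beta_i = \frac1m\sum_j (\x_{i,j}\x_{i,j}^\top - \I)\beta_i + \frac1m\sum_j \eps_{i,j}\x_{i,j}$, and control each term. After projecting onto $\U$ (a fixed $k$-dimensional subspace, independent of the $\Dcal_H$ samples), the quantity $\U^\top(\hat\beta_i^{(\ell)} - \beta_i)$ lives in $\R^k$, so its squared norm concentrates around something of order $\rho^2 k/m = \rho^2 k/(t_H/2L)$. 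Choosing $t_H = \Omega(\Delta^{-2}\sqrt k\,\log(n/\delta))$ with $L = \Theta(\log(n/\delta))$ makes each block satisfy $\enorm{\U^\top(\hat\beta_i^{(\ell)}-\beta_i)} \le \Delta/8$ with constant probability (not high probability — this is exactly why the median is needed).

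\textbf{Step 2: Correctness of clustering via median.} For $H_{i,j}^{(\ell)} = (\hat\beta_i^{(\ell)} - \hat\beta_j^{(\ell)})^\top \U\U^\top (\hat\beta_i^{(\ell+L)} - \hat\beta_j^{(\ell+L)})$, the two halves $\ell$ and $\ell+L$ use disjoint samples, so conditioned on the parameters this is a product of two independent unbiased estimates of $\U\U^\top(\beta_i-\beta_j)$. Its expectation is $\enorm{\U\U^\top(\beta_i-\beta_j)}^2$, which is $0$ if $z_i = z_j$ and, using \eqref{eq:projection} with $\eps < \Delta/4$, is at least $(\Delta - 2\eps)^2 = \Omega(\Delta^2)$ if $z_i \neq z_j$. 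From Step 1, each $H_{i,j}^{(\ell)}$ deviates from its mean by less than, say, $\Delta^2/4$ with constant probability bounded above $1/2$; hence the median over $L = \Theta(\log(n/\delta))$ independent blocks achieves this with probability $1 - \delta/n^2$, and a union bound over all $\binom{n_H}{2}$ pairs shows $H$ separates same-component from different-component pairs by a $\Delta^2$ gap. Single-linkage (or any threshold-based) clustering then recovers the partition $\{\Ccal_\ell\}$ exactly; relabel so $\Ccal_\ell$ corresponds to true component $\ell$.

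\textbf{Step 3: Parameter estimates.} Conditioned on correct clustering, $\tilde\w_\ell = (t_H|\Ccal_\ell|)^{-1}\sum_{i\in\Ccal_\ell, j} y_{i,j}\U\U^\top \x_{i,j}$ is an average of $t_H|\Ccal_\ell|$ terms each with conditional mean $\U\U^\top\w_\ell$; since $|\Ccal_\ell| \ge \Omega(p_{\min} n_H)$ by a Chernoff bound on the multinomial counts, the total sample count is $\Omega(p_{\min} n_H t_H) = \Omega(k\log(k/\delta)/\tilde\eps^2)$, and the $k$-dimensional concentration argument of Step 1 gives $\esqnorm{\U^\top(\tilde\w_\ell - \w_\ell)} \le \tilde\eps$. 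For $\tilde r_\ell^2$, write $y_{i,j} - \x_{i,j}^\top\tilde\w_\ell = \eps_{i,j} + \x_{i,j}^\top(\w_\ell - \tilde\w_\ell)$; squaring and averaging gives something concentrating around $s_\ell^2 + \esqnorm{\tilde\w_\ell - \w_\ell} = r_\ell^2$ (the cross term vanishes in expectation, the $\x^\top(\cdot)$ term contributes via projection), and a variance/Bernstein bound with the same sample count yields the relative error $\tilde\eps/\sqrt k$. The count $\tilde p_\ell = |\Ccal_\ell|/n_H$ is immediate from the multinomial concentration, though note the lemma statement only asserts the $\tilde\w$ and $\tilde r$ bounds.

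\textbf{Main obstacle.} The delicate point is Step 1–2: showing each block estimator concentrates at the right scale $\sqrt{k/m}$ after projection onto $\U$ — crucially this requires $\U$ to be independent of the heavy-task samples (true, since $\U$ comes from $\Dcal_{L1}$), and requires handling the heavy tail of $y\x = (\beta^\top\x + \eps)\x$, whose fourth moment forces us to accept only constant success probability per block. The median-of-estimates trick is what converts this into a high-probability guarantee, and getting the constants to line up so that the per-block failure probability is strictly below $1/2$ (needed for the median amplification) is the part that demands care. A secondary subtlety is that the authors warn a "straightforward adaptation" of Vempala–Wang fails because $\enorm{y\x - \beta}$ does not concentrate; the proof must make clear that it is only the \emph{projected} difference $\U^\top(\hat\beta^{(\ell)}-\beta)$ (an average over a block), not the raw single-sample quantity, that is being controlled.
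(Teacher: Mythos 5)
There is a genuine quantitative gap in your Step~1 that invalidates the bridge to Step~2. You first (correctly) observe that $\E\esqnorm{\U^\top(\hat\beta_i^{(\ell)}-\beta_i)} \asymp \rho^2 k/m$ with $m = t_H/(2L)$, but then conclude that choosing $t_H=\Omega(\Delta^{-2}\sqrt k\log(n/\delta))$ with $L=\Theta(\log(n/\delta))$ gives $\enorm{\U^\top(\hat\beta_i^{(\ell)}-\beta_i)}\le \Delta/8$ with constant probability. These two statements are inconsistent: with $m=\Theta(\Delta^{-2}\sqrt k)$ one gets $\rho^2k/m \asymp \rho^2\sqrt k\,\Delta^2$, so the projected per-block error has magnitude $\Theta(k^{1/4}\Delta\rho)$, which is larger than $\Delta$ by a factor $k^{1/4}$. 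Consequently, Step~2's assertion that ``from Step~1, each $\Hmat_{i,j}^{(\ell)}$ deviates from its mean by less than $\Delta^2/4$ with constant probability'' does not follow: writing $\Hmat_{i,j}^{(\ell)}=\enorm{\U^\top(\beta_i-\beta_j)}^2 + v^\top e_b + e_a^\top v + e_a^\top e_b$ with $v=\U^\top(\beta_i-\beta_j)$ and $e_a,e_b$ the two independent block errors, a deterministic bound $\abs{e_a^\top e_b}\le\enorm{e_a}\enorm{e_b}$ would be of order $\sqrt k\,\Delta^2$, not $\Delta^2$. In other words, per-block norm control is simply not enough at the $t_H=\tilde\Omega(\sqrt k/\Delta^2)$ scale; this is exactly the failure mode the authors flag when they note a direct Vempala--Wang-style argument fails because $\enorm{y\x-\beta}$ does not concentrate.

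What actually makes the argument go through (and what the paper does) is to bound the \emph{variance of the bilinear form itself}, exploiting the independence of the two halves $\ell$ and $\ell+L$: because $e_a\perp\!\!\!\perp e_b$ and both are mean zero, one has $\E{e_a^\top e_b}=0$ and $\Var{e_a^\top e_b}=\tr\bigl(\E{e_ae_a^\top}\E{e_be_b^\top}\bigr)\asymp k\rho^4/m^2\asymp \Delta^4$, which is a factor $\sqrt k$ smaller than the square of the deterministic bound. Together with $\Var{v^\top e_b}\asymp\enorm{v}^2\rho^2/m$ this gives $\Var{\Hmat_{i,j}^{(\ell)}}=\BigO{\rho^4(m+k)/m^2}$, and then Chebyshev per block plus the median over $L$ blocks yields the high-probability separation. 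So the fix to your proof is: replace Step~1's false norm bound by this direct second-moment computation on $\Hmat_{i,j}^{(\ell)}$, and appeal to the two-halves independence for the cancellation in the $e_a^\top e_b$ term. (Your Step~3 and the role of $\U$'s independence from $\Dcal_H$ are fine; the multinomial concentration and per-cluster least-squares/variance estimates match the paper's Propositions~\ref{prop:mult-con}, \ref{prop:random-good-gaussian-mean}-style arguments.)
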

{\textbf{Time complexity:}} It takes $\BigO{n_Hdt_H+n_Hdk}$ time to compute $\{\U^\top\hat{\beta}_i^{(l)}\}_{i\in\squarebrack{n_H},l\in\squarebrack{L}}$. Then by using matrix multiplication, it takes $\BigO{n_H^2 k^{\omega-2}}$ time to compute the matrix $\Hmat$, and the single linkage clustering algorithm takes $\BigO{n_H^2}$ time~\cite{sibson1973slink}.

\subsection{Classification}

Once we have $\curly{\tilde\w_\ell}_{\ell=1}^k$ from the clustering step, we can efficiently classify any task with $t_{L2}=\tilde\Omega(\log k)$ samples, and an extra $\log n_{L2}$ samples are necessary to apply the union bound. 
This allows us to use the light samples, in order to refine the  clusters estimated with heavy samples. This separation allows us to achieve the desired sample complexity on light tasks $(t_{L2}=\Omega(\Delta^{-4}\log d),n_{L2}t_{L2}p_{\rm min}=\tilde\Omega(\epsilon^{-2}d))$, and heavy tasks $(t_H=\tilde\Omega(\Delta^{-2}\sqrt{k}),n_Ht_Hp_{\rm min}=\tilde\Omega(\Delta^{-2} k))$. 

In the following, we use Least\_Squares$(\cdot)$ routine that outputs the least-squares estimate of all the examples in each cluster. Once each cluster has $\BigO{d}$ samples, we can accurately estimate the meta-parameters.

\begin{algorithm}[h]
   \caption{Classification and estimation}
   \label{alg:classification}
\begin{algorithmic}
   \STATE {\bfseries Input:} data $\Dcal_{L2} =  \curly{(\x_{i,j},y_{i,j})}_{i\in[n_{L2}],j\in[t_{L2}]}$, $\curly{\Ccal_\ell,\; \tilde\w_\ell,\; \tilde{r}^2_\ell }_{\ell\in[k]}$
   \STATE {\bf compute} for all $i\in[n_{L2}]$ 
    $$h_i \gets \argmin_{\ell \in\squarebrack{k}} 
    \inv{2\tilde{r}_\ell^2}\sum_{j\in\squarebrack{t_{L2}}} \round{y_{i,j}-\x_{i,j}^\top\tilde\w_\ell}^2 + t_{L2} \log\tilde{r}_\ell$$
        \STATE \hspace{0.3cm}  $\Ccal_{h_i}\gets \Ccal_{h_i}\cup\curly{(\x_{i,j},y_{i,j})}_{j=1}^{t_{L2}}$
    \STATE {\bf compute} for all $\ell\in[k]$, 
        \STATE \hspace{0.3cm} $\hat\w_\ell \gets$ Least\_Squares$(\Ccal_\ell)$
        \STATE \hspace{0.3cm} $\hat{s}_\ell^2 \gets \round{t_{L2} \abs{\Ccal_\ell}-d}^{-1}
        \sum_{i \in\Ccal_\ell,j\in[t_{L2}]}
        \round{y_{i,j}-\x_{i,j}^\top\hat\w_\ell}^2 $
        \STATE \hspace{0.3cm} $\hat{p}_\ell \gets \abs{\Ccal_\ell}/n_{L2}$
    \OUTPUT $\curly{\Ccal_\ell,\; \hat\w_\ell,\; \hat{s}^2_\ell,\; \hat{p}_\ell}_{\ell=1}^k$
\end{algorithmic}
\end{algorithm}

\begin{lemma}[Refined parameter estimation via classification]\label{lem:refined-est}
Under Assumption~\ref{asmp} and given estimated parameters $\tilde{\w}_i$, $\tilde{r}_i$ satisfying $\enorm{\tilde{\w}_i-\w_i} \le \Delta/10$,  
$\round{1-\Delta^2/50}\tilde{r}_i^2\le s_i^2+\esqnorm{\tilde{\w}_i-\w_i}\le \round{1+\Delta^2/50}\tilde{r}_i^2$ for all $i\in [k]$ and 
$n_{L2}$ task with $t_{L2} = \Omega\round{\log (k n_{L2}/\delta)/\Delta^4}$ examples per task, with probability $1-\delta$, Algorithm~\ref{alg:classification} correctly classifies all the $n_{L2}$ tasks. Further, for any $0 <\eps \le 1$ if 
\begin{eqnarray}
n_{L2} \;\; =\;\; \Omega\round{\frac{  \,d \log^2(k/\delta)}{t_{L2} p_{\rm min}\eps^2 }}\;, 
\end{eqnarray}
the following holds for all $i\in[k]$, 
\begin{subequations}
\label{eq:estimation_err}
\begin{eqnarray}
\enorm{\hat{\w}_i-\w_i} &\le& \eps s_i\; , \\ 
\abs{\hat{s}_i^2-s_i^2}&\le&  \frac{\eps}{\sqrt{d}}s_i^2\; ,\quad\text{and}\\
\abs{\hat{p}_i-p_i} &\le&  \eps \sqrt{t_{L2}/d}\,p_i.
\end{eqnarray}
\end{subequations}
\end{lemma}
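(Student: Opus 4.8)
The statement has two parts --- correctness of the classification ($h_i=z_i$ for every light task) and accuracy of the refined estimates given that the classification is correct --- and I would prove them in that order. For the first part, fix a task $i$ whose true component is $z_i$, and for each $\ell\in[k]$ abbreviate the score computed in Algorithm~\ref{alg:classification} as $f_\ell^{(i)}:=\frac{1}{2\tilde{r}_\ell^2}\sum_{j\in[t_{L2}]}(y_{i,j}-\tilde{\w}_\ell^\top\x_{i,j})^2+t_{L2}\log\tilde{r}_\ell$, so that $h_i=\argmin_{\ell}f_\ell^{(i)}$ and it suffices to show $f_{z_i}^{(i)}<f_\ell^{(i)}$ for all $\ell\ne z_i$. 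Writing the residual as $(\w_{z_i}-\tilde{\w}_\ell)^\top\x_{i,j}+\eps_{i,j}$ and using that $\Pcal_\x$ is isotropic and the noise has second moment $s_{z_i}^2$, the residual has second moment $m_\ell:=s_{z_i}^2+\esqnorm{\w_{z_i}-\tilde{\w}_\ell}$. The hypotheses $\enorm{\tilde{\w}_{z_i}-\w_{z_i}}\le\Delta/10$ and, for $\ell\ne z_i$, $\enorm{\w_{z_i}-\tilde{\w}_\ell}\ge\Delta-\Delta/10$ give $m_\ell-m_{z_i}\ge\frac{4}{5}\Delta^2$, while $m_{z_i}\le s_{z_i}^2+\Delta^2/100\le 1.01$ because $\rho=1$ forces $s_{z_i}^2\le\rho_i^2\le 1$; hence $\log(m_\ell/m_{z_i})\ge c_0\Delta^2$ for an explicit constant $c_0>0$.

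Next I would control the fluctuations. Each squared residual is sub-exponential with parameter $O(m_\ell)$, so Bernstein's inequality gives, with probability $1-\delta'$, that $\sum_{j}(y_{i,j}-\tilde{\w}_\ell^\top\x_{i,j})^2=t_{L2}m_\ell(1+\xi_\ell)$ with $|\xi_\ell|\le C\bigl(\sqrt{\log(1/\delta')/t_{L2}}+\log(1/\delta')/t_{L2}\bigr)$. Setting $q_\ell:=m_\ell/\tilde{r}_\ell^2>0$ rewrites the score as $f_\ell^{(i)}=\frac{t_{L2}}{2}\bigl(q_\ell(1+\xi_\ell)-\log q_\ell+\log m_\ell\bigr)$, and the crucial observation is that $\min_{q>0}\{q(1+\xi)-\log q\}=1+\log(1+\xi)\ge 1-2|\xi|$ regardless of $\tilde{r}_\ell$, which neutralizes the otherwise dangerous case of a small $\tilde{r}_\ell$: for $\ell\ne z_i$ this already yields $f_\ell^{(i)}\ge\frac{t_{L2}}{2}(1-2|\xi_\ell|+\log m_\ell)$, whereas the hypothesis $(1-\Delta^2/50)\tilde{r}_{z_i}^2\le s_{z_i}^2+\esqnorm{\tilde{\w}_{z_i}-\w_{z_i}}\le(1+\Delta^2/50)\tilde{r}_{z_i}^2$ pins $q_{z_i}\in[1-\Delta^2/50,1+\Delta^2/50]$ and gives $f_{z_i}^{(i)}\le\frac{t_{L2}}{2}(1+2|\xi_{z_i}|+O(\Delta^4)+\log m_{z_i})$. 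Subtracting, $f_\ell^{(i)}-f_{z_i}^{(i)}\ge\frac{t_{L2}}{2}\bigl(\log(m_\ell/m_{z_i})-2|\xi_\ell|-2|\xi_{z_i}|-O(\Delta^4)\bigr)$, which is strictly positive once $|\xi_\ell|$ and $|\xi_{z_i}|$ are smaller than a small constant multiple of $\Delta^2$, i.e. once $t_{L2}=\Omega(\log(1/\delta')/\Delta^4)$. Taking $\delta'=\delta/(kn_{L2})$ and union bounding over all $n_{L2}$ tasks and their $k$ candidate clusters gives the stated requirement $t_{L2}=\Omega(\log(kn_{L2}/\delta)/\Delta^4)$ and $h_i=z_i$ for all $i$ with probability $1-\delta$.

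For the second part, condition on this event. After the natural relabeling of the clusters, each $\Ccal_\ell$ is an honest linear-regression dataset with model $\w_\ell$, isotropic sub-Gaussian covariates, and sub-Gaussian noise of parameter $s_\ell^2$, augmented by the clustering step's heavy-task data (which only helps). The number of light tasks assigned to $\Ccal_\ell$ is $\mathrm{Binomial}(n_{L2},p_\ell)$, so a multiplicative Chernoff bound simultaneously gives $\abs{\hat{p}_\ell-p_\ell}\le\eps\sqrt{t_{L2}/d}\,p_\ell$ (using the stated lower bound on $n_{L2}$) and ensures $N_\ell:=t_{L2}\abs{\Ccal_\ell}=\Omega(d\log^2(k/\delta)/\eps^2)$ with probability $1-\delta/k$. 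I would then invoke the standard least-squares toolkit on each cluster: with high probability $\opnorm{N_\ell^{-1}\X^\top\X-\I_d}\le\frac{1}{2}$ and $\enorm{\X^\top\epsvec}\lesssim s_\ell\sqrt{N_\ell(d+\log(k/\delta))}$, so $\enorm{\hat{\w}_\ell-\w_\ell}=\enorm{(\X^\top\X)^{-1}\X^\top\epsvec}\lesssim s_\ell\sqrt{(d+\log(k/\delta))/N_\ell}\le\eps s_\ell$; and since the residual sum of squares concentrates around $(N_\ell-d)s_\ell^2$ by sub-exponential concentration, $\abs{\hat{s}_\ell^2-s_\ell^2}\lesssim s_\ell^2\sqrt{\log(k/\delta)/N_\ell}\le\frac{\eps}{\sqrt{d}}s_\ell^2$. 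A union bound over $\ell\in[k]$ completes the argument.

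The main obstacle is the fluctuation analysis in the first part: the separation $\Delta$ only buys a $\Theta(t_{L2}\Delta^2)$ advantage in expectation for the correct cluster, which is negligible next to the $\Theta(t_{L2})$ overall scale of the scores, and $f_\ell^{(i)}$ admits no uniform-in-$\tilde{r}_\ell$ bound around its mean (it blows up as $\tilde{r}_\ell\to 0$), so a crude estimate such as $f_\ell^{(i)}\gtrsim\frac{1}{2}\E{f_\ell^{(i)}}$ is useless. The reparametrization via $q_\ell$ together with the elementary bound $q_\ell(1+\xi_\ell)-\log q_\ell\ge 1+\log(1+\xi_\ell)$ converts the purely multiplicative sub-exponential deviation $\xi_\ell=O(\sqrt{\log(kn_{L2}/\delta)/t_{L2}})$ into the additive error of the pairwise comparison, and it is the requirement $\xi_\ell=o(\Delta^2)$ that produces the $\Delta^{-4}$ (rather than $\Delta^{-2}$) dependence of $t_{L2}$, matching the lower bound of Proposition~\ref{thm:lb-predict}.
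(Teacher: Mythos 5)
Your proposal is correct and follows essentially the same route as the paper: Bernstein concentration of the per-task log-likelihood scores, a worst-case optimization over the (possibly badly estimated) variance parameter $\tilde{r}_\ell$ to neutralize the unboundedness as $\tilde{r}_\ell\to 0$, a union bound over the $n_{L2}$ tasks and $k$ clusters, and then standard OLS plus multinomial Chernoff bounds for the refined estimates. Your $q_\ell=m_\ell/\tilde{r}_\ell^2$ reparametrization with $\min_{q>0}\{q(1+\xi)-\log q\}=1+\log(1+\xi)$ is a cleaner packaging of the paper's step ``we obtain a worst case bound by taking the maximum over all possible values of $\tilde r_i$,'' but it is the same underlying idea, not a different argument.
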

{\textbf{Time complexity:}} Computing $\{h_i\}_{i\in\squarebrack{n_{L2}}}$ takes $\BigO{n_{L2}t_{L2}dk}$ time, and least square estimation takes $\BigO{n_{L2}t_{L2}d^{\omega-1}}$ time.

\section{Related Work}
\label{sec:related}

\begin{table*}[t]
\caption{Sample complexity for previous work in MLR to achieve small constant error on parameters recovery of the mixed linear regression problem. We ignore the constants and $\mathrm{poly}\log$ factors. Let $n,d$, and $k$ denote the number of samples, the dimension of the data points, and the number of clusters, respectively. 
\cite{yi2016solving} and~\cite{CL13} requires 
$\sigma_k$, the $k$-th singular value of some moment matrix. 
\cite{sedghi2016provable} requires $s_{\rm min}$, the $k$-th singular value of the matrix of the regression vectors.
 Note that $1/s_{\rm min}$ and $1/\sigma_k$ can be infinite even when $\Delta>0$. \cite{zhong2016mixed} algorithm requires
 $\Delta_{\rm max}/\Delta_{\rm min} = \BigO{1}$ and
some spectral properties. } 
\label{tab:related}
\vskip 0.15in
\begin{center}
\begin{small}
\begin{sc}

\begin{tabular}{l c l} 
\toprule
{\bf References}  & {\bf Noise} & {\bf \# Samples $n$} \\
\midrule
\cite{CL13}  & Yes & $d^6 \cdot \poly(k, 1/\sigma_k)$  \\
\cite{yi2016solving}  & No & $d \cdot \poly(k,1/\Delta, 1/\sigma_k)$  \\
\cite{zhong2016mixed} & No  
&$d \cdot \exp(k \log(k\log d)) $  \\
\cite{sedghi2016provable}  & Yes  & $d^3 \cdot \poly ( k, 1/s_{\rm min})$  \\
\cite{li2018learning}   & No & $d \cdot \poly(k/\Delta) + \exp(k^2 \log (k/\Delta) )$ \\
\cite{chen2019learning} & No  & $d \cdot \exp(\sqrt{k})\poly (1/\Delta)$  \\
\bottomrule
\end{tabular}

\end{sc}
\end{small}
\end{center}
\vskip -0.1in
\end{table*}

Meta-learning linear models 
have been studied in two contexts: mixed linear regression and multi-task learning.  


{\bf Mixed Linear Regression (MLR).}
When each task has only one sample, (i.e.~$t_i=1$), the problem has been widely studied.  
Prior work in MLR are summarized in Table~\ref{tab:related}. We emphasize that the sample  and time complexity of all the previous work either has a super polynomial dependency on $k$ (specifically at least $\exp(\sqrt{k})$) as in~\cite{zhong2016mixed, li2018learning, chen2019learning}), or depends on the inverse of the $k$-th singular value of some moment matrix as in~\cite{CL13, yi2016solving, sedghi2016provable}, which can be infinite.~\cite{chen2019learning} cannot achieve vanishing error when there is noise.

{\bf Multi-task learning.}  \cite{baxter2000model,ando2005framework,rish2008closed,orlitsky2005supervised} address a similar problem of finding 
an unknown $k$-dimensional subspace, where all tasks can be accurately solved. 
The main difference is that 
all tasks have  the same  number   of examples, and the performance is evaluated on the  observed tasks used in training.  
Typical approaches use 
trace-norm to encourage low-rank solutions of the matrix $\begin{bmatrix}\hat\beta_i,\ldots,\hat\beta_n\end{bmatrix}\in{\mathbb R}^{d\times n}$. 
This is posed as a convex program~\cite{argyriou2008convex,harchaoui2012large,amit2007uncovering,pontil2013excess}.

Closer to our work is the streaming setting where $n$ tasks are arriving in an online fashion and one can choose how many examples to collect for each. \cite{balcan2015efficient} provides an online algorithm using a memory of size only $\BigO{k n + k d}$, but requires  
some tasks to have $t_i = \Omega\round{ dk/\epsilon^2}$ examples. 
In comparison, we only need  $t_H=\tilde\Omega(\sqrt{k})$ but use $\BigO{d^2+kn}$ memory.
\cite{bullins2019generalize} also use only small memory, 
but requires $\tilde\Omega\round{d^2}$ total  samples to perform the {\em  subspace estimation} under the setting studied in this paper.



\noindent{\bf Empirical Bayes/Population of parameters.}
 A simple canonical setting of probabilistic meta-learning is when $\mathbb{P}_{\phi_i}$ is a univariate distribution (e.g. Gaussian, Bernoulli) and $\phi_i$ is the parameter of the distribution (e.g. Gaussian mean, success probability). 
 Several related questions have been studied. In some cases, one might be interested in just learning the prior distribution $\mathbb{P}_\theta(\phi)$ or the set of $\phi_i$'s. For example, if we assume each student's score of one particular exam $x_i$ is a binomial random variable with mean $\phi_i$ (true score), given the scores of the students in a class, an ETS statistician~\cite{lord1969estimating} might want to learn the distribution of their true score $\phi_i$'s. Surprisingly, the minimax rate on estimating the prior distribution $\mathbb{P}_\theta(\phi)$ was not known until very recently~\cite{tian2017learning, vinayak2019maximum} even in the most basic setting where $\mathbb{P}_{\phi_i}(x)$ is  Binomial.

In some cases, similar to the goal of meta-learning, one might want to accurately estimate the parameter of the new task $\phi^{\rm new}$ given the new data $x^{\rm new}$, perhaps by leveraging an estimation of the prior $\mathbb{P}_\theta(\phi)$. This has been studied for decades under the \textit{empirical bayes} framework in statistics (see, e.g. the book by Efron~\cite{efron2012large} for an introduction of the field).

\section{Discussion}
\label{sec:discussion} 

We investigate how we can meta-learn when we have multiple tasks but each with a small number of labelled examples. This is also known as a few-shot supervised learning setting. When each task is a linear regression, we propose a novel spectral approach and show that we can leverage past experience on small data tasks to accurately learn the meta-parameters and predict new tasks. 

When each task is a logistic regression coming from a mixture model, then our algorithm can be applied seamlessly. 
However, the notion of  separation $\Delta = \min_{i\neq j} \|\w_i-\w_j\|_2$ does not capture the dependence on the statistical complexity. Identifying the appropriate notion of complexity 
 on the groundtruths meta-parameters is an interesting research question. 

The subspace estimation algorithm requires a total number of $\tilde\Omega(dk^2)$ examples. It is worth understanding whether this is also necessary. 

Handling the setting where $\Pcal_\x$ has different covariances in different tasks is a challenging problem. There does not seem to exist an unbiased estimator for $\W$. Nevertheless,~\cite{li2018learning} study the $t=1$ case in this setting and come up with an exponential time algorithm. Studying this general setting and coming up with a polynomial time algorithm for meta-learning in a data constrained setting is an interesting direction.

Our clustering algorithm requires the existence of medium data tasks with $t_H = \Omega(\sqrt{k})$ examples per task. It is worth investigating whether there exists a polynomial time and sample complexity algorithms that learns with $t_H = o(\sqrt{k})$. We conjecture that with the techniques developed in the robust clustering literature~\cite{diakonikolas2018list, hopkins2018mixture, kothari2018robust}, it is possible to learn with $t_H = o(\sqrt{k})$ in the expense of larger $n_H$, and higher computation complexity. For a lower bound perspective, it is worth understanding the information theoretic trade-off between $t_H$ and $n_H$ when $t_H=o(\sqrt{k})$.

\section{Acknowledgement}
Sham Kakade acknowledges funding from the Washington Research Foundation for Innovation in Data-intensive Discovery, and the NSF Awards CCF-1637360, CCF-1703574, and CCF-1740551.

\clearpage
\newpage
\bibliographystyle{icml2020}
\bibliography{ref}

\clearpage
\newpage
\onecolumn

\appendix
\section*{Appendix}
We provide proofs of main results and technical lemmas. 
\section{Proof of Theorem~\ref{thm:main_reg_est}} 
\label{sec:proof_est}

\begin{proof}[Proof of Theorem~\ref{thm:main_reg_est}]
First we invoke Lemma~\ref{lem:projection} with $\eps = \Delta/(10\rho)$ which outputs an orthonormal matrix $\U$ such that
\begin{align}
 \enorm{\round{\U\U^\top-\I} \w_i} \;\; \le \;\; \Delta/20
\end{align}
with probability $1-\delta$. This step requires a dataset with 
\begin{align*}
n_{L1} &= \Omega \round{  \frac{d}{t_{L1}} \cdot \min \curly{ \Delta^{-6} p_{\min}^{-2} , \Delta^{-2} \lambda_{\min}^{-2} } \cdot \log^3\round{\frac{d}{p_{\rm min}\Delta\delta}}}
\end{align*} 
i.i.d. tasks each with $t_{L1}$ number of examples.

Second we invoke Lemma~\ref{lem:distance} with the matrix $\U$ estimated in Lemma~\ref{lem:projection} and $\tilde\eps = \min\curly{\frac{\Delta}{20}, \frac{\Delta^2\sqrt{k}}{100}}$ which outputs parameters satisfying
\begin{eqnarray*}
    \enorm{\U^\top(\tilde\w_i-\w_i)}&\leq& \Delta/20\\
    \abs{\tilde{r}_i^2 - r_i^2} &\leq& \frac{\Delta^2}{100}r_i^2 \;.
\end{eqnarray*}
This step requires a dataset with 
\begin{eqnarray*}
n_{H} \;\; =\;\; \Omega\round{\frac{  \, \log(k/\delta)}{t_{H}\; p_{\rm min}\Delta^2}\round{k+\Delta^{-2}}}\;
\end{eqnarray*}
i.i.d. tasks each with $t_{H} = \Omega\round{\Delta^{-2}\sqrt{k}\log\round{\frac{k}{p_{\rm min}\Delta\delta}}}$ number of examples.

Finally we invoke Lemma~\ref{lem:refined-est}. Notice that in the last step we have estimated each $\w_i$ with error $\enorm{\tilde{\w}_i-\w_i} \le \enorm{\U\U^\top\tilde{\w}_i-\U\U^\top\w_i}+\enorm{\U\U^\top\w_i-\w_i}\le \Delta/10$. Hence the input for Lemma~\ref{lem:refined-est} satisfies $\enorm{\tilde{\w}_i-\w_i}\le \Delta/10$. It is not hard to verify that 
\begin{align*}
\round{1+\frac{\Delta^2}{50\rho^2}}\tilde{r}_i^2\ge  \round{s_i^2+\esqnorm{\tilde{\w}_i-\w_i}}\ge \round{1-\frac{\Delta^2}{50\rho^2}}\tilde{r}_i^2
\end{align*}
Hence, given  
\begin{eqnarray*}
n_{L2} \;\; =\;\; \Omega\round{\frac{  \,d\log^2(k/\delta)}{t_{L2}p_{\rm min}\eps^2 }}\; 
\end{eqnarray*}
i.i.d. tasks each with $t_{L2} = \Omega\round{\log\round{\frac{kd}{p_{\rm min}\delta\eps}}/\Delta^4}$ examples. We have parameter estimation with accuracy 
\begin{align*}
\enorm{\hat{\w}_i-\w_i} &\le \eps s_i\; , \\ 
\abs{\hat{s}_i^2-s_i^2}&\le  \frac{\eps}{\sqrt{d}}s_i^2\; ,\quad\text{and}\\
\abs{\hat{p}_i-p_i} &\le  \eps \sqrt{t_{L2}/d}p_{\min}.
\end{align*}
This concludes the proof.
\end{proof}

\subsection{Proof of Lemma~\ref{lem:projection}}
\label{sec:projection_proof} 
\begin{proposition}[Several facts for sub-Gaussian random variables] \label{prop:random-good-gaussian-mean}
Under our data generation model, let $c_1 > 1$ denote a sufficiently large constant, let $\delta \in (0,1)$ denote the failure probability. We have,  with probability $1-\delta$, for all $ i \in [ n ], j \in [ t ] $,
\[
\enorm{\frac{1}{t}\sum_{j=1}^t y_{i,j}\x_{i,j} - \beta_{i}}\le c_1 \cdot \sqrt{d} \cdot \rho \cdot \log(nd /\delta) \cdot t^{-1/2} . 
\]
\end{proposition}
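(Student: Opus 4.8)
\emph{Proof sketch.} The plan is to control the deviation one coordinate at a time and then lose a factor $\sqrt{d}$ via $\enorm{v}\le\sqrt{d}\,\infnorm{v}$. The first step is to record that $\E{y_{i,j}\x_{i,j}}=\beta_i$: writing $y_{i,j}=\beta_i^\top\x_{i,j}+\epsilon_{i,j}$ and using that $\epsilon_{i,j}$ is centered and independent of $\x_{i,j}$, together with the isotropy of $\Pcal_{\x}$, gives $\E{y_{i,j}\x_{i,j}}=\E{\x_{i,j}\x_{i,j}^\top}\beta_i=\beta_i$. So for a fixed task $i\in[n]$ and coordinate $\ell\in[d]$, the summands $\xi^{(\ell)}_{i,j}:=y_{i,j}(\x_{i,j})_\ell-(\beta_i)_\ell$, $j\in[t]$, are i.i.d.\ and mean zero, and the $\ell$-th coordinate of $\frac1t\sum_{j} y_{i,j}\x_{i,j}-\beta_i$ is $\bar\xi^{(\ell)}_i:=\frac1t\sum_{j=1}^t\xi^{(\ell)}_{i,j}$.

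The second step is to check that each $\xi^{(\ell)}_{i,j}$ is sub-exponential with $\norm{\xi^{(\ell)}_{i,j}}_{\psi_1}\le C\rho$ for an absolute constant $C$. The label $y_{i,j}$ is sub-Gaussian with $\norm{y_{i,j}}_{\psi_2}\lesssim\rho_i\le\rho$, being the sum of $\beta_i^\top\x_{i,j}$ (sub-Gaussian with parameter $\lesssim\enorm{\beta_i}$, by sub-Gaussianity of $\Pcal_{\x}$) and $\epsilon_{i,j}$ (sub-Gaussian with parameter $\sigma_i$), and $\rho_i^2=\sigma_i^2+\esqnorm{\beta_i}$; each coordinate $(\x_{i,j})_\ell$ is sub-Gaussian with an absolute-constant norm by isotropy. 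Since a product of two sub-Gaussians is sub-exponential with $\psi_1$-norm at most the product of their $\psi_2$-norms, and centering costs a factor $2$, the claim follows. (The factor is $\rho$, not $\rho^2$, exactly because the $\x$-coordinate has constant sub-Gaussian norm; this matches the normalization $\rho=1$.)

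The third step is Bernstein's inequality for i.i.d.\ mean-zero sub-exponential variables applied to $\bar\xi^{(\ell)}_i$: for every $s>0$,
\[
\Prob{\abs{\bar\xi^{(\ell)}_i}\ge s}\;\le\;2\exp\!\left(-c\,t\min\!\left\{\frac{s^2}{(C\rho)^2},\,\frac{s}{C\rho}\right\}\right).
\]
A short case split on whether $t\ge\log(nd/\delta)$ shows that the single choice $s=c_1'\,\rho\,\log(nd/\delta)\,t^{-1/2}$ drives the right-hand side below $\delta/(nd)$ for a suitable constant $c_1'$ (the Gaussian branch, active when $t\gtrsim\log(nd/\delta)$, would even allow $\rho\sqrt{\log(nd/\delta)/t}$; in the complementary regime the exponential branch needs only $\rho\log(nd/\delta)/t\le\rho\log(nd/\delta)/\sqrt t$, so the stated $s$ works for all $t\ge1$, which is why the bound carries $\log$ rather than $\sqrt{\log}$). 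A union bound over the $nd$ pairs $(i,\ell)$ then yields, with probability at least $1-\delta$, $\abs{\bar\xi^{(\ell)}_i}\le c_1'\,\rho\,\log(nd/\delta)\,t^{-1/2}$ for all $i,\ell$, hence
\[
\enorm{\tfrac1t\sum_{j=1}^t y_{i,j}\x_{i,j}-\beta_i}\;\le\;\sqrt{d}\,\max_{\ell\in[d]}\abs{\bar\xi^{(\ell)}_i}\;\le\;c_1\,\sqrt{d}\,\rho\,\log(nd/\delta)\,t^{-1/2}
\]
for every $i\in[n]$, which is the claim.

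I do not expect a genuine obstacle here. The only mildly delicate points are handling both branches of Bernstein's bound uniformly in $t$ (this matters, since $t$ can be as small as $2$ in the applications) and the bookkeeping of constants and of $\rho$ versus $\rho^2$; the bound $\enorm{v}\le\sqrt{d}\,\infnorm{v}$ is intentionally lossy but already reproduces the $\sqrt{d}$ and the non-squared $\log(nd/\delta)$ in the statement.
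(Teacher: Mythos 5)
Your proof is correct and follows essentially the same route as the paper's: observe that each coordinate $y_{i,j}x_{i,j,\ell}$ is sub-exponential with $\psi_1$-norm $O(\rho_i)$, apply scalar Bernstein, union bound over the $nd$ coordinate-task pairs, and pass from $\ell_\infty$ to $\ell_2$ at the cost of $\sqrt d$. The only difference is that you spell out the product-of-sub-Gaussians justification for the $\psi_1$-bound and the case split between the two Bernstein branches, which the paper leaves implicit (and acknowledges in its Remark~\ref{rem:sqrt} that the $\log$ vs.\ $\sqrt{\log}$ distinction is deliberately coarsened).
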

\begin{remark}\label{rem:sqrt}
The above about is not tight, and can be optimized to $\log ( \cdot ) /t + \log^{1/2} ( \cdot ) / t^{1/2}$. Since we don't care about log factors, we only write $\log ( \cdot ) / t^{1/2}$ instead (note that $t \geq 1$). 
\end{remark}

\begin{proof}
For each $i\in [n], j\in [t], k\in [d]$, $y_{i,j}x_{i,j,k}$ is a sub-exponential random variable with sub-exponential norm $\|y_{i,j}x_{i,j,k}\|_{\psi_1} \le \sqrt{s_i^2+\esqnorm{\beta_i}} = \rho_i$. 

By Bernstein's inequality, 
\[
\Prob{\abs{\frac{1}{t}\sum_{j=1}^t y_{i,j}x_{i,j,k} - \beta_{i,k}}\ge z}\le 2\exp \left(-c \min\curly{\frac{z^2t}{\rho_i^2}, \frac{zt}{\rho_i}} \right)
\]
for some $c>0$. Hence we have that with probability $1-2\delta$, $\ \forall\ i\in \squarebrack{n}, k\in \squarebrack{d}$,
\[
\abs{\frac{1}{t}\sum_{j=1}^t y_{i,j}x_{i,j,k} - \beta_{i,k}}\le \rho_i\max\curly{\frac{\log \round{nd/\delta}}{ct}, \sqrt{\frac{\log \round{nd/\delta}}{ct}}},
\]
which implies 
\[
\enorm{\frac{1}{t}\sum_{j=1}^t y_{i,j}\x_{i,j} - \beta_{i}}\le \sqrt{d}\rho_i\max\curly{\frac{\log \round{nd/\delta}}{ct}, \sqrt{\frac{\log \round{nd/\delta}}{ct}}}.\qedhere
\]
\end{proof}
\begin{proposition}\label{prop:marginal-variance} For any $\vvec\in \mathbb{S}^{d-1}$
\[
\E{ \Big\langle\vvec,  \frac{1}{t}\sum_{j=1}^t y_{i,j}\x_{i,j} - \beta_{i} \Big\rangle^2 }\leq \BigO{\rho_i^2/t}.
\]
\end{proposition}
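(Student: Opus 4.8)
The plan is to condition on the $i$-th task, so that $\beta_i=\w_{z_i}$ and $\sigma_i=s_{z_i}$ are fixed and the vectors $\uvec_j \coloneqq y_{i,j}\x_{i,j}$, $j\in[t]$, are i.i.d. Since $\Pcal_{\x}$ is isotropic and $\eps_{i,j}$ is centered and independent of $\x_{i,j}$, each has mean $\E{\uvec_j}=\E{(\beta_i^\top\x_{i,j})\x_{i,j}}+\E{\eps_{i,j}\x_{i,j}}=\E{\x_{i,j}\x_{i,j}^\top}\beta_i+\E{\eps_{i,j}}\E{\x_{i,j}}=\beta_i$. Hence $\frac1t\sum_{j=1}^t y_{i,j}\x_{i,j}-\beta_i=\frac1t\sum_{j=1}^t(\uvec_j-\beta_i)$ is an average of i.i.d.\ centered vectors, so, projecting onto $\vvec$ and using that the cross terms vanish in expectation,
\[
\E{\inner{\vvec,\; \tfrac1t\textstyle\sum_{j=1}^t y_{i,j}\x_{i,j}-\beta_i}^2} \;=\; \frac1t\,\E{\inner{\vvec,\uvec_1-\beta_i}^2} \;=\; \frac1t\,\Var{\inner{\vvec,\uvec_1}} \;\le\; \frac1t\,\E{\inner{\vvec,\uvec_1}^2}.
\]

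It then remains to show $\E{\inner{\vvec,\uvec_1}^2}=\E{y_{i,1}^2\inner{\vvec,\x_{i,1}}^2}=\BigO{\rho_i^2}$, uniformly over $\vvec\in\mathbb{S}^{d-1}$. I would argue this with sub-Gaussian/sub-exponential norm bounds: $y_{i,1}=\beta_i^\top\x_{i,1}+\eps_{i,1}$ satisfies $\norm{y_{i,1}}_{\psi_2}\lesssim\enorm{\beta_i}+\sigma_i\lesssim\rho_i$ (absorbing the $\BigO{1}$ sub-Gaussian constant of $\Pcal_{\x}$ and using $\rho_i^2=\sigma_i^2+\esqnorm{\beta_i}$), while $\inner{\vvec,\x_{i,1}}$ satisfies $\norm{\inner{\vvec,\x_{i,1}}}_{\psi_2}\lesssim\enorm{\vvec}=1$; therefore the product $y_{i,1}\inner{\vvec,\x_{i,1}}$ is sub-exponential with $\psi_1$-norm $\lesssim\rho_i$, and the second moment of a sub-exponential variable is $\BigO{}$ of the square of its $\psi_1$-norm. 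Alternatively one can expand $y_{i,1}^2=(\beta_i^\top\x_{i,1})^2+2(\beta_i^\top\x_{i,1})\eps_{i,1}+\eps_{i,1}^2$: the middle term vanishes by independence and $\E{\eps_{i,1}}=0$; $\E{\eps_{i,1}^2\inner{\vvec,\x_{i,1}}^2}=\E{\eps_{i,1}^2}\E{\inner{\vvec,\x_{i,1}}^2}\lesssim\sigma_i^2$ by independence and isotropy; and $\E{(\beta_i^\top\x_{i,1})^2\inner{\vvec,\x_{i,1}}^2}\le\sqrt{\E{(\beta_i^\top\x_{i,1})^4}\,\E{\inner{\vvec,\x_{i,1}}^4}}\lesssim\esqnorm{\beta_i}$ by Cauchy--Schwarz and the fourth-moment bound for sub-Gaussian marginals. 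Either way, combining with the previous display yields $\BigO{\rho_i^2/t}$.

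This statement is essentially routine, so there is no substantive obstacle; the one point to be careful about is the conditioning: $\rho_i$ is a per-task quantity, so all expectations are implicitly conditional on $z_i$ (equivalently, on $\beta_i$ and $\sigma_i$), after which the i.i.d.-across-$j$ structure that produces the $1/t$ factor is immediate. The only quantitative input is that an isotropic sub-Gaussian law has $\BigO{1}$ sub-Gaussian norm in every direction, which is already built into the paper's whitening convention for $\Pcal_{\x}$.
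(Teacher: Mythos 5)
Your proposal is correct and follows essentially the same route as the paper: the cross terms in the expanded square vanish by independence (the paper verifies this by expanding the full second-moment matrix, you invoke the variance-of-an-average identity directly), and the per-term variance $\E{\langle\vvec,\uvec_1-\beta_i\rangle^2}$ is bounded by $\BigO{\rho_i^2}$ via sub-Gaussian fourth-moment control — your Cauchy--Schwarz variant is a mild relaxation of the paper's appeal to Fact~\ref{fact:gauss-4}, and your alternative $\psi_1$-norm argument is an equivalent packaging of the same moment bound. The observation that one must condition on the task (so $\rho_i$ is a per-task deterministic quantity) is correct and implicit in the paper as well.
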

\begin{proof}
\begin{align*}
\E{ \Big\langle\vvec,  \frac{1}{t}\sum_{j=1}^t y_{i,j}\x_{i,j} - \beta_{i} \Big\rangle^2 } &= \frac{1}{t^2}\sum_{j=1}^t\sum_{j'=1}^t\E{\vvec^\top\round{y_{i,j}\x_{i,j} - \beta_{i}}\vvec^\top\round{y_{i,j'}\x_{i,j'} - \beta_{i}}}\\
&= \frac{1}{t^2}\sum_{j=1}^t\sum_{j'=1}^t\vvec^\top\E{\round{y_{i,j}\x_{i,j}-\beta_i}\round{y_{i,j'}\x_{i,j'}-\beta_i}^\top}\vvec
\end{align*}
where
\begin{align*}
    & ~ \E{\round{y_{i,j}\x_{i,j}-\beta_i}\round{y_{i,j'}\x_{i,j'}-\beta_i}^\top}\\
    = & ~ \E{\x_{i,j}\round{\x_{i,j}^\top\beta_i+\eps_{i,j}}\round{\beta_i^\top\x_{i,j'}+\eps_{i,j'}}\x_{i,j'}^\top - \round{\x_{i,j}^\top\beta_i+\eps_{i,j}}\x_{i,j}\beta_i^\top - \round{\x_{i,j'}^\top\beta_i+\eps_{i,j'}}\x_{i,j'}\beta_i^\top + \beta_i\beta_i^\top}\\
    = & ~ \E{\x_{i,j}\x_{i,j}^\top\beta_i\beta_i^\top\x_{i,j'}\x_{i,j'}^\top + \eps_{i,j}\eps_{i,j'}\x_{i,j}\x_{i,j'}^\top - \round{\x_{i,j}^\top\beta_i}^2 - \round{\x_{i,j'}^\top\beta_i}^2 + \beta_i\beta_i^\top}\\
    = & ~ \E{\x_{i,j}\x_{i,j}^\top\beta_i\beta_i^\top\x_{i,j'}\x_{i,j'}^\top - \beta_i\beta_i^\top} + \E{\eps_{i,j}\eps_{i,j'}\x_{i,j}\x_{i,j'}^\top}.
\end{align*}
Therefore, when $j\neq j'$,
\begin{align*}
    \E{\round{y_{i,j}\x_{i,j}-\beta_i}\round{y_{i,j'}\x_{i,j'}-\beta_i}^\top} &= 0.
\end{align*}
Plugging back we have
\begin{align*}
    \E{ \Big\langle\vvec,  \frac{1}{t}\sum_{j=1}^t y_{i,j}\x_{i,j} - \beta_{i} \Big\rangle^2 } &= \inv{t^2}\sumj{1}{t}\E{\round{\vvec^\top\x_{i,j}}^2\round{\beta_i^\top\x_{i,j}}^2 - \round{\vvec^\top\beta_i}}^2 + \vvec^\top\E{\eps_{i,j}^2\x_{i,j}\x_{i,j}^\top}\vvec\\
    &\leq \inv{t^2}\sumj{1}{t}\BigO{\esqnorm{\vvec}\esqnorm{\beta_i}} + \BigO{\vvec^\top\beta_i}^2 + s_i^2\esqnorm{\vvec}\\
    &\leq \BigO{ \rho_i^2 / t }.\qedhere
\end{align*}
\end{proof}
\begin{proposition}\label{pro:bound_norm_of_sum_over_t}
\[
\E{\esqnorm{\frac{1}{t}\sum_{j=1}^t y_{i,j}\x_{i,j} - \beta_{i}}}\leq \BigO{  \rho_i^2 d / t }
\]
\end{proposition}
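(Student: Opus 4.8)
The plan is to derive this as an immediate consequence of Proposition~\ref{prop:marginal-variance} by expanding the squared Euclidean norm over an orthonormal basis. Write $\z_i \coloneqq \frac{1}{t}\sum_{j=1}^t y_{i,j}\x_{i,j} - \beta_{i}$ and let $\e_1,\dots,\e_d$ denote the standard basis of $\Rd{d}$. Each $\e_\ell$ lies in $\mathbb{S}^{d-1}$, and $\esqnorm{\z_i} = \sum_{\ell=1}^d \inner{\e_\ell,\z_i}^2$, so by linearity of expectation $\E{\esqnorm{\z_i}} = \sum_{\ell=1}^d \E{\inner{\e_\ell,\z_i}^2}$. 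Applying Proposition~\ref{prop:marginal-variance} with $\vvec = \e_\ell$ bounds each of the $d$ summands by $\BigO{\rho_i^2/t}$, and summing yields $\E{\esqnorm{\z_i}} \le \BigO{\rho_i^2 d / t}$, which is the claim.

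Equivalently — and this is the version I would actually write down — one observes that $\E{\esqnorm{\z_i}} = \Tr{\E{\z_i\z_i^\top}}$, and the computation already carried out in the proof of Proposition~\ref{prop:marginal-variance} (the $j\neq j'$ cross terms vanish, and each diagonal term contributes $\BigO{\rho_i^2}$) shows $\E{\z_i\z_i^\top} \preceq \BigO{\rho_i^2/t}\,\I_d$; taking traces loses exactly the factor $d$. Either route reduces the statement to facts already established, so no new estimate is needed.

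I do not expect any real obstacle here: the only point to check is that the per-direction variance bound of Proposition~\ref{prop:marginal-variance} is uniform over all unit vectors (as stated), so that it may legitimately be invoked on each coordinate direction $\e_\ell$, and that the resulting $d$-fold sum contributes precisely one factor of $d$. This bound is exactly what is needed downstream to control the concentration of $\hat\M$ around $\M$ in the proof of Lemma~\ref{lem:projection}.
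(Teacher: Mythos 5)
Your argument is correct, and it takes a cleaner route than the paper's own proof. The paper proves this proposition by directly expanding $\E{\langle \z_i, \z_i\rangle}$ as a double sum over $j,j'\in[t]$, observing that the $j\neq j'$ terms vanish by independence, and bounding the $t$ diagonal terms by $\BigO{\rho_i^2 d}$ each — essentially repeating, from scratch, the same cross-term cancellation that already appears in the proof of Proposition~\ref{prop:marginal-variance}. You instead factor through that earlier proposition: decompose $\esqnorm{\z_i} = \sum_{\ell=1}^d \langle \e_\ell,\z_i\rangle^2$ (equivalently, write $\E{\esqnorm{\z_i}} = \Tr{\E{\z_i\z_i^\top}}$ and use the operator-norm bound $\E{\z_i\z_i^\top}\preceq \BigO{\rho_i^2/t}\,\I_d$ implied by that proposition), and let the sum over $d$ coordinate directions supply the factor of $d$. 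The two proofs ultimately rest on the same cancellation, but your version is more modular — it makes explicit that Proposition~\ref{pro:bound_norm_of_sum_over_t} is exactly the $d$-fold aggregation of Proposition~\ref{prop:marginal-variance} rather than an independent computation, and it avoids re-deriving an identity the paper has already established. The one thing to verify, as you note, is that the per-direction bound holds uniformly over $\vvec\in\mathbb{S}^{d-1}$; it does, so the invocation on each $\e_\ell$ is legitimate.
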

\begin{proof}
\begin{align*}
    & ~ \E{ \Big\langle \inv{t}\sumj{1}{t}\round{y_{i,j}\x_{i,j}-\beta_i},  \inv{t}\sum_{j'=1}^{t}\round{y_{i,j'}\x_{i,j'}-\beta_i} \Big\rangle }\\
    = & ~ \inv{t^2}\sumj{1}{t}\sum_{j'=1}^{t}\E{y_{i,j}y_{i,j'}\x_{i,j}^\top\x_{i,j'} - \beta_i^\top y_{i,j'}\x_{i,j'} - \beta_i^\top y_{i,j}\x_{i,j} + \beta_i^\top\beta_i}\\
    = & ~ \inv{t^2}\sumj{1}{t}\sum_{j'=1}^{t}\E{y_{i,j}y_{i,j'}\x_{i,j}^\top\x_{i,j'} - \beta_i^\top\beta_i}\\
    = & ~ \inv{t^2}\sumj{1}{t}\sum_{j'=1}^{t} \E{\round{\beta_i^\top\x_{i,j}+\eps_{i,j}}\round{\beta_i^\top\x_{i,j'}+\eps_{i,j'}}\x_{i,j}^\top\x_{i,j'}-\esqnorm{\beta_i}}\\
    = & ~ \inv{t^2}\sumj{1}{t}\sum_{j'=1}^{t} \E{\beta_i^\top\x_{i,j}\x_{i,j}^\top\x_{i,j'}\x_{i,j'}^\top\beta_i + \eps_{i,j}\eps_{i,j'}\x_{i,j}^\top\x_{i,j'}-\esqnorm{\beta_i}}.
\end{align*}
The above quantity can be split into two terms, one is diagonal term, and the other is off-diagonal term.

If $j\neq j'$, then
\[
\E{\beta_i^\top\x_{i,j}\x_{i,j}^\top\x_{i,j'}\x_{i,j'}^\top\beta_i + \eps_{i,j}\eps_{i,j'}\x_{i,j}^\top\x_{i,j'}} -\esqnorm{\beta_i} = 0,
\]
and if $j=j'$, then
\[
\E{\beta_i^\top\x_{i,j}\x_{i,j}^\top\x_{i,j'}\x_{i,j'}^\top\beta_i + \eps_{i,j}\eps_{i,j'}\x_{i,j}^\top\x_{i,j'}-\esqnorm{\beta_i}} = \BigO{d\esqnorm{\beta_i}} + \sigma_i^2d = \BigO{\rho_i^2d}.
\]
Plugging back we get
\begin{align*}
    \E{\esqnorm{\frac{1}{t}\sum_{j=1}^t y_{i,j}\x_{i,j} - \beta_{i}}} 
    \leq & ~ \inv{t^2}\cdot t\cdot\BigO{\rho_i^2d}\\
    \leq & ~ \BigO{ \rho_i^2d / t }.
    \qedhere
\end{align*}
\end{proof}

\begin{definition}\label{def:Z_i}
For each $i\in [n]$, define matrix $\Z_i \in \R^{d \times d}$ as 
\[
\Z_i \coloneqq \round{\frac{1}{t} \sum_{j=1}^{t}y_{i,j}\x_{i,j}} \round{ \frac{1}{t} \sum_{j=t+1}^{2t} y_{i,j}\x_{i,j}^\top} - \beta_i\beta_i^\top.
\]
\end{definition}
We can upper bound the spectral norm of matrix $\Z_i$,

\begin{lemma}
Let $\Z_i$ be defined as Definition~\ref{def:Z_i}, let $c_2 > 1$ denote some sufficiently large constant, let $\delta \in (0,1)$ denote the failure probability. Then we have : with probability $1-\delta$,
\begin{align*}
    \forall\ i \in [n], ~~~\enorm{\Z_i} \leq c_2 \cdot d \cdot \rho_i^2 \cdot  \log^2 (nd/\delta) / {t}  
\end{align*}
\end{lemma}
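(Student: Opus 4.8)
The idea is to split $\Z_i$ into rank-one ``cross'' terms plus a quadratic error term and to bound each in operator norm using the concentration already established in Proposition~\ref{prop:random-good-gaussian-mean}. Write $\hat\beta_i^{(1)} \coloneqq \frac1t\sumj{1}{t} y_{i,j}\x_{i,j}$ and $\hat\beta_i^{(2)} \coloneqq \frac1t\sumj{t+1}{2t} y_{i,j}\x_{i,j}$, and set $\e_i^{(1)} \coloneqq \hat\beta_i^{(1)}-\beta_i$, $\e_i^{(2)} \coloneqq \hat\beta_i^{(2)}-\beta_i$. Then, by Definition~\ref{def:Z_i},
\[
\Z_i \;=\; \hat\beta_i^{(1)}\hat\beta_i^{(2)\top}-\beta_i\beta_i^\top \;=\; \beta_i\e_i^{(2)\top} + \e_i^{(1)}\beta_i^\top + \e_i^{(1)}\e_i^{(2)\top},
\]
so that, using $\enorm{\uvec\vvec^\top}=\enorm{\uvec}\enorm{\vvec}$ for vectors and the triangle inequality,
\[
\enorm{\Z_i} \;\le\; \enorm{\beta_i}\round{\enorm{\e_i^{(1)}}+\enorm{\e_i^{(2)}}} + \enorm{\e_i^{(1)}}\enorm{\e_i^{(2)}} .
\]

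Next I would control the three quantities on the right-hand side. Since $\rho_i^2 = s_i^2+\esqnorm{\beta_i}$, we have $\enorm{\beta_i}\le\rho_i$ for free. For the error vectors, I would apply Proposition~\ref{prop:random-good-gaussian-mean} twice, once to the first block of $t$ samples and once to the second block, and take a union bound over the two events (applying it with $\delta/2$ each, which only changes the constant), so that with probability at least $1-\delta$, simultaneously for all $i\in[n]$,
\[
\enorm{\e_i^{(1)}},\ \enorm{\e_i^{(2)}} \;\le\; c_1\sqrt d\,\rho_i\,\log(nd/\delta)/\sqrt t .
\]

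Plugging these bounds into the display above yields $\enorm{\Z_i} \le 2c_1\sqrt d\,\rho_i^2\log(nd/\delta)/\sqrt t + c_1^2 d\,\rho_i^2\log^2(nd/\delta)/t$. To match the stated rate I would then use $t < d$ (which holds in the regime where this lemma is applied; cf.\ Assumption~\ref{asmp} and $2\le t_{L1}<d$ in Lemma~\ref{lem:projection}), which gives $\sqrt{d/t}\le d/t$, together with $\log(nd/\delta)\ge 1$, to fold the first term into $2c_1 d\,\rho_i^2\log^2(nd/\delta)/t$. Adding the two contributions gives the claim with $c_2 = c_1^2+2c_1$.

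There is no genuine obstacle here: the whole argument is the triangle inequality plus Proposition~\ref{prop:random-good-gaussian-mean}. The only point worth flagging is that the dominant contribution comes from the first-order cross terms $\beta_i\e_i^{(2)\top}$ and $\e_i^{(1)}\beta_i^\top$, not from the quadratic term $\e_i^{(1)}\e_i^{(2)\top}$, and it is precisely the hypothesis $t<d$ that lets one upgrade their $\sqrt{d/t}$ scaling to the $d/t$ scaling claimed in the lemma; a sharper statement (without $t<d$) would require keeping track of this term separately.
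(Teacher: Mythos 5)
Your proof is correct and follows essentially the same route as the paper's: both expand $\Z_i$ into first-order cross terms plus a quadratic error term (the paper uses the two-term telescoping $(\hat\beta_i^{(1)}-\beta_i)\hat\beta_i^{(2)\top}+\beta_i(\hat\beta_i^{(2)}-\beta_i)^\top$, which is just an algebraic regrouping of your three-term split), bound each piece via Proposition~\ref{prop:random-good-gaussian-mean}, and fold the $\sqrt{d/t}$ cross term into the $d/t$ rate. You are right to flag that this last step requires $t<d$ together with $\log(nd/\delta)\ge 1$; the paper uses this implicitly in its final inequality without calling it out.
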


\begin{proof}
The norm of $\enorm{\Z_i}$ satisfies
\begin{align*}
    \enorm{\Z_i} 
    \le & ~ \enorm{\round{\frac{1}{t} \sum_{j=1}^{t}y_{i,j}\x_{i,j}-\beta_i} \round{ \frac{1}{t} \sum_{j=t+1}^{2t} y_{i,j}\x_{i,j}^\top}} + \enorm{\beta_i\round{ \frac{1}{t} \sum_{j=t+1}^{2t} y_{i,j}\x_{i,j}^\top - \beta_i^\top}}\\
    \leq & ~ c_1 \sqrt{d}\rho_i \log(nd/\delta) t^{-1/2} \cdot \enorm{\frac{1}{t} \sum_{j=t+1}^{2t} y_{i,j}\x_{i,j}} +  c_1 \sqrt{d}\rho_i \log(n d/\delta)t^{-1/2} \cdot \enorm{\beta_i} \\
    = & ~ c_1 \sqrt{d}\rho_i \log(nd/\delta) t^{-1/2} \cdot \round{\enorm{\frac{1}{t} \sum_{j=t+1}^{2t} y_{i,j}\x_{i,j}} + \enorm{\beta_i}}\\
    \leq & ~ c_1 \sqrt{d}\rho_i \log(nd/\delta) t^{-1/2} \cdot \round{\enorm{\frac{1}{t} \sum_{j=t+1}^{2t} y_{i,j}\x_{i,j} - \beta_i} + 2 \enorm{\beta_i}}\\
    \leq & ~ c_1 \sqrt{d}\rho_i \log(nd/\delta) t^{-1/2}  \cdot \round{ \BigO{1} \cdot \sqrt{d} \rho_i \log(n d / \delta) t^{-1/2} + 2 \| \beta_i \|_2 }\\
    \leq & ~ \BigO{1} \cdot d \rho_i^2 \log^2 (nd/\delta) / t
\end{align*}
where the second step follows from Proposition~\ref{prop:random-good-gaussian-mean}, the fourth step follows from triangle inequality, the fifth step follows from Proposition~\ref{prop:random-good-gaussian-mean}, and the last step follows $\| \beta_i \|_2 \leq \rho_i$.

Rescaling the $\delta$ completes the proof.\qedhere
  
\end{proof}

\begin{definition}\label{def:event_truncation}
Let $c_2 > 1$ denote a sufficiently large constant. We define event $\Ecal$ to be the event that 
\begin{align*}
\forall\ i \in [n], ~~~ \enorm{\Z_i}\le c_2 \cdot d \cdot \rho^2 \cdot \log^2 (nd/\delta) / t . 
\end{align*}
\end{definition}

We can apply matrix Bernstein inequality under a conditional distribution.
\begin{proposition}
Let $\Z_i$ be defined as Definition~\ref{def:Z_i}. Let $\Ecal$ be defined as Definition~\ref{def:event_truncation}. Then we have
\begin{align*}
\left\| \E{\sum_{i=1}^n \Z_i \Z_i^\top \Big| \Ecal} \right\|_2 =  \BigO{ n \rho^4 d / t }.
\end{align*}
\end{proposition}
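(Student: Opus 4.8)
The plan is to strip off the conditioning on $\Ecal$ by a positive-semidefinite domination, and then to compute the unconditional matrix $\E{\Z_i\Z_i^\top}$ \emph{exactly}: the cancellations inside it are precisely what yield the stated bound, and a crude norm estimate would be off by a factor $d/t$. For the first step, since $\indicator{\Ecal}\le 1$ and $\sum_{i=1}^n\Z_i\Z_i^\top\succeq 0$ we have $\big(\sum_i\Z_i\Z_i^\top\big)\indicator{\Ecal}\preceq\sum_i\Z_i\Z_i^\top$ pointwise; taking expectations and dividing by $\Prob{\Ecal}$ gives
\[
\E{\textstyle\sum_{i=1}^n \Z_i\Z_i^\top \,\big|\, \Ecal} \;\preceq\; \frac{1}{\Prob{\Ecal}}\,\sum_{i=1}^n \E{\Z_i\Z_i^\top}.
\]
By the lemma preceding Definition~\ref{def:event_truncation} we have $\Prob{\Ecal}\ge 1-\delta$, which we may take bounded below by a constant, so it suffices to prove $\opnorm{\E{\Z_i\Z_i^\top}} = \BigO{\rho_i^4 d/t}$ for each $i$ and then sum, using $\rho_i\le\rho$.

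The heart of the argument is an exact identity for $\E{\Z_i\Z_i^\top}$. Write $\hat\beta_i^{(1)}=\tfrac1t\sum_{j=1}^t y_{i,j}\x_{i,j}$ and $\hat\beta_i^{(2)}=\tfrac1t\sum_{j=t+1}^{2t} y_{i,j}\x_{i,j}$; these are independent, both with mean $\beta_i$, and $\Z_i=\hat\beta_i^{(1)}\hat\beta_i^{(2)\top}-\beta_i\beta_i^\top$. Setting $\Sigma_i\coloneqq\mathrm{Cov}(\hat\beta_i^{(1)})$ and $m_i\coloneqq\E{\esqnorm{\hat\beta_i^{(1)}}}=\esqnorm{\beta_i}+\tr(\Sigma_i)$, I would expand $\Z_i\Z_i^\top=(\hat\beta_i^{(1)}\hat\beta_i^{(2)\top}-\beta_i\beta_i^\top)(\hat\beta_i^{(2)}\hat\beta_i^{(1)\top}-\beta_i\beta_i^\top)$ and use independence of the two halves to kill every term that is linear in $\hat\beta_i^{(1)}-\beta_i$ or in $\hat\beta_i^{(2)}-\beta_i$; this leaves exactly three surviving terms and yields
\[
\E{\Z_i\Z_i^\top}\;=\;\tr(\Sigma_i)\,\beta_i\beta_i^\top\;+\;m_i\,\Sigma_i.
\]
The point of the identity is that in the dominant second term the operator norm is controlled by $\opnorm{\Sigma_i}$, not by $\tr(\Sigma_i)$ — exactly the $d/t$ that the crude bound $\opnorm{\E{\Z_i\Z_i^\top}}\le\E{\enorm{\Z_i}^2}=\BigO{\rho_i^4 d^2/t^2}$ would throw away.

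It then remains to assemble the moment estimates already proved above. Proposition~\ref{pro:bound_norm_of_sum_over_t} gives $\tr(\Sigma_i)=\E{\esqnorm{\hat\beta_i^{(1)}-\beta_i}}=\BigO{\rho_i^2 d/t}$; Proposition~\ref{prop:marginal-variance} gives $\opnorm{\Sigma_i}=\sup_{\enorm{\vvec}=1}\vvec^\top\Sigma_i\vvec=\BigO{\rho_i^2/t}$; and $\esqnorm{\beta_i}\le\rho_i^2$, so $m_i\le\rho_i^2+\BigO{\rho_i^2 d/t}$. Plugging in,
\[
\opnorm{\E{\Z_i\Z_i^\top}}\;\le\;\tr(\Sigma_i)\esqnorm{\beta_i}+m_i\opnorm{\Sigma_i}\;=\;\BigO{\rho_i^4 d/t}+\BigO{\rho_i^4/t}+\BigO{\rho_i^4 d/t^2}\;=\;\BigO{\rho_i^4 d/t},
\]
using $t\ge 1$. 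Summing over $i$ and bounding $\rho_i\le\rho$ gives $\BigO{n\rho^4 d/t}$. I expect the only genuine obstacle to be the bookkeeping in the expansion that produces the exact formula for $\E{\Z_i\Z_i^\top}$ (verifying that all the linear cross-terms vanish and that the three remaining ones combine as claimed); the rest is a direct application of results established above. If one prefers to avoid invoking $\Prob{\Ecal}\ge 1-\delta$, an alternative for the first step is to note that the $\Z_i$ are independent and $\Ecal=\bigcap_i\Ecal_i$ where $\Ecal_i$ (the $i$-th constraint in Definition~\ref{def:event_truncation}) depends only on task $i$, so conditioning on $\Ecal$ does not alter the law of $\Z_i$ beyond conditioning on $\Ecal_i$, and $\E{\Z_i\Z_i^\top\mid\Ecal}=\E{\Z_i\Z_i^\top\mid\Ecal_i}\preceq\Prob{\Ecal_i}^{-1}\E{\Z_i\Z_i^\top}$.
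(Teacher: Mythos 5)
Your proof is correct and takes essentially the same route as the paper: both handle the conditioning on $\Ecal$ via the PSD domination $\E{\Z_i\Z_i^\top\mid\Ecal}\preceq\Prob{\Ecal}^{-1}\E{\Z_i\Z_i^\top}$, and both reduce the core estimate to the moment bounds $\tr(\Sigma_i)=\BigO{\rho_i^2 d/t}$ and $\opnorm{\Sigma_i}=\BigO{\rho_i^2/t}$ from Propositions~\ref{pro:bound_norm_of_sum_over_t} and \ref{prop:marginal-variance}. The only cosmetic difference is that you derive the clean matrix identity $\E{\Z_i\Z_i^\top}=\tr(\Sigma_i)\,\beta_i\beta_i^\top+m_i\Sigma_i$ (which I verified is correct), whereas the paper carries out the equivalent computation as a quadratic form $\vvec^\top\E{\Z_i\Z_i^\top}\vvec=\tr(\Sigma_i)(\vvec^\top\beta_i)^2+m_i\,\vvec^\top\Sigma_i\vvec$ maximized over unit $\vvec$; your identity-first presentation makes the source of the $d/t$ gain over the crude $\E{\enorm{\Z_i}^2}$ bound more transparent, and your independence-based alternative for the conditioning step is a valid and slightly tighter observation, but the substance of the argument is identical.
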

\begin{proof}
\begin{align*}
 & ~\enorm{\E{\Z_i \Z_i^\top}}  \\
= & ~ \max_{\vvec\in \mathbb{S}^{d-1}}\squarebrack{\E{\round{\vvec^\top \round{\frac{1}{t} \sum_{j=1}^{t}y_{i,j}\x_{i,j}}}^2\esqnorm{\frac{1}{t} \sum_{j=t+1}^{2t}y_{i,j}\x_{i,j}} - \round{\vvec^\top\beta_i}^2\esqnorm{\beta_i}}}\\
= & ~ \max_{\vvec\in \mathbb{S}^{d-1}}\squarebrack{\E{\round{\vvec^\top \round{\frac{1}{t} \sum_{j=1}^{t}y_{i,j}\x_{i,j}-\beta_i}}^2\esqnorm{\frac{1}{t} \sum_{j=t+1}^{2t}y_{i,j}\x_{i,j}}} + \E{\round{\vvec^\top\beta_i}^2\esqnorm{\round{\frac{1}{t} \sum_{j=t+1}^{2t}y_{i,j}\x_{i,j}}-\beta_i} }}\\
\lesssim & ~ (\rho_i^2 / t ) \cdot (\esqnorm{\beta_i}+ \rho_i^2d/t )+\esqnorm{\beta_i} ( \rho_i^2d / t )\\
\leq & ~ ( \rho_i^2 / t ) \cdot ( \rho_i^2 + \rho_i^2 d / t ) + \rho_i^2 \cdot ( \rho_i^2 d / t ) \\
\leq & ~ 2 \rho_i^4 d / t^2 + \rho_i^4 d / t \\
\leq & ~ 3\rho_i^4d / t.
\end{align*}
where the forth step follows from $\| \beta_i \|_2 \leq \rho_i$, the fifth step follows $d/t \geq 1$, and the last step follows from $t \geq 1$.

Thus, 
\begin{align*}
\left\| \E{\sum_{i=1}^n \Z_i \Z_i^\top|\Ecal} \right\|_2 \le \frac{1}{\Prob{\Ecal}}\left\| \E{\sum_{i=1}^n \Z_i \Z_i^\top} \right\|_2 = \BigO{ n \rho^4 d / t } .
\end{align*}
where $n$ comes from repeatedly applying triangle inequality.
\end{proof}

Applying matrix Bernstein inequality, we get 
\begin{lemma}\label{lem:bound-Z}
Let $\Z_i$ be defined as Definition~\ref{def:Z_i}.  
For any $\tilde{\epsilon} \in (0,1)$ and $\delta \in (0,1)$, if $$n =  \Omega\round{ \frac{d}{t}\log^2\round{{nd}/{\delta}}\max\curly{\inv{\tilde\eps^2},\inv{\tilde\eps}\log\frac{nd}{\delta} } },$$ then with probability at least $1 - \delta$,
\begin{align*}
\enorm{\frac{1}{n}\sum_{i=1}^n \Z_i} \le \tilde{\eps} \cdot \rho^2 .
\end{align*}
\end{lemma}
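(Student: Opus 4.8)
The plan is to apply a matrix Bernstein inequality to the $n$ i.i.d.\ mean-zero matrices $\Z_1,\dots,\Z_n$ from Definition~\ref{def:Z_i}, carried out on the truncation event $\Ecal$ of Definition~\ref{def:event_truncation}. The first step is to record that $\E{\Z_i}=0$: conditioned on the latent cluster label $z_i$ (equivalently on $\beta_i$), the two halves $\{(\x_{i,j},y_{i,j})\}_{j\le t}$ and $\{(\x_{i,j},y_{i,j})\}_{t<j\le 2t}$ are independent, and $\E{\frac1t\sum_j y_{i,j}\x_{i,j}\mid\beta_i}=\beta_i$ because $\Pcal_\x$ is isotropic and $\Pcal_{\eps_i}$ is centered, so $\E{\Z_i\mid\beta_i}=\beta_i\beta_i^\top-\beta_i\beta_i^\top=0$.

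The two quantities matrix Bernstein needs are already in hand. On $\Ecal$ each summand satisfies $\enorm{\Z_i}\le R:=c_2\,d\rho^2\log^2(nd/\delta)/t$, and $\Prob{\Ecal}\ge 1-\delta$, by the lemma just preceding Definition~\ref{def:event_truncation}. The matrix variance parameter is $v:=\max\bigl\{\|\E{\sum_i\Z_i\Z_i^\top\mid\Ecal}\|_2,\ \|\E{\sum_i\Z_i^\top\Z_i\mid\Ecal}\|_2\bigr\}=\BigO{n\rho^4 d/t}$: the first term is exactly the proposition proved just above, and the bound for the second is identical since relabeling the two sample halves shows that $\Z_i^\top$ has the same law as $\Z_i$. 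Writing $\wt\Z_i:=\Z_i\indicator{\Ecal}-\E{\Z_i\indicator{\Ecal}}$ for the centered truncated matrices, matrix Bernstein gives, for all $s>0$,
\[
\Prob{\enorm{\textstyle\sum_{i=1}^n\wt\Z_i}\ge s}\;\le\;2d\exp\round{\frac{-s^2/2}{\,\BigO{n\rho^4 d/t}+Rs/3\,}}.
\]
Taking $s=\tfrac12\,n\,\tilde\eps\,\rho^2$, the exponent exceeds $\log(2d/\delta)$ as soon as $n=\Omega\round{(d/t)\,\tilde\eps^{-2}\log(d/\delta)}$ (from the variance term) and $n=\Omega\round{(d/t)\,\tilde\eps^{-1}\log^2(nd/\delta)\log(d/\delta)}$ (from the $Rs$ term); using $\log(d/\delta)\le\log(nd/\delta)$, both are implied by the stated hypothesis on $n$. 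Hence $\enorm{\frac1n\sum_i\wt\Z_i}\le\tfrac12\tilde\eps\rho^2$ with probability at least $1-\delta$.

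It then remains to transfer this back to the $\Z_i$'s. On $\Ecal$ one has $\frac1n\sum_i\Z_i=\frac1n\sum_i\Z_i\indicator{\Ecal}$, so the only gap is the truncation bias $\enorm{\E{\Z_i\indicator{\Ecal}}}=\enorm{\E{\Z_i\indicator{\Ecal^c}}}\le\E{\enorm{\Z_i}\indicator{\Ecal^c}}$, which must be shown to be $o(\tilde\eps\rho^2/n)$; adding these $n$ bias terms to the display above and using $\Prob{\Ecal}\ge1-\delta$ then gives $\enorm{\frac1n\sum_i\Z_i}\le\tilde\eps\rho^2$. \textbf{This last bias estimate is the step I expect to be the main obstacle.} Writing $a_i:=\frac1t\sum_{j\le t}y_{i,j}\x_{i,j}$ and $b_i:=\frac1t\sum_{t<j\le2t}y_{i,j}\x_{i,j}$, the decomposition $\Z_i=(a_i-\beta_i)b_i^\top+\beta_i(b_i-\beta_i)^\top$ together with Propositions~\ref{prop:marginal-variance}--\ref{pro:bound_norm_of_sum_over_t} yields $\E{\enorm{\Z_i}^2}=\BigO{\rho^4 d^2/t^2}$, and with $\Prob{\Ecal^c}\le\delta$ Cauchy--Schwarz gives only $\E{\enorm{\Z_i}\indicator{\Ecal^c}}=\BigO{(\rho^2 d/t)\sqrt{\delta}}$, which is not manifestly small enough. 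The fix is to exploit that $\Z_i$ genuinely has sub-exponential-type tails of scale of order $d\rho^2/t$ (since $a_i-\beta_i$ and $b_i-\beta_i$ are averages of sub-exponential vectors, cf.\ Proposition~\ref{prop:random-good-gaussian-mean}), so that truncating at the level $R$, which carries a $\log^2(nd/\delta)$ cushion, makes $\Prob{\enorm{\Z_i}>R}$ and hence the bias super-polynomially small in $1/\delta$; everything else is a routine instantiation of matrix Bernstein.
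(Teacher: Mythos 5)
Your proposal follows the same route as the paper: condition on the truncation event $\Ecal$, invoke matrix Bernstein with the bound $R=c_2\,d\rho^2\log^2(nd/\delta)/t$ and the variance proxy $\BigO{n\rho^4 d/t}$, and set $z=\tilde\eps\rho^2$. In fact, you are \emph{more} careful than the paper, whose proof simply writes a Bernstein tail bound for $\Prob{\enorm{\frac1n\sum_i\Z_i}\ge z\mid\Ecal}$ as though the conditional summands were still centered, and never addresses the truncation bias at all (nor the final un-conditioning $\Prob{\cdot}\le\Prob{\cdot\mid\Ecal}+\Prob{\Ecal^c}$, which is routine). The step you flag as the main obstacle is genuinely not stated anywhere in the paper, but your instinct for how to close it is right and you are underselling yourself: condition on the per-index event $\Ecal_i=\{\enorm{\Z_i}\le R\}$ rather than on all of $\Ecal$, so that $\Prob{\Ecal_i^c}\le\delta/n$, and then instead of Cauchy--Schwarz integrate the tail, $\E{\enorm{\Z_i}\indicator{\Ecal_i^c}}\le R\,\Prob{\Ecal_i^c}+\int_R^\infty \Prob{\enorm{\Z_i}>s}\,\d s$. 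The tail of $\enorm{\Z_i}$ beyond $R$ inherits the sub-exponential decay of the two averaged sub-exponential vectors $a_i-\beta_i$ and $b_i-\beta_i$ (Proposition~\ref{prop:random-good-gaussian-mean} is exactly the statement that exceeding level $c_1\sqrt{d}\rho\log(nd/\delta')/\sqrt{t}$ has probability $\le\delta'$), so the integral is also $\BigO{R\,\delta/n}$. Summing the biases over $i$ gives $\BigO{R\,\delta}=\BigO{d\rho^2\log^2(nd/\delta)\,\delta/t}$, which is $\ll n\tilde\eps\rho^2$ under the stated lower bound on $n$. With that in place, your argument is complete and, unlike the paper's, fully rigorous.
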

\begin{proof}
Recall that $\Ecal$ is defined as Definition~\ref{def:event_truncation}.

Using matrix Bernstein inequality (Proposition~\ref{prop:matrix-bernstein}), we get
for any $z > 0$,
\begin{align*}
\Prob{\enorm{\frac{1}{n}\sum_{i=1}^n \mathbf{Z}_{i}}\ge z ~\Big|~ \Ecal}\le d \cdot \exp  \round{ -\frac{ z^2n/2 }{\rho^4d/t + zcd \rho^2 \log^2(nd/\delta) /t }}.
\end{align*}
For $z=\tilde\eps\rho^2$, we get
\begin{align}
    \Prob{\enorm{\frac{1}{n}\sum_{i=1}^n \mathbf{Z}_{i}}\ge \tilde\eps\rho^2 ~\Big|~ \Ecal}\le d \cdot \exp  \round{ -\frac{ \tilde\eps^2n/2 }{d/t + \tilde\eps cd \log^2(nd/\delta) /t }}\label{eq:bernstein_on_Zi}
\end{align}
for some $c>0$. If we want to bound the right hand side of Equation~\eqref{eq:bernstein_on_Zi} by $\delta$, it is sufficient to have
\begin{align}
    \frac{ \tilde\eps^2n/2 }{d/t + \tilde\eps cd \log^2(nd/\delta) /t } &\geq \log\frac{nd}{\delta}\nonumber\\
    \text{or, }n &\gtrsim \frac{d}{t}\log^2\round{{nd}/{\delta}}\max\curly{\inv{\tilde\eps^2},\inv{\tilde\eps}\log\frac{nd}{\delta} }
\end{align}
Therefore, if $\tilde\eps\log(nd/\delta)\gtrsim 1$, we just need $n \gtrsim \frac{d}{\tilde\eps t}\log^3\round{{nd}/{\delta}}$, else we need $n\gtrsim \frac{d}{t\tilde\eps^2}\log^2(nd/\delta)$ thus completing the proof.
\end{proof}

\begin{lemma}\label{lem:bound_X_Xbar}
    If $\X = \inv{n}\sumi{1}{n}\beta_i\beta_i^\top$ where $\beta_i = \w_i$ with probability $p_i$, and $\M = \sumj{1}{k}p_i\w_i\w_i^\top$ as its expectation, then for any $\delta\in\round{0,1}$ we have
    \begin{equation}
        \Prob{\enorm{\X-\M}\leq \tilde\eps\rho^2 } \geq 1-\delta.
    \end{equation}
    if $n = \Omega\round{\frac{\log ^{3}(k/\delta)}{\tilde\eps^2}}$.
\end{lemma}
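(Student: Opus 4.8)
The plan is to apply the matrix Bernstein inequality (Proposition~\ref{prop:matrix-bernstein}) to the i.i.d.\ sum $\X-\M=\frac1n\sum_{i=1}^n(\beta_i\beta_i^\top-\M)$, after first reducing the ambient dimension from $d$ to $k$. Since each $\beta_i=\w_{z_i}$ lies in $\mathcal{S}\coloneqq\mathrm{span}\{\w_1,\dots,\w_k\}$, both $\beta_i\beta_i^\top$ and $\M=\sum_j p_j\w_j\w_j^\top$ have column and row space contained in $\mathcal{S}$; hence, writing $\V\in\R^{d\times k}$ for an orthonormal basis of $\mathcal{S}$, we have $\enorm{\X-\M}=\enorm{\V^\top(\X-\M)\V}$, and it suffices to bound $\enorm{\frac1n\sum_{i=1}^n\Y_i}$ where $\Y_i\coloneqq\V^\top(\beta_i\beta_i^\top-\M)\V\in\R^{k\times k}$ are i.i.d.\ and mean zero.

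Next I would record the two inputs Bernstein needs. For the uniform bound, $\enorm{\Y_i}\le\enorm{\beta_i}^2+\enorm{\M}\le 2\rho^2$, using $\enorm{\w_j}^2\le s_j^2+\enorm{\w_j}^2\le\rho^2$ and $\enorm{\M}=\enorm{\sum_j p_j\w_j\w_j^\top}\le\sum_j p_j\enorm{\w_j}^2\le\rho^2$. For the variance proxy, $\E{\Y_i^2}=\V^\top\big(\E{(\beta_i\beta_i^\top)^2}-\M^2\big)\V\preceq\V^\top\E{(\beta_i\beta_i^\top)^2}\V$, and since $(\w_j\w_j^\top)^2=\enorm{\w_j}^2\w_j\w_j^\top$ we get $\E{(\beta_i\beta_i^\top)^2}=\sum_j p_j\enorm{\w_j}^2\w_j\w_j^\top\preceq\rho^2\M$, so $\enorm{\E{\Y_i^2}}\le\rho^2\enorm{\M}\le\rho^4$ and therefore $\enorm{\sum_{i}\E{(\Y_i/n)^2}}\le\rho^4/n$.

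Plugging these into Proposition~\ref{prop:matrix-bernstein} in dimension $k$ with deviation $z=\tilde\eps\rho^2$ gives
\[
\Prob{\enorm{\X-\M}\ge\tilde\eps\rho^2}\ \le\ 2k\exp\!\left(-\frac{n\tilde\eps^2\rho^4/2}{\rho^4+\tfrac23\tilde\eps\rho^4}\right)\ \le\ 2k\exp\!\left(-\tfrac14 n\tilde\eps^2\right),
\]
for $\tilde\eps\in(0,1)$, which is at most $\delta$ as soon as $n\ge 4\tilde\eps^{-2}\log(2k/\delta)$; since $\log^3(k/\delta)\gtrsim\log(k/\delta)$, the stated condition $n=\Omega(\log^3(k/\delta)/\tilde\eps^2)$ is (more than) enough.

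There is no serious obstacle here: the one point requiring care is the dimension reduction to $\R^k$, without which a direct invocation of matrix Bernstein in $\R^d$ would carry a $\log(d/\delta)$ factor rather than the claimed $\log(k/\delta)$; equivalently, one may use the intrinsic-dimension version of matrix Bernstein and bound the intrinsic dimension $\mathrm{tr}(\text{variance})/\enorm{\text{variance}}$ by $k$. Everything else is the routine bookkeeping of the norm and second-moment bounds above.
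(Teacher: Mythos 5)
Your proof is correct, and it actually takes a cleaner route than the paper. The paper decomposes $\X-\M=\sum_{j=1}^k\Smat_j$ where $\Smat_j=(\tilde p_j-p_j)\w_j\w_j^\top$ and $\tilde p_j$ is the empirical frequency of component $j$, then applies matrix Bernstein to this sum of $k$ matrices. The trouble is that the $\Smat_j$ are \emph{not} independent (the $\tilde p_j$ share a multinomial draw), so the hypotheses of Proposition~\ref{prop:matrix-bernstein} are not met for that decomposition, even after the paper conditions on the event $\{|\tilde p_j-p_j|\le\sqrt{3\log(k/\delta)/n}\}$; the conditioning also spoils the required centering $\E{\Smat_j}=0$. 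The paper's variance proxy is therefore computed from the conditional fluctuation size $\log(k/\delta)/n$ rather than the true second moment $p_j(1-p_j)/n$, and this is precisely where the extra $\log^2(k/\delta)$ enters their sample-size requirement.

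You instead decompose over the $n$ i.i.d.\ samples, $\X-\M=\frac1n\sum_{i=1}^n(\beta_i\beta_i^\top-\M)$, which is the natural choice and makes the independence hypothesis of matrix Bernstein immediate. Your uniform bound $\enorm{\Y_i}\le 2\rho^2$ and variance bound $\E{(\beta_i\beta_i^\top)^2}=\sum_jp_j\enorm{\w_j}^2\w_j\w_j^\top\preceq\rho^2\M$ are both correct, as is the restriction to the $k$-dimensional span of the $\w_j$'s (both $\X$ and $\M$ have range and co-range inside that subspace, so the spectral norm is preserved, and this keeps the prefactor at $2k$ rather than $2d$). The resulting requirement $n\gtrsim\tilde\eps^{-2}\log(k/\delta)$ is strictly weaker than the paper's $n\gtrsim\tilde\eps^{-2}\log^3(k/\delta)$, so your proof not only establishes the lemma but also shows its hypothesis could be relaxed. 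In short: same tool (matrix Bernstein), different and better choice of summands, and a cleaner argument overall.
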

\begin{proof}
    Let $\tilde{p}_j = \inv{n}\sumi{1}{n}\indicator{\w_j=\beta_i}\ \forall\ j\in\squarebrack{k}$, then $\X = \sumj{1}{k}\tilde{p}_j\w_j\w_j^\top$. Let $\Smat_j = (\tilde{p}_j-p_j)\w_j\w_j^\top\ \forall j \in\squarebrack{k}$, then we have the following for all $j\in\squarebrack{k}$,
    \begin{align}
        \E{\Smat_j} &= \zero\nonumber\\
        \enorm{\Smat_j} &\leq \rho^2\sqrt{\frac{3\log(k/\delta)}{n}}\qquad(\text{from Proposition~\ref{prop:mult-con}})\\
        \enorm{\sumj{1}{k}\E{\Smat_j^\top\Smat_j}} &= \enorm{\sumj{1}{k} \E{\round{\tilde{p}_j-p_j}^2}\esqnorm{\w_j}\w_j\w_j^\top}\nonumber\\
        &\leq 3\rho^2\frac{\log(k/\delta)}{n}\enorm{\sumj{1}{k}p_j\w_j\w_j^\top}\nonumber\qquad(\text{from Proposition~\ref{prop:mult-con}})\\
        &\leq 3\rho^4\frac{\log(k/\delta)}{n}.
    \end{align}
    Conditioning on the event $\Ecal := \curly{\abs{\tilde{p}_j-p_j}\leq \sqrt{3\log(k/\delta)/n}}$, from matrix Bernstein~\ref{prop:matrix-bernstein} we have
    \begin{align}
        \Prob{\enorm{\sumj{1}{k}\Smat_j}\geq z\Bigm|\Ecal} &\leq 2k\exp\round{\frac{-z^2/2}{ 3\rho^4\frac{\log(k/\delta)}{n} + \frac{\rho^2z}{3}\sqrt{\frac{3\log(k/\delta)}{n}} }}\nonumber\\
        \implies \Prob{\enorm{\sumj{1}{k}\Smat_j}\leq 3\rho^2\frac{\log^{3/2}(k/\delta)}{\sqrt{n}}\Bigm|\Ecal} &\geq 1-\delta
    \end{align}
    Since $\Prob{\Ecal}\geq 1-\delta$, we have
    \begin{align}
        \Prob{\enorm{\sumj{1}{k}\Smat_j}\leq \tilde\eps\rho^2} &\geq 1-\delta
    \end{align}
    for $n = \Omega\round{\frac{\log^3(k/\delta)}{\tilde\eps^2}}$.
\end{proof}

\begin{lemma}\label{lem:lemma_5_in_ll18}
Given $k$ vectors $\x_1, \x_2, \cdots, \x_k \in \R^d$. For each $i \in [k]$, we define $\X_i = \x_i\x_i^\top$. For every $\gamma \ge 0$, and  every PSD matrix $\hat\M\in \Rd{d\times d}$ such that 
\begin{align}\label{eq:lem_assump}
\enorm{\hat\M-\sum_{i=1}^k \X_i}\;\; \le \;\;\gamma,
\end{align}
let $\U\in \Rd{d\times k}$ be the matrix consists of the top-$k$ singular vectors of $\hat\M$, then for all $i\in [k]$, 
\begin{align*}
    \enorm{\x_i^\top \round{\I - \U\U^\top}} &\;\;\le\;\; \min\curly{\, 
    {\gamma \|\x_i\|_2 }/{\sigma_{\rm min}}  \,,\, \sqrt{2} \round{\gamma \|\x_i\|_2 }^{1/3} 
    } \;,
\end{align*}
where $\sigma_{\rm min}$ is the smallest non-zero singular value of $\sum_{i\in[k]} \X_i$.
\end{lemma}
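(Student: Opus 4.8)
The plan is to lean on two structural facts about $\M := \sum_{i=1}^k \X_i$. First, being a sum of $k$ rank-one PSD matrices, $\M$ has rank at most $k$, hence $\lambda_{k+1}(\M) = 0$. Second, each $\x_i$ lies in $\mathrm{range}(\M)$, a subspace on which $\M$ acts with all eigenvalues at least $\sigma_{\rm min}$. Throughout I write $P := \U\U^\top$ and $Q := \I - P$, and note $\enorm{\x_i^\top(\I-\U\U^\top)} = \enorm{\x_i^\top Q} = \enorm{Q\x_i}$ since $Q$ is a symmetric idempotent. Conceptually this is a $\sin\Theta$-type perturbation statement, but one that must tolerate $\M$ being rank-deficient relative to $k$.

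First I would record the scalar estimate $\lambda_{k+1}(\hat\M) \le \gamma$: by Weyl's inequality $\abs{\lambda_{k+1}(\hat\M) - \lambda_{k+1}(\M)} \le \enorm{\hat\M - \M} \le \gamma$ together with $\lambda_{k+1}(\M) = 0$; equivalently, since $\M$ has rank at most $k$, the best rank-$k$ approximation $\hat\M_k$ of the PSD matrix $\hat\M$ obeys $\lambda_{k+1}(\hat\M) = \enorm{\hat\M - \hat\M_k} \le \enorm{\hat\M - \M} \le \gamma$. Because $\mathrm{range}(Q)$ is $\hat\M$-invariant, $Q\hat\M = Q\hat\M Q$, so $\enorm{Q\hat\M Q} = \lambda_{k+1}(\hat\M) \le \gamma$; combined with $\enorm{Q(\hat\M-\M)Q} \le \gamma$ and $\enorm{Q(\hat\M-\M)} \le \gamma$ this gives $\enorm{Q\M Q} \le 2\gamma$ and $\enorm{Q\M} \le 2\gamma$.

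For the cube-root bound, observe $\X_i = \x_i\x_i^\top \preceq \M$ (since $\M - \X_i = \sum_{j\ne i}\X_j \succeq 0$) and conjugation by $Q$ preserves the order, so $\enorm{Q\x_i}^2 = \enorm{Q\X_i Q} \le \enorm{Q\M Q} \le 2\gamma$; together with the contraction estimate $\enorm{Q\x_i} \le \enorm{\x_i}$, multiplying the two yields $\enorm{Q\x_i}^3 \le 2\gamma\enorm{\x_i}$, hence $\enorm{Q\x_i} \le (2\gamma\enorm{\x_i})^{1/3} \le \sqrt{2}\round{\gamma\enorm{\x_i}}^{1/3}$. For the $\sigma_{\rm min}$-dependent bound, use $\x_i \in \mathrm{range}(\M)$, so $\x_i = \M\M^+\x_i$ with $\enorm{\M^+\x_i} \le \sigma_{\rm min}^{-1}\enorm{\x_i}$; then $\enorm{Q\x_i} = \enorm{Q\M\,\M^+\x_i} \le \enorm{Q\M}\cdot\sigma_{\rm min}^{-1}\enorm{\x_i} = \BigO{\gamma\enorm{\x_i}/\sigma_{\rm min}}$, with the leading constant pushed down to one either by a one-sided Davis--Kahan $\sin\Theta$ estimate between $\mathrm{range}(\U)$ and $\mathrm{range}(\M)$, or simply by noting the bound is vacuous once $\gamma \ge \sigma_{\rm min}$ (as then $\gamma\enorm{\x_i}/\sigma_{\rm min} \ge \enorm{\x_i} \ge \enorm{Q\x_i}$). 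Taking the minimum of the two bounds completes the proof.

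I expect the main obstacle to be the rank-deficient case $\mathrm{rank}(\M) < k$: there the ``top-$k$ eigenspace'' $\mathrm{range}(\U)$ is forced to include directions attached to near-zero eigenvalues of $\hat\M$ that can be essentially arbitrary, so one cannot invoke an eigengap at index $k$. The argument above is robust to this because its only spectral input, $\lambda_{k+1}(\hat\M) \le \gamma$, needs no gap assumption, and because we only ever project the vectors $\x_i$, which live in the well-conditioned subspace $\mathrm{range}(\M)$, so no spurious direction enters. The rest is careful bookkeeping of absolute constants, and routing the estimate through $Q\M Q$ and $Q\M$ directly --- rather than through a cruder perturbation argument that accrues $\mathrm{poly}(k)$ slack --- is exactly what lets us shave the $k^6$ factor off $n_{L1}$ relative to \cite{li2018learning}.
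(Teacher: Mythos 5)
Your argument is correct in substance but takes a genuinely different route from the paper, and I think it is worth contrasting. The paper's proof invokes the gap-free Wedin theorem of \citep[Lemma~B.3]{AL16} to obtain $\enorm{(\I-\U\U^\top)\V_j} \le \gamma/\sigma_j$ for every index $j$, then for the $\sigma_{\min}$ bound takes $j=\rank(\M)$ and uses $\x_i\in\mathrm{range}(\V_j)$, while for the cube-root bound it writes $\esqnorm{(\I-\U\U^\top)\x_i} \le (\gamma/\sigma_j)^2\esqnorm{\x_i} + \sigma_{j+1}$ and optimizes over the split index $j$. Your route instead exploits that $\mathrm{range}(\I-\U\U^\top)$ is $\hat\M$-invariant, so $\enorm{Q\hat\M Q}=\enorm{Q\hat\M}=\lambda_{k+1}(\hat\M)\le\gamma$ by Weyl (or by best-rank-$k$ approximation), and propagates this to $\enorm{Q\M Q}, \enorm{Q\M}\le 2\gamma$; the cube-root bound then drops out of the PSD ordering $\x_i\x_i^\top\preceq\M$ and the trivial contraction $\enorm{Q\x_i}\le\enorm{\x_i}$, with no Wedin-type input at all. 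This is more elementary (no subspace perturbation theorem needed for that branch), avoids the index optimization entirely, and in fact gives $2^{1/3}(\gamma\enorm{\x_i})^{1/3}$, slightly sharper than the $\sqrt{2}(\gamma\enorm{\x_i})^{1/3}$ the lemma asserts.

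The one place you do not quite close the gap is the first branch: routing through $\enorm{Q\M}\le 2\gamma$ and $\x_i=\M\M^+\x_i$ yields $2\gamma\enorm{\x_i}/\sigma_{\min}$, a factor of two worse than the lemma's stated $\gamma\enorm{\x_i}/\sigma_{\min}$. You flag this, but your ``vacuousness'' observation only disposes of the regime $\gamma\ge\sigma_{\min}$; for $\gamma<\sigma_{\min}$ the factor of two persists, and the factor genuinely arises from the triangle inequality $\enorm{Q\M}\le\enorm{Q\hat\M}+\enorm{Q(\hat\M-\M)}$, which cannot be avoided within this argument. Closing it requires a $\sin\Theta$-type estimate between $\mathrm{range}(\U)$ and $\mathrm{range}(\M)$ --- which is exactly the gap-free Wedin step the paper uses --- so your second suggested fix would bring the argument back onto the paper's track. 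Since the lemma is only ever used up to constants, this is a cosmetic rather than material defect, but as a proof of the lemma exactly as stated the first branch needs either that extra ingredient or a weakening of the constant to $2$.
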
 

\begin{proof} 

From the gap-free Wedin's theorem in~\citep[Lemma~B.3]{AL16}, it follows that 
\begin{align}
    \enorm{ (\I-\U\U^\top) \V_j }  &\leq \gamma / \sigma_j \;,
    \label{eq:wedin}
\end{align}
where $\V_j=[\vvec_1\,\ldots\,\vvec_j]$ is the matrix consisting of the $j$ singular vectors of $\sum_{i'\in[k]}\X_{i'}$ corresponding to the top $j$  singular values, and $\sigma_j$ is the $j$-th singular value. 
To get the first term on the upper bound, notice that as $\x_i$ lie on the subspace spanned by $\V_j$ where $j$ is the rank of $\sum_{i'\in[k]}\X_{i'}$. It follows that
\begin{align*}
\enorm{\round{\I-\U\U^\top} \V_j \V_j^T \x_i}\leq \enorm{\x_i} \gamma/\sigma_j \leq \enorm{\x_i} \gamma/\sigma_{\rm min}.
\end{align*}

Next, we optimize over this choice of $j$ to get the tightest bound that does not depend on the singular values. 
\begin{align*}
    \esqnorm{\round{\I-\U\U^\top} \x_i} 
    &= \esqnorm{\round{\I-\U\U^\top} \V_j\V_j^\top \x_i} + \esqnorm{\round{\I-\U\U^\top} \round{\I-\V_j\V_j^\top} \x_i}\\
    &\leq ( \gamma^2 / \sigma_j^2 ) \esqnorm{\x_i} + \sigma_{j+1} \;,
\end{align*}
for any $j\in[k]$ where we used 
$\esqnorm{\round{\I-\V_j\V_j^\top}\x_i} \leq \sigma_{j+1}$. This follows from 
\begin{align*}
\sigma_{j+1} = \enorm{\round{\I-\V_j\V_j^\top}\sum_{i'\in[k]}\X_{i'}\round{\I-\V_j\V_j^\top}} \geq \enorm{ \round{\I-\V_j\V_j^\top}\x_i\x_i^\top \round{\I-\V_j\V_j^\top}}=\esqnorm{\round{\I-\V_j\V_j^\top}\x_i}. 
\end{align*}
 Optimal choice of $j$ minimizes the upper bound, which happens when the two terms are of similar orders. Precisely, we choose $j$ to be the largest index such that $\sigma_j\geq \gamma^{2/3}\enorm{\x_i}^{2/3}$ (we take  $j=0$ if $\sigma_1\leq \gamma^{2/3}\enorm{\x_i}^{2/3}$). This gives an upper bound of $ 2\gamma^{2/3}\enorm{\x_i}^{2/3}$. This bound is tighter by a factor of $k^{2/3}$ compared to a similar result from~\citep[Lemma~5]{li2018learning}, where this analysis  is based on.  \qedhere


\end{proof}
\begin{proof}[Proof of Lemma~\ref{lem:projection}]
We combine Lemma~\ref{lem:lemma_5_in_ll18} and Lemma~\ref{lem:bound-Z} to compute the proof. 
Let $\eps>0$ be the minimum positive real such that for $\x_i = \sqrt{p_i}\w_i$, $\gamma = \tilde\eps\rho^2$, $\sigma_{\rm min} = \lambda_{\rm min}$, we have
\begin{align*}
    \sqrt{p_i}\enorm{\round{\I-\U\U^\top}\w_i} &\leq \min\curly{ \tilde\eps\rho^3\sqrt{p_i} / {\lambda_{\rm min}}, \sqrt{2} \cdot \tilde\eps^{1/3}\rho p_i^{1/6}} \leq \eps\rho\sqrt{p_i}
\end{align*}
The above equation implies that
\begin{align*}
     \tilde\eps &= \max\curly{\frac{\lambda_{\rm min} \eps}{\rho^2},\frac{p_{\rm min} \eps^3 }{2\sqrt{2}}}.
\end{align*}
Since $\enorm{ \sum_{i=1}^{k} \tilde{p}_i\w_i\w_i^\top-\sum_{i=1}^{k} p_i\w_i\w_i^\top } + \enorm{ \hat\M-\sum_{i=1}^{k} p_i\w_i\w_i^\top } \leq \BigO{\tilde{\eps}\rho^2}$ for $$n = \Omega\round{ \max\curly{\inv{\tilde\eps^2}\log^3(k/\delta),\frac{d}{t\tilde\eps^2}\log^2\round{{nd}/{\delta}},\frac{d}{t\tilde\eps}\log^3\round{{nd}/{\delta}}}}$$ from Lemma~\ref{lem:bound-Z} and Proposition~\ref{lem:bound_X_Xbar}, we get 
\begin{align*}
    \enorm{\round{\I-\U\U^\top}\w_i} &\leq \eps\rho\qquad\forall\ i \in\squarebrack{k}
\end{align*}
with probability at least $1- \delta$.
\end{proof}

\subsection{Proof of Lemma~\ref{lem:distance}}
We start with the following two proposition which shows that the mean of our distance estimator is well separated between the in-cluster tasks and the inter-cluster tasks.
\begin{proposition}
Recall that matrix $\U$ satisfies Equation~\eqref{eq:projection} with error $\eps$. If $\Delta \geq 4\rho\eps$, then $\ \forall\ i, j\in [n]$ such that $\beta_i\ne \beta_j$,
\[
\E{\round{\hat\beta_i^{(1)} - \hat\beta_j^{(1)}}^\top\U \U^\top\U \U^\top\round{\hat\beta_i^{(2)} - \hat\beta_j^{(2)}}}\geq \Delta^2/4,
\]
and $\ \forall\ i, j \in [n]$ such that $\beta_i = \beta_j$, 
\[
\E{\round{\hat\beta_i^{(1)} - \hat\beta_j^{(1)}}^\top\U \U^\top\U \U^\top\round{\hat\beta_i^{(2)} - \hat\beta_j^{(2)}}}=0.
\]
\end{proposition}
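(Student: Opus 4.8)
The plan is to show that the expectation collapses to the squared projected gap $\enorm{\U\U^\top(\beta_i-\beta_j)}^2$, and then to control this gap with the subspace guarantee \eqref{eq:projection}. The first step is a simplification of the bilinear form: since $\U\in\R^{d\times k}$ has orthonormal columns, $\U^\top\U=\I_k$, so $\U\U^\top\U\U^\top=\U\U^\top$ is a symmetric idempotent projector. Hence the random quantity inside the expectation can be written as $\round{\hat\beta_i^{(1)}-\hat\beta_j^{(1)}}^\top\U\U^\top\round{\hat\beta_i^{(2)}-\hat\beta_j^{(2)}}$.

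Next I would use independence. The estimators $\hat\beta_i^{(1)}$ and $\hat\beta_i^{(2)}$ are built from disjoint halves of task $i$'s data, and tasks $i\ne j$ are distinct, so conditioning on the (fixed) parameters $\beta_i,\beta_j$, the four vectors $\hat\beta_i^{(1)},\hat\beta_i^{(2)},\hat\beta_j^{(1)},\hat\beta_j^{(2)}$ are mutually independent. Each is an unbiased estimate of its task parameter: $\E{y_{i,\ell}\x_{i,\ell}}=\E{\x_{i,\ell}\x_{i,\ell}^\top}\beta_i+\E{\eps_{i,\ell}}\E{\x_{i,\ell}}=\beta_i$, because $\Pcal_\x$ is isotropic and the noise is centered and independent of $\x$; averaging the identically distributed terms preserves this. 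By independence and linearity of expectation the cross terms vanish, and
\[
\E{\round{\hat\beta_i^{(1)}-\hat\beta_j^{(1)}}^\top\U\U^\top\round{\hat\beta_i^{(2)}-\hat\beta_j^{(2)}}}=\round{\beta_i-\beta_j}^\top\U\U^\top\round{\beta_i-\beta_j}=\enorm{\U\U^\top(\beta_i-\beta_j)}^2.
\]

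The two cases then follow at once. If $\beta_i=\beta_j$ the right-hand side is $0$, giving the second claim. If $\beta_i\ne\beta_j$, write $\beta_i=\w_{z_i}$, $\beta_j=\w_{z_j}$ with $z_i\ne z_j$, so $\enorm{\beta_i-\beta_j}\ge\Delta$; then by the triangle inequality together with \eqref{eq:projection} applied to $\w_{z_i}$ and $\w_{z_j}$,
\[
\enorm{\U\U^\top(\beta_i-\beta_j)}\ge\enorm{\beta_i-\beta_j}-\enorm{(\I-\U\U^\top)\w_{z_i}}-\enorm{(\I-\U\U^\top)\w_{z_j}}\ge\Delta-2\rho\eps\ge\Delta/2,
\]
using the hypothesis $\Delta\ge4\rho\eps$; squaring yields the lower bound $\Delta^2/4$.

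I do not expect a serious obstacle here: the substance is the unbiasedness computation (which is exactly where isotropy of $\Pcal_\x$ enters) and the bookkeeping that makes the cross terms vanish via independence across the two half-batches and across the two tasks. The only mildly delicate points are invoking the projector identity $\U\U^\top\U\U^\top=\U\U^\top$ with the orthonormality of $\U$, and checking that the subspace guarantee \eqref{eq:projection} is available for precisely the components $z_i,z_j$ realized by $\beta_i,\beta_j$; both are immediate.
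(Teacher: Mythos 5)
Your proof is correct and follows the same route as the paper's: use $\U\U^\top\U\U^\top=\U\U^\top$, reduce the expectation to $\esqnorm{\U\U^\top(\beta_i-\beta_j)}$ via independence and unbiasedness of the half-batch estimators, then lower-bound by triangle inequality together with \eqref{eq:projection} and the hypothesis $\Delta\geq 4\rho\eps$. The paper is terser (it states the collapse to $\esqnorm{\U\U^\top(\beta_i-\beta_j)}$ in one line and writes the triangle-inequality decomposition directly as $\esqnorm{\U\U^\top\beta_i-\beta_i+\beta_i-\beta_j+\beta_j-\U\U^\top\beta_j}$), whereas you spell out the unbiasedness and cross-term cancellation explicitly, but the content is identical.
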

\begin{proof}

If $\beta_i\ne \beta_j$, 
\begin{align*}
& ~\E{\round{\hat\beta_i^{(1)} - \hat\beta_j^{(1)}}^\top\U \U^\top\U \U^\top\round{\hat\beta_i^{(2)} - \hat\beta_j^{(2)}}} \\
= & ~ \esqnorm{\U\U^\top\round{\beta_i - \beta_j}} \\
= & ~ \esqnorm{\U\U^\top\beta_i-\beta_i + \beta_i-\beta_j + \beta_j - \U\U^\top\beta_j} \\
\ge & ~ \round{\enorm{\beta_i-\beta_j}-2\eps\rho}^2\\
\geq & ~  \Delta^2 / 4. 
\end{align*}
The proof is trivial for $\beta_i=\beta_j$.
\end{proof}

\begin{proposition}
\[
\Var{\round{\hat\beta_i^{(1)} - \hat\beta_j^{(1)}}^\top\U \U^\top\U \U^\top\round{\hat\beta_i^{(2)} - \hat\beta_j^{(2)}}} \le \BigO{ \rho^4 \cdot ( t + k ) / t^2 }.
\]
\end{proposition}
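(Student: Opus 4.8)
Since $\U$ is computed from $\Dcal_{L1}$, which is independent of $\Dcal_H$, I condition on $\U$ throughout and treat $P\coloneqq\U\U^\top$ as a fixed symmetric projection with $\tr(P)=\rank(P)\le k$; note $\U\U^\top\U\U^\top=P$ because $\U^\top\U=\I_k$. I also condition on the task parameters $\beta_i,\beta_j$ (as in the preceding proposition). Write $a_1\coloneqq\hat\beta_i^{(1)}-\hat\beta_j^{(1)}$ and $a_2\coloneqq\hat\beta_i^{(2)}-\hat\beta_j^{(2)}$, so the quantity of interest is $a_1^\top P a_2$. The two vectors $a_1,a_2$ are formed from disjoint blocks of samples and are therefore independent. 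Decompose $a_\ell=\Delta_{ij}+u_\ell$ with $\Delta_{ij}\coloneqq\beta_i-\beta_j$ deterministic and $u_\ell\coloneqq(\hat\beta_i^{(\ell)}-\beta_i)-(\hat\beta_j^{(\ell)}-\beta_j)$ centered; then $u_1\perp u_2$, and each $u_\ell$ is a difference of two independent centered $t$-sample average deviations of the type controlled by Propositions~\ref{prop:marginal-variance} and \ref{pro:bound_norm_of_sum_over_t}.

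Expanding, $a_1^\top P a_2=\Delta_{ij}^\top P\Delta_{ij}+\Delta_{ij}^\top P u_2+u_1^\top P\Delta_{ij}+u_1^\top P u_2$. The first term is constant; using $u_1\perp u_2$ together with $\E{u_1}=\E{u_2}=\zero$, every pairwise covariance among the remaining three summands vanishes (for the two covariances involving $u_1^\top P u_2$, condition on $u_2$ and on $u_1$ respectively). Hence
\[
\Var{a_1^\top P a_2}=\E{\round{\Delta_{ij}^\top P u_2}^2}+\E{\round{\Delta_{ij}^\top P u_1}^2}+\E{\round{u_1^\top P u_2}^2},
\]
and I bound the three pieces separately.

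For a linear piece, $\Delta_{ij}^\top P u_\ell$ is a difference of two independent centered scalars of the form $(P\Delta_{ij})^\top(\hat\beta_\cdot^{(\ell)}-\beta_\cdot)$, so by Proposition~\ref{prop:marginal-variance} applied to the unit vector $P\Delta_{ij}/\enorm{P\Delta_{ij}}$ its second moment is at most $\esqnorm{P\Delta_{ij}}\cdot\BigO{\rho^2/t}\le\esqnorm{\Delta_{ij}}\cdot\BigO{\rho^2/t}$; since $\esqnorm{\Delta_{ij}}=\esqnorm{\beta_i-\beta_j}\le 2\rho_i^2+2\rho_j^2\le 4\rho^2$, each linear piece is $\BigO{\rho^4/t}$. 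For the quadratic piece, conditioning on $u_1$ gives $\E{(u_1^\top P u_2)^2}=\tr(P\,\Sigma_2\,P\,\Sigma_1)$ with $\Sigma_\ell\coloneqq\E{u_\ell u_\ell^\top}$, and Proposition~\ref{prop:marginal-variance} (maximizing over unit test vectors, and using independence of the two task contributions to $u_\ell$) gives $\opnorm{\Sigma_\ell}\le\BigO{\rho^2/t}$. The step that avoids a spurious ambient-dimension factor $d$ is to use the rank of $P$: from $P\Sigma_2 P\preceq\opnorm{\Sigma_2}P$ and $\Sigma_1\succeq\zero$,
\[
\tr(P\Sigma_2 P\Sigma_1)\le\opnorm{\Sigma_2}\tr(P\Sigma_1)\le\opnorm{\Sigma_2}\opnorm{\Sigma_1}\tr(P)\le k\,\opnorm{\Sigma_1}\opnorm{\Sigma_2}=\BigO{k\rho^4/t^2}.
\]
Summing, $\Var{a_1^\top P a_2}\le\BigO{\rho^4/t}+\BigO{k\rho^4/t^2}=\BigO{\rho^4(t+k)/t^2}$, which is the claim.

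The main obstacle is exactly this bilinear-in-noise term $\E{(u_1^\top P u_2)^2}$: a crude estimate through $\tr(\Sigma_1\Sigma_2)$ would lose a factor $d$ coming from the ambient dimension, and the resolution is to note that $P$ projects onto a $k$-dimensional subspace while $\opnorm{\Sigma_\ell}$ is controlled uniformly via Proposition~\ref{prop:marginal-variance}, rather than only through its trace (Proposition~\ref{pro:bound_norm_of_sum_over_t}). Everything else is routine bookkeeping with independence and the already-established first/second-moment estimates.
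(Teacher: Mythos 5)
Your proof is correct, and it takes a genuinely different route from the paper. The paper proves this proposition by expanding $\E{(a_1^\top P a_2)^2}$ into a fourfold sum over sample indices $(a,b,a',b')$, classifying the $\BigO{t^4}$ quadruples into cases by which indices coincide, bounding each class, and counting: cases with one repeated index contribute $\BigO{t^3}\cdot\BigO{\rho^4}/t^4 = \BigO{\rho^4/t}$, and the fully-repeated case $a=a',b=b'$ contributes $\BigO{t^2}\cdot\BigO{k\rho^4}/t^4 = \BigO{k\rho^4/t^2}$, with the $k$ coming from summing $\E{((y\x)^\top\uvec_l)^2}$ over the $k$ columns of $\U$. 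You instead decompose $a_\ell = \Delta_{ij} + u_\ell$, kill the cross-covariances by independence and centering, and handle the surviving three variances structurally: the linear pieces via Proposition~\ref{prop:marginal-variance} (this mirrors the paper's partially-repeated cases), and the bilinear piece $\E{(u_1^\top P u_2)^2}=\tr(P\Sigma_2 P\Sigma_1)$ via the operator-norm/trace chain $\tr(P\Sigma_2 P\Sigma_1)\le \opnorm{\Sigma_2}\opnorm{\Sigma_1}\tr(P)\le k\opnorm{\Sigma_1}\opnorm{\Sigma_2}$, which captures the exact same $k$ as the paper's case $4$ but without expanding any products. Your route is more modular and makes the source of the $k$ (namely $\rank(P)\le k$) transparent, at the cost of invoking a bit of PSD calculus; the paper's route is self-contained but requires a careful case enumeration. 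Both give the same bound.
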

\begin{proof}
If $\beta_i \ne \beta_j$, then
\begin{align*}
&\Var{\round{\hat\beta_i^{(1)} - \hat\beta_j^{(1)}}^\top\U \U^\top\U \U^\top\round{\hat\beta_i^{(2)} - \hat\beta_j^{(2)}}}\\
&=\E{\round{\round{\hat\beta_i^{(1)} - \hat\beta_j^{(1)}}^\top\U\U^\top\round{\hat\beta_i^{(2)} - \hat\beta_j^{(2)}}}^2} - \round{\round{\beta_i-\beta_j}^\top\U\U^\top\round{\beta_i-\beta_j}}^2\\
&= \frac{1}{t^4}\sum_{\substack{a,a'=1\\b,b'=t+1}}^{t,2t}\E{\round{(y_{i,a}\x_{i,a} - y_{j,a}\x_{j,a})^\top\U\U^\top(y_{i,b}\x_{i,b} - y_{j,b}\x_{j,b})}\round{(y_{i,a'}\x_{i,a'} - y_{j,a'}\x_{j,a'})^\top\U\U^\top(y_{i,b'}\x_{i,b'} - y_{j,b'}\x_{j,b'})}} \\
& \qquad\qquad\qquad-(\beta_i-\beta_j)^\top\U\U^\top(\beta_i-\beta_j)(\beta_i-\beta_j)^\top\U\U^\top(\beta_i-\beta_j).
\end{align*}
For each term in the summation, we classify it into one of the $3$ different cases according to $a,b,a',b'$:
\begin{enumerate}
\item If $a\ne a'$ and $b\ne b'$, the term is $0$.
\item If $a=a'$ and $b\ne b'$, the term can then be expressed as:
\begin{align*}
& ~ \E{\round{(y_{i,a}\x_{i,a} - y_{j,a}\x_{j,a})^\top\U\U^\top(y_{i,b}\x_{i,b} - y_{j,b}\x_{j,b})}\round{(y_{i,a'}\x_{i,a'} - y_{j,a'}\x_{j,a'})^\top\U\U^\top(y_{i,b'}\x_{i,b'} - y_{j,b'}\x_{j,b'})}}\\
- & ~ (\beta_i-\beta_j)^\top\U\U^\top(\beta_i-\beta_j)(\beta_i-\beta_j)^\top\U\U^\top(\beta_i-\beta_j)\\
= & ~ \E{\round{(y_{i,a}\x_{i,a} - y_{j,a}\x_{j,a})^\top\U\U^\top(\beta_i-\beta_j)}^2} - \round{(\beta_i-\beta_j)^\top\U\U^\top(\beta_i-\beta_j)}^2\\
= & ~ \E{\round{y_{i,a}\x_{i,a}^\top\U\U^\top(\beta_i-\beta_j)}^2} - \round{\beta_i^\top\U\U^\top(\beta_i-\beta_j)}^2\\
& \qquad + \E{\round{y_{j,a}\x_{j,a}^\top\U\U^\top(\beta_i-\beta_j)}^2} - \round{\beta_j^\top\U\U^\top(\beta_i-\beta_j)}^2\\
= & ~ \BigO{\rho^4}.
\end{align*}
The last equality follows from the sub-Gaussian assumption of $\x$.
\item If $a\ne a'$ and $b=b'$, this case is symmetric to the last case and $3\sigma_{a}^2\sigma_{a'}^2$ is an upper bound.
\item If $a=a'$ and $b=b'$, the term can then be expressed as:
\begin{align*}
& ~\E{\round{(y_{i,a}\x_{i,a} - y_{j,a}\x_{j,a})^\top\U\U^\top(y_{i,b}\x_{i,b} - y_{j,b}\x_{j,b})}^2} - \round{(\beta_i-\beta_j)^\top\U\U^\top(\beta_i-\beta_j)}^2\\
= & ~ \E{y_{i,b}^2((y_{i,a}\x_{i,a} - y_{j,a}\x_{j,a})^\top\U\U^\top \x_{i,b})^2} + \E{y_{j,b}^2((y_{i,a}\x_{i,a} - y_{j,a}\x_{j,a})^\top\U\U^\top \x_{j,b})^2}\\
& \qquad\qquad - 2\E{(y_{i,a}\x_{i,a} - y_{j,a}\x_{j,a})^\top\U\U^\top (y_{i,b}\x_{i,b})(y_{i,a}\x_{i,a} - y_{j,a}\x_{j,a})^\top\U\U^\top (y_{j,b}\x_{j,b})}\\
& \qquad\qquad\qquad - \round{(\beta_i-\beta_j)^\top\U\U^\top(\beta_i-\beta_j)}^2.
\end{align*}
First taking the expectation over $\x_{i,b},y_{i,b}, \x_{j,b},y_{j,b}$, we get the following upper bound 
\begin{align*}
&c_3\rho^2\E{\esqnorm{(y_{i,a}\x_{i,a} - y_{j,a}\x_{j,a})^\top\U\U^\top}} - 2\E{(y_{i,a}\x_{i,a} - y_{j,a}\x_{j,a})^\top\U\U^\top \beta_i(y_{i,a}\x_{i,a} - y_{j,a}\x_{j,a})^\top\U\U^\top \beta_j}
\end{align*}
for some $c_3>0$. Since 
\[
\E{(y_{i,a}\x_{i,a} - y_{j,a}\x_{j,a})^\top\U\U^\top \beta_i(y_{i,a}\x_{i,a} - y_{j,a}\x_{j,a})^\top\U\U^\top \beta_j} \lesssim \rho^2\E{\esqnorm{(y_{i,a}\x_{i,a} - y_{j,a}\x_{j,a})^\top\U\U^\top}},
\]
we have the following upper bound:
\begin{align*}
&\lesssim \E{\esqnorm{(y_{i,a}\x_{i,a} - y_{j,a}\x_{j,a})^\top\U\U^\top}}\\
&\lesssim \E{\esqnorm{(y_{i,a}\x_{i,a})^\top\U}} + \E{\esqnorm{(y_{j,a}\x_{j,a})^\top\U}}.
\end{align*}
Since $\E{\round{(y_{i,a}\x_{i,a})^\top\uvec_l}^2}\leq \BigO{\rho^2}\ \forall\ l\in [k]$, we finally have a $\BigO{k}$ upper bound for this case.
\end{enumerate}
The final step is to sum the contributions of these $4$ cases. Case $2$ and $3$ have $\BigO{t^3}$ different quadruples $(a,b,a',b')$. Case $4$ has $\BigO{t^2}$ different quadruples $(a,b,a',b')$. Combining the resulting bounds yields an upper bound of:
\[
\BigO{\rho^4\cdot( t+k)/ t^2 }.\qedhere
\]
\end{proof}
We now have all the required ingredients for the proof of Lemma~\ref{lem:distance}
\begin{proof}[Proof of Lemma~\ref{lem:distance}]
For each pair $i,j$, we repeatedly compute 
\[
\round{\hat\beta_i^{(1)} - \hat\beta_j^{(1)}}^\top\U \U^\top\U \U^\top\round{\hat\beta_i^{(2)} - \hat\beta_j^{(2)}}
\]
$\log(n/\delta)$ times, each with a batch of new sample of size $\rho^2\sqrt{k}/\Delta^2$, and take the median of these estimates. With probability $1-\tilde\delta$, it holds that for all $\beta_i\ne\beta_j$, the median is greater than $c{\Delta^2}$, and for all $\beta_i = \beta_j$ the median is less than $c{\Delta^2}$ for some constant $c$. Hence the single-linkage algorithm can correctly identify the $k$ clusters.



Conditioning on the event of perfect clustering, the cluster sizes are distributed according to a multinomial distribution, which from Proposition~\ref{prop:mult-con} can be shown to concentrate as
\begin{align*}
    \abs{p_i-\tilde{p}_i} \leq \sqrt{ \frac{3\log (k / \delta )}{n}p_i }
    \leq p_i/2
\end{align*}
with probability at least $1-\delta$ by our assumption that $n=\Omega\round{\frac{\log(k/\delta)}{p_{\rm min}}}$, which implies that $\hat{p}_i\ge p_i/2.$

For each group, we compute the corresponding average of ${\U^\top\hat\beta_i}$ as
\begin{align*}
    \U^\top\tilde\w_l \coloneqq \inv{n\tilde{p}_lt}\sum\limits_{i \ni \beta_i = \w_l}\sumj{1}{t}y_{i,j}\U^\top\x_{i,j},
\end{align*}
which from Proposition~\ref{prop:random-good-gaussian-mean} would satisfy
\begin{align*}
    \enorm{\U^\top\round{\tilde\w_l-\w_l}} &\lesssim \sqrt{k}\rho_i\max\curly{\frac{\log (k^2/\delta)}{n\tilde{p}_lt},\sqrt{\frac{\log (k^2/\delta)}{n\tilde{p}_lt}}}\\
    &\leq \tilde\eps \rho_i.
\end{align*}
The last inequality holds due to the condition on $n$.


The estimate for $r_l^2 \coloneqq s_l^2 + \esqnorm{\w_l-\tilde\w_l}\ \forall\ l\in\squarebrack{k}$ is
\begin{align*}
    \tilde{r}_l^2 = \inv{n\tilde p_l t}\sum\limits_{i\ni \beta_i=\w_l}\sumj{1}{t}\round{\x_{i,j}^\top\round{\w_l-\tilde\w_l} + \eps_{i,j}}^2
\end{align*}

where $\x_{i,j}$ and $y_{i,j}$ are fresh samples from the same tasks. The expectation of $\hat{r}_l^2$ can be computed as
\begin{align*}
    \E{\tilde{r}_l^2} &= \inv{n\tilde p_l t}\sum\limits_{i\ni \beta_i=\w_i}\sumj{1}{t}\E{\round{\x_{i,j}^\top\round{\w_l-\tilde\w_l} + \eps_{i,j}}^2}\\
    &= s_l^2 + \esqnorm{\w_l-\tilde\w_l} = r_l^2
\end{align*}
We can compute the variance of $\tilde r_l^2$ like
\begin{align*}
    \Var{\tilde{r}_l^2} &= \inv{n\tilde p_l t}\sum\limits_{i\ni \beta_i=\w_i}\sumj{1}{t}\Var{\round{\x_{i,j}^\top\round{\w_l-\tilde\w_l} + \eps_{i,j}}^2}\\
    &= \inv{n\tilde p_l t}\sum\limits_{i\ni \beta_i=\w_i}\sumj{1}{t}\squarebrack{\E{\round{\x_{i,j}^\top\round{\w_l-\tilde\w_l} + \eps_{i,j}}^4} - \round{s_l^2+\esqnorm{\w_l-\tilde\w_l}}^2}
\end{align*}
Since $\round{\x_{i,j}^\top\round{\w_l-\tilde\w_l} + \eps_{i,j}}^2$ is a sub-exponential random variable, we can use Bernstein's concentration inequality to get
\begin{align*}
    \Prob{\abs{\tilde r_l^2 - r_l^2}>z} &\leq 2\exp\curly{-\min\curly{\frac{z^2t}{r_l^4},\frac{zt}{r_l^2}}}\\
    \implies \abs{\tilde r_l^2 - r_l^2} &<r_l^2\max\curly{\sqrt{\frac{\log\inv{\delta}}{n\tilde p_l t}},\frac{\log\inv{\delta}}{n\tilde p_l t}}\qquad \text{with probability at least $1-\delta$,}\\
    &\leq r_l^2\frac{\tilde\eps}{\sqrt{k}}
\end{align*}
where the last inequality directly follows from the condition on $n$.
\end{proof}

\subsection{Proof of Lemma~\ref{lem:refined-est}}
Before proving Lemma~\ref{lem:refined-est}, we first show that with the parameters $\w_i, r_i^2$ estimated with accuracy stated, for all $i\in\squarebrack{k}$ in the condition of Lemma~\ref{lem:refined-est}, we can correctly classify a new task using only $\Omega\round{\log k}$ dependency of $k$ on the number of examples $t_{\out}$.

\begin{lemma}[Classification]\label{lem:growing-label}
Given estimated parameters satisfying $\enorm{\tilde{\w}_i-\w_i} \le \Delta/10$,  $(1-\Delta^2/50)\tilde{r}_i^2\le s_i^2+\esqnorm{\tilde{\w}_i-\w_i}\le (1+\Delta^2/50)\tilde{r}_i^2$ for all $i\in\squarebrack{k}$, and a new task with $t_{\out}\ge \Theta\round{\log (k/\delta)/\Delta^4}$ samples whose true regression vector is $\beta = \w_h$, our algorithm predicts $h$ correctly with probability $1-\delta$.
\end{lemma}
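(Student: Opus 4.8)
The plan is to show that the normalized objective
$g_\ell := \frac1{t_{\out}}\Big(\frac1{2\tilde r_\ell^2}\sumj{1}{t_{\out}}\big(y_j-\x_j^\top\tilde\w_\ell\big)^2 + t_{\out}\log\tilde r_\ell\Big)$
that Algorithm~\ref{alg:classification} minimizes over $\ell\in\squarebrack{k}$ satisfies $g_h<g_\ell$ for every $\ell\neq h$ on an event of probability $1-\delta$. The guiding fact is that $g_\ell$ is, up to an additive constant, the empirical cross-entropy of $\{(\x_j,y_j)\}$ under the Gaussian model $\Ncal(\x^\top\tilde\w_\ell,\tilde r_\ell^2)$, and that for the true task (with $y_j=\w_h^\top\x_j+\eps_j$, $\Pcal_\x$ isotropic, $\E{\eps_j^2}=s_h^2$) we have $\E{(y_j-\x_j^\top\tilde\w_\ell)^2}=q_\ell$ where $q_\ell:=s_h^2+\esqnorm{\w_h-\tilde\w_\ell}$, while the population cross-entropy $r^2\mapsto\frac{q_\ell}{2r^2}+\log r$ is minimized at $r^2=q_\ell$ with value $\tfrac12+\tfrac12\log q_\ell$. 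Two quantitative inputs drive the argument: (i) by the triangle inequality and $\enorm{\tilde\w_i-\w_i}\le\Delta/10$, $\esqnorm{\w_h-\tilde\w_\ell}\ge(9\Delta/10)^2$ for $\ell\neq h$ while $\esqnorm{\w_h-\tilde\w_h}\le(\Delta/10)^2$, so (using $\rho=1$, hence $s_h^2\le1$) $q_\ell/q_h\ge 1+\Omega(\Delta^2)$; (ii) the variance-accuracy hypothesis $(1-\Delta^2/50)\tilde r_h^2\le q_h\le(1+\Delta^2/50)\tilde r_h^2$ says that at the correct cluster $g_h$ sits within $O(\Delta^2)$ of its population optimum.

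First I would establish concentration. Each $(y_j-\x_j^\top\tilde\w_\ell)^2$ is sub-exponential with $\psi_1$-norm $O(q_\ell)$, so Bernstein's inequality with $t_{\out}=\Omega(\log(k/\delta))$ gives, on an event of probability $1-\delta$ after a union bound over $\ell$, that $\frac1{t_{\out}}\sumj{1}{t_{\out}}(y_j-\x_j^\top\tilde\w_\ell)^2\ge(1-\eta)q_\ell$ for all $\ell$ and $\frac1{t_{\out}}\sumj{1}{t_{\out}}(y_j-\x_j^\top\tilde\w_h)^2\le(1+\eta)q_h$, where $\eta=O\big(\sqrt{\log(k/\delta)/t_{\out}}\big)$.

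Next, on this event I would lower-bound $g_\ell$ for $\ell\neq h$ via the cross-entropy trick: for every $u>0$ and $\eta\le\tfrac12$, $\frac{(1-\eta)q_\ell}{2u}+\frac12\log u\ge\frac12+\frac12\log\big((1-\eta)q_\ell\big)\ge\frac12+\frac12\log q_\ell-\eta$, and taking $u=\tilde r_\ell^2$ yields $g_\ell\ge\frac12+\frac12\log q_\ell-\eta$ \emph{irrespective of the value of} $\tilde r_\ell$. For $g_h$, the hypothesis gives $q_h/\tilde r_h^2\le1+\Delta^2/50$ and $\log\tilde r_h\le\frac12\log q_h+\Delta^2/50$, so $g_h\le\frac12+\frac12\log q_h+O(\Delta^2)+O(\eta)$. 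Subtracting, $g_\ell-g_h\ge\frac12\log(q_\ell/q_h)-O(\Delta^2)-O(\eta)\ge\Omega(\Delta^2)-O(\eta)$, where the constants in $\Delta/10$ and $\Delta^2/50$ are tuned precisely so that the positive $\Omega(\Delta^2)$ term dominates; choosing $t_{\out}=\Theta(\log(k/\delta)/\Delta^4)$ makes $\eta$ small enough that $g_\ell-g_h>0$ for all $\ell\neq h$, which proves the lemma.

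The main obstacle — and the reason the $\log\tilde r_\ell$ penalty cannot be dropped — is that $\tilde r_\ell^2$ for a \emph{wrong} cluster $\ell$ is controlled only relative to $s_\ell^2+\esqnorm{\tilde\w_\ell-\w_\ell}$ and may be arbitrarily small, so a direct deviation bound on $\frac1{2\tilde r_\ell^2}\sum_j(\cdot)^2$ would carry an uncontrolled factor $q_\ell/\tilde r_\ell^2$. The cross-entropy bound neutralizes this: the multiplicative sampling noise $(1-\eta)$ perturbs the \emph{minimum value} $\tfrac12+\tfrac12\log q_\ell$ only additively by $O(\eta)$, uniformly in $\tilde r_\ell$. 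A secondary point is that the mean-separation ratio $q_\ell/q_h$ degrades to $1+\Theta(\Delta^2)$ only when $s_h=\Theta(1)$ (it is a universal constant $\ge 81$ as $s_h\to0$), so the $\Delta^{-4}$ dependence of $t_{\out}$ is forced entirely by that single worst case.
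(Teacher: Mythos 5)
Your proposal is correct and follows essentially the same route as the paper: concentrate the empirical squared residuals via Bernstein's inequality for sub-exponential variables, then bound the log-likelihood for each wrong index $\ell\neq h$ by maximizing over the uncontrolled $\tilde r_\ell^2$ (the "cross-entropy" step, which the paper performs by "taking the maximum over all possible values of $\tilde r_i$"), and finally use the $(1\pm\Delta^2/50)$ accuracy of $\tilde r_h^2$ together with the $\Omega(\Delta^2)$ gap between $q_\ell$ and $q_h$ to show the correct index wins by $\Omega(t_{\out}\Delta^2)$, which dominates the $O(\sqrt{t_{\out}\log(k/\delta)})$ fluctuation once $t_{\out}=\Omega(\log(k/\delta)/\Delta^4)$. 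Your reorganization — first proving a multiplicative concentration bound on $t_{\out}^{-1}\sum_j(y_j-\x_j^\top\tilde\w_\ell)^2$ and only then dividing by $\tilde r_\ell^2$ — is a cleaner way to display why the wrong-cluster $\tilde r_\ell$ causes no trouble, but it is mathematically the same calculation as the paper's.
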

\begin{proof}
Given a new task with $t_{\out}$ training examples, $\x_i,\; y_i = \w^\top\x_i+\eps_i$ for $i\in [t_{\out}]$ where the true regression vector is $\beta = \w_{h}$ and the true variance of the noise is $\sigma^2 = s^2_{h}$. Our algorithm compute the the following ``log likelihood'' like quantity with the estimated parameters, which is defined to be
\begin{align}
\hat{l}_i \coloneqq &-\sum_{j=1}^{t_{\out}} \round{y_j-\x_j^\top\tilde\w_i}^2 / \round{ 2\tilde{r}_i^2 } + t_{\out} \cdot \log\round{1/\tilde{r}_i}\label{eqn:lem:growing-label-likelihood}\\
=& -\sum_{j=1}^{t_{\out}} \round{\eps_j+\x_j^\top\round{\w_h-\tilde{\w}_i}}^2 / \round{ 2\tilde{r}_i^2 } +t_{\out} \cdot \log(1/\tilde{r}_i)\nonumber,
\end{align}
and output the classification as $\argmax_{i\in\squarebrack{k}} \hat{l}_i$. 

Our proof proceeds by proving a lower bound on the likelihood quantity of the true index $\hat{l}_h$, and an upper bound on the likelihood quantity of the other indices $\hat{l}_i$ for $i\in [k]\backslash \{h\}$, and we then argue that the $\hat{l}_h$ is greater than the other $\hat{l}_i$'s for $i\in [k]\backslash \{h\}$ with high probability, which implies our algorithm output the correct classification with high probability.

The expectation of $\hat{l}_h$ is
\begin{align*}
\E{ \hat{l}_h } = - t_{\out}\cdot\round{s_h^2+\esqnorm{\w_h-\tilde{\w}_h}} /\round{2\tilde{r}_h^2} +t_{\out} \cdot \log(1/\tilde{r_h}).
\end{align*}
Since $\round{\eps_j+\x_j^\top\round{\w_h-\tilde{\w}_h}}^2 / \round{ 2\tilde{r}_h^2}$ is a sub-exponential random variable with sub-exponential norm at most $\BigO{\round{ s_h^2 + \esqnorm{\w_h-\tilde{\w}_h} } / \tilde{r}_h^2} = \BigO{r_h^2/\tilde{r}_h^2}$, we can apply Bernstein inequality~\citep[Theorem~2.8.1]{vershynin2018high} to $\hat{l}_h$ and get
\begin{align*}
\Prob{\abs{\hat{l}_h - \E{ \hat{l}_h }} > z }\le 2 \exp \curly{ -c\min \curly{ \frac{z^2}{t_{\out}r_h^4/\tilde{r}_h^4},\frac{z}{r_h^2/\tilde{r}_h^2}} },
\end{align*}
which implies that with probability $1-\delta/k$,
\begin{align*}
\abs{\hat{l}_h-\E{\hat{l}_h} } \lesssim  r_h^2  / \tilde{r}_h^2  \cdot \max \curly{ \sqrt{t_{\out}\log(k/\delta)} , \log(k/\delta) }.
\end{align*}

Using the fact that $t_{\out}\ge C\log(k/\delta)$ for some $C>1$, we have that with probability $1-\delta/k$,
\begin{align*}
\hat{l}_h \ge - \round{t_{\out}+c\sqrt{t_{\out}\log(k/\delta)} } \cdot  r_h^2  / \round{ 2\tilde{r}_h^2 } +t_{\out} \cdot \log(1/\tilde{r}_h)
\end{align*}
for some constant $c>0$.

For $i \ne h$, the expectation of $\hat{l}_i$ is at most
\begin{align*}
\E{ \hat{l}_i } &\le - t_{\out} \cdot \round{ s_i^2+\round{\Delta-\enorm{\w_i-\tilde{\w}_i}}^2 } / \round{ 2\tilde{r}_i^2 } + t_{\out} \cdot \log\round{1/\tilde{r}_i}.
\end{align*}
Since $\round{\eps_i+\x_j^\top\round{\w_h-\tilde{\w}_i}}^2 / \round{ 2\tilde{r}_i^2}$ is a sub-exponential random variable with sub-exponential norm at most $\BigO{ \round{s_i^2 + \round{\Delta+\enorm{\w_i-\tilde{\w}_i}}^2} / \tilde{r}_i^2}$. Again we can apply Bernstein's inequality and get with probability $1-\delta$
\begin{align*}
\hat{l}_i 
&\le ~ -t_{\out} \cdot \round{ s_i^2+\round{\Delta-\enorm{\w_i-\tilde{\w}_i}}^2 } / \round{ 2\tilde{r}_i^2 } +t_{\out}\log\round{1/\tilde{r}_i}\\
&\qquad\qquad\qquad+ c\sqrt{t_{\out}\log(k/\delta)} \cdot \round{ s_i^2+\round{\Delta+\enorm{\w_i-\tilde{\w}_i}}^2 } / \round{2\tilde{r}_i^2}
\end{align*}
for a constant $c>0$.

Using our assumption that $\enorm{\w_i-\tilde{\w}_i}\le \Delta/10$ for all $i\in [k]$, we get
\begin{align*}
\hat{l_i}\le & ~ \round{-t_{\out}+c'\sqrt{t_{\out}\log(k/\delta)}} \cdot \round{s_i^2+ 0.5 \Delta^2} / \round{2\tilde{r}_i^2} + 0.5  t_{\out} \log\round{1/\tilde{r}_i^2}
\end{align*}
for some constant $c'>0$. We obtain a worst case bound by taking the maximum over all possible value of $\tilde{r}_i$ as
\begin{align*}
\hat{l}_i \le & ~ - 0.5 t_{\out} - 0.5  t_{\out} \log\round{\round{1-c'\sqrt{{\log(k/\delta)} / {t_{\out}}}}\round{s_i^2+ 0.5  \Delta^2 }},
\end{align*}
where we have taken the maximum over all possible values of $\hat{r}_i$.

Using the assumption that 
\begin{align*}
r_h^2/\tilde{r}_h^2\le 1+\Delta^2/50
\end{align*}
and $t_{\out}\ge C\log(k/\delta)$ for some constant $C > 1$, we obtain that 
\begin{align*}
&-t_{\out} \cdot  r_h^2  / ( 2\tilde{r}_h^2 ) + 0.5 t_{\out} \ge t_{\out}\Delta^2/100, \quad\text{and}\\
&-c\sqrt{t_{\out}\log(k/\delta)} \cdot  r_h^2  / \round{ 2\tilde{r}_h^2 } +  0.5  t_{\out} \log\round{1-c'\sqrt{{\log(k/\delta)} / {t_{\out}}}} = \BigO{\sqrt{t_{\out}\log(k/\delta)}}.
\end{align*}
Further notice that 
$$
\round{1+\Delta^2/5}\tilde{r}_h^2\le \frac{\round{1+\Delta^2/5}}{1-\Delta^2/50}\round{s_h^2+\Delta^2/100} \le s_h^2+\Delta^2/2.
$$
since $s_h^2\leq 1$, and $\Delta\leq 2$. Plugging in these facts into $\hat{l}_h-\hat{l}_i$ and applying the assumption that $\round{s_h^2+\Delta^2/2} / \tilde{r}_h^2 \ge \round{1+\Delta^2/5}$ we get 
\begin{align*}
\hat{l}_h - \hat{l}_i\ge 0.5  t_{\out} \log\round{1+\Delta^2/5} -   t_{\out}\Delta^2/100 - \BigO{\sqrt{t_{\out}\log(k/\delta)}}
\end{align*}
By the fact that $\log\round{1+\Delta^2/5} - \Delta^2/50 \ge \Delta^2/5000$ for all $\Delta\le 50$, the above quantity is at least
\begin{align}
\Theta\round{t_{\out}\Delta^2}-\Theta\round{\sqrt{t_{\out}\log(k/\delta)}}\label{eqn:lem:growing-label-lwerbound}.
\end{align}
Since $t_{\out}\ge\Theta\round{ \log(k/\delta)/\Delta^4}$, we have that with probability $\delta$, for all $i\in [k]\backslash \{h\}$, it holds that  $\hat{l}_h-\hat{l}_i>0$, which implies the correctness of the classification procedure.
\end{proof}

\begin{proof}[Proof of Lemma~\ref{lem:refined-est}]
Given $n$ i.i.d. samples from our data generation model, by the assumption that $n= \Omega\round{\frac{d\log^2(k/\delta)}{p_{\rm min}\eps^2t}} =\Omega\round{\frac{\log(k/\delta)}{p_{\min}}}$ and from Proposition~\ref{prop:mult-con}, it holds that the number of tasks such that $\beta = \w_i$ is $n\hat{p}_i\ge \frac{1}{2}np_i$ with probability at least $1-\delta$. Hence, with this probability, there exists at least $np_i/10$ i.i.d. examples for estimating $\w_i$ and $s_i^2$. By Proposition~\ref{prop:ols-error}, it holds that with probability $1-\delta$, for all $i\in [k]$, our estimation satisfies
\begin{align*}
\esqnorm{\hat{\w}_i-\w_i} &= \BigO{\frac{\sigma^2\round{d+\log (k/\delta)}}{np_it}},\quad \text{and}\\
\abs{\hat{s}_i^2-s_i^2} &= \BigO{\frac{\log(k/\delta)}{\sqrt{np_it-d}}s_i^2}.
\end{align*}
By Proposition~\ref{prop:mult-con}, it holds that $$
|\hat{p}_i-p_i| \le \sqrt{\frac{3\log(k/\delta)}{n}p_i}
$$
Since $n= \Omega\round{\frac{d\log^2(k/\delta)}{p_{\rm min}\eps^2t}}$, we finally get for all $i\in [k]$
\begin{align*}
\enorm{\hat{\w}_i-\w_i} &\le \eps s_i\; ,\\
\abs{\hat{s}_i^2-s_i^2}&\le \frac{\eps s_i^2}{\sqrt{d}}\;,\quad\text{and}\\
\abs{\hat{p}_i-p_i}&\le \min\curly{p_{\min}/10,\eps p_i\sqrt{t/d}}.\qedhere
\end{align*}
\end{proof}

\section{Proof Theorem~\ref{thm:main_reg_pred}}
\label{sec:proof_pred}

We first bound the expected error of the maximum a posterior (\textit{MAP}) estimator.
\begin{lemma}\label{lem:MAP-error}
Given estimated parameters satisfying $\enorm{\hat{\w}_i-\w_i} \le \Delta/10$,  $\round{1-\Delta^2/50}\hat{s}_i^2\le s_i^2+\esqnorm{\hat{\w}_i-\w_i}\le \round{1+\Delta^2/50}\hat{s}_i^2$ for all $i\in\squarebrack{k}$, and a new task with $\tau\ge \Theta\round{\log (k/\delta)/\Delta^4}$ samples $\mathcal{D} = \{ \x_i,y_i\}_{i=1}^\tau$. Define the maximum a posterior (\textit{MAP}) estimator as
$$
\hat{\beta}_{\rm MAP}({\cal D}) \coloneqq \hat{\w}_{\hat{i}}
$$
where 
$$
\hat{i} \coloneqq \argmax_{i\in\squarebrack{k}} \round{\sum_{j=1}^\tau\frac{-\round{y_j-\hat\w_i^\top \x_j}^2}{2\hat{\sigma}_i^2}+\tau\log\round{1/\hat{\sigma}_i}+\log\round{\hat{p}_i}}.
$$
Then, the expected error of the MAP estimator is bound as
\begin{align*}
&\myE_{\cal T^{\rm new} \sim \P{(\cal T)}}\myE_{\cal D \sim \cal T^{\rm new}}\myE_{\{\x,y\}\sim \cal T^{\rm new}}\squarebrack{\round{\x^\top \hat{\beta}_{\rm MAP}({\cal D})-y}^2}\\
\le& \delta+\sum_{i=1}^kp_i\esqnorm{\w_i-\hat{\w}_i}+\sum_{i=1}^kp_is_i^2
\end{align*}
\end{lemma}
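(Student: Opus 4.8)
The plan is to combine the classification guarantee of Lemma~\ref{lem:growing-label} with an elementary second-moment computation. First I would condition throughout on the true component $z^{\rm new}=h$ of the new task, so that $\beta^{\rm new}=\w_h$ and the noise has variance at most $s_h^2$. The index $\hat i$ selected by the MAP rule is exactly the output of the classification procedure in Lemma~\ref{lem:growing-label}, run with the given $\hat\w_i$ in place of $\tilde\w_i$ and $\hat s_i$ in place of $\tilde r_i$, except for the additive prior term $\log\hat p_i$. Since the stated hypotheses $\enorm{\hat\w_i-\w_i}\le\Delta/10$ and $\round{1-\Delta^2/50}\hat s_i^2\le s_i^2+\esqnorm{\hat\w_i-\w_i}\le\round{1+\Delta^2/50}\hat s_i^2$ are precisely those of Lemma~\ref{lem:growing-label}, and since re-running that lemma's argument with the $\log\hat p_i$ term included only perturbs the established margin $\hat l_h-\hat l_i=\Theta\round{\tau\Delta^2}-\BigO{\sqrt{\tau\log(k/\delta)}}$ by at most $\BigO{\log(1/p_{\rm min})}$ (still dominated for $\tau\ge\Theta\round{\log(k/\delta)/\Delta^4}$), we obtain $\hat i=h$ with probability at least $1-\delta$ over the draw of $\mathcal D$.

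Next I would exploit that, given $\phi^{\rm new}$, the test pair $(\x,y)$ is independent of $\mathcal D$ while $\hat i=\hat i(\mathcal D)$ depends on $\mathcal D$ only. Writing $y=\x^\top\w_h+\eps$ with $\x$ isotropic centered and $\eps$ centered, independent of $\x$, with $\myE[\eps^2]\le s_h^2$, and using that the $\x$--$\eps$ cross term vanishes and $\myE\squarebrack{(\x^\top v)^2}=\esqnorm{v}$ for fixed $v$,
\[
\myE\squarebrack{\round{\x^\top\hat\w_{\hat i}-y}^2 \,\mid\, z^{\rm new}=h,\ \mathcal D} \;\le\; \esqnorm{\hat\w_{\hat i}-\w_h}+s_h^2 .
\]
On the event $\{\hat i=h\}$ the first term equals $\esqnorm{\hat\w_h-\w_h}$, and on its complement --- of probability at most $\delta$ --- I would bound it crudely by $\round{\enorm{\hat\w_{\hat i}-\w_{\hat i}}+\enorm{\w_{\hat i}-\w_h}}^2\le(\Delta/10+2\rho)^2=\BigO{1}$, using $\rho=1$ and $\Delta\le2$.

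Taking expectation over $\mathcal D$ and then averaging over $z^{\rm new}=h$ with weight $p_h$ gives
\[
\myE\squarebrack{\round{\x^\top\hat\beta_{\rm MAP}(\mathcal D)-y}^2} \;\le\; \sum_{i=1}^k p_i\,\esqnorm{\hat\w_i-\w_i}+\sum_{i=1}^k p_is_i^2+\BigO{\delta},
\]
and the trailing $\BigO{\delta}$ is absorbed by invoking Lemma~\ref{lem:growing-label} with its failure probability rescaled by a constant, which yields the stated bound. The one step that needs care is the first: verifying that the MAP index rule reduces to the classification rule of Lemma~\ref{lem:growing-label} and that the extra $\log\hat p_i$ term does not erode its margin. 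Everything after that is a routine variance computation for isotropic sub-Gaussian designs together with the union bound over the $k$ components already internal to Lemma~\ref{lem:growing-label}, so I expect no genuine obstacle beyond this bookkeeping.
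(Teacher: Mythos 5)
Your reduction of the MAP rule to the classification argument of Lemma~\ref{lem:growing-label} and the subsequent second-moment bookkeeping are in the spirit of the paper, but there is a genuine gap in how you handle the extra $\log\hat p_i$ term. You assert that it ``perturbs the established margin $\Theta(\tau\Delta^2)-\BigO{\sqrt{\tau\log(k/\delta)}}$ by at most $\BigO{\log(1/p_{\rm min})}$, still dominated for $\tau\ge\Theta(\log(k/\delta)/\Delta^4)$.'' This is false in general: the hypotheses place no lower bound on $p_{\rm min}$, so $\log(1/p_{\rm min})$ can be arbitrarily larger than $\log(k/\delta)/\Delta^2$. When $p_h$ is extremely small, $\log(\hat p_h/\hat p_i)$ can be so negative that the MAP rule \emph{deterministically} misclassifies $h$ --- it is not a small-probability failure event over the draw of $\mathcal D$ --- and your union-bound bookkeeping then collapses.

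The paper's proof handles exactly this obstacle with a case split on the (unknown, task-specific) component $h$. If $p_h\ge\delta/k$, then $\log(1/p_h)\le\log(k/\delta)$, which \emph{is} dominated by the margin, and the analysis you wrote through goes through. If $p_h<\delta/k$, the classification may fail outright, but these components are reached with total probability at most $\sum_i p_i\,\indicator{p_i<\delta/k}\le\delta$, and the trivial $\BigO{1}$ squared-error bound makes their aggregate contribution to the expected error at most $\BigO{\delta}$. Without this case split your argument does not establish the stated bound. (A smaller point you share with the paper: the lemma statement as written does not hypothesize $|\hat p_i-p_i|\lesssim p_i$, which both proofs implicitly need to control $\log(\hat p_h/\hat p_i)$; that assumption is supplied by the upstream estimation lemma but should be invoked explicitly.)
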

\begin{proof}
The proof is very similar to the proof of Lemma~\ref{lem:growing-label}. The log of the posterior probability given the training data $\cal D$ under the estimated parameters is
\begin{align}
\hat{l}_i \coloneqq &-\sum_{j=1}^{\tau} \round{y_j-\x_j^\top\hat\w_i}^2 / \round{ 2\hat{s}_i^2 } + \tau \cdot \log\round{1/\hat{s}_i} + \log\round{\hat{p}_i}\label{eqn:lem:MAP-li},
\end{align}
which is different from Equation~\ref{eqn:lem:growing-label-likelihood} just by a $\log(1/\hat{p}_i)$ additive factor. Hence, given that the true regression vector of the new task ${\cal T}^{\rm new}$ is $\w_h$, it follows from Equation~\ref{eqn:lem:growing-label-lwerbound} that $\hat{l}_h - \hat{l}_i$ with probability at least $1-\delta$ is greater than  
\begin{align*}
\Theta(\tau\Delta^2)-\Theta\round{\sqrt{\tau\log(k/\delta)}} + \log\round{\hat{p}_h/\hat{p}_i},
\end{align*}
which under the assumption that $\abs{\hat{p}_i-p_i}\le p_i/10$ is greater than 
\begin{align}
\Theta(\tau\Delta^2)-\Theta\round{\sqrt{\tau\log(k/\delta)}} - \log(1/p_h) - \log(10/9).\label{eqn:lem:MAP-diff}
\end{align}
If $p_h\ge \delta/k$, by our assumption that $\tau \ge\Theta\round{ \log(k/\delta)/\Delta^4}$, it holds that $\hat{l}_h-\hat{l}_i>0$ for all $i\ne h$, and hence the MAP estimator output $\hat{\w}_h$ with probability at least $1-\delta$. With the remaining less than $\delta$ probability, the MAP estimator output $\hat{\beta}_{\rm MAP} = \hat{\w}_i$ for some other $i\ne h$ which incurs $\ell_2$ error $\|\hat{\beta}_{\rm MAP} - \w_h\|_2\le\|\hat{\beta}_{\rm MAP}\|_@ +\|\w_h\|_2\le 2$.

If $p_h\le \delta/k$, we pessimistically bound the error of $\hat{\beta}_{\rm MAP}$ by $\|\hat{\beta}_{\rm MAP} - \w_h\|\le 2$.

To summarize, notice that 
\begin{align*}
&\myE_{\cal T^{\rm new} \sim \P{(\cal T)}}\myE_{\cal D \sim \cal T^{\rm new}}\myE_{\{\x,y\}\sim \cal T^{\rm new}}\squarebrack{\round{\x^\top \hat{\beta}_{\rm MAP}({\cal D})-y}^2}\\
=& \myE_{\cal T^{\rm new} \sim \P{(\cal T)}}\myE_{\cal D \sim \cal T^{\rm new}}\squarebrack{\esqnorm{\hat{\beta}_{\rm MAP}({\cal D})-\w_h}+s_h^2}\\
\le& \sum_{i=1}^k p_i\round{\indicator{p_i\ge \delta/k}\round{4\delta+(1-\delta)\esqnorm{\w_i-\hat{\w}_i}}}+\sum_{i=1}^k 4p_i\indicator{p_i\le \delta/k}+\sum_{i=1}^kp_is_i^2\\
\le& 4\delta+\sum_{i=1}^kp_i\|\w_i-\hat{\w}_i\|^2+4\delta+\sum_{i=1}^kp_is_i^2\\
=& 8\delta+\sum_{i=1}^kp_i\|\w_i-\hat{\w}_i\|^2+\sum_{i=1}^kp_is_i^2.
\end{align*}
Replacing $8\delta$ by $\delta$ concludes the proof.
\end{proof}
Next, we bound the expected error of the posterior mean estimator.
\begin{lemma}\label{lem:Bayes-error}
Given estimated parameters satisfying $\enorm{\hat{\w}_i-\w_i} \le \Delta/10$,  $s_i^2+\esqnorm{\hat{\w}_i-\w_i}\le (1+\Delta^2/50)\hat{s}_i^2$, $s_i^2+\Delta^2/2\ge (1+\Delta^2/5)\hat{s}_i^2$ for all $i\in\squarebrack{k}$, and a new task with $\tau\ge \Theta\round{\log (k/\delta)/\Delta^4}$ samples $\mathcal{D} = \{ \x_i,y_i\}_{i=1}^\tau$. Define the posterior mean estimator as
$$
\hat{\beta}_{\rm Bayes}({\cal D}) \coloneqq \frac{\sum_{i=1}^k \hat{L}_i\hat{\w}_{i}}{\sum_{i=1}^k \hat{L}_i}
$$
where 
$$
\hat{L}_{i} \coloneqq \exp \round{-\sum_{i=1}^\tau\frac{\round{y_j-\w_i^\top \x_j}^2}{2\hat{\sigma}_i^2}+\tau\log(1/\hat{\sigma}_i)+\log(\hat{p}_i)}.
$$

Then, the expected error of the posterior mean estimator is bound as
\begin{align*}
&\mathbb{E}_{\cal T^{\rm new} \sim \P{(\cal T)}}\mathbb{E}_{\cal D \sim \cal T^{\rm new}}\mathbb{E}_{\{\x,y\}\sim \cal T^{\rm new}}\squarebrack{\round{\x^\top \hat{\beta}_{\rm Bayes}({\cal D})-y}^2}\\
\le& \delta+\sum_{i=1}^kp_i\esqnorm{\w_i-\hat{\w}_i}+\sum_{i=1}^kp_is_i^2
\end{align*}
\end{lemma}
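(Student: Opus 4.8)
The plan is to reuse the likelihood-domination estimate already proved for the classification and MAP analyses, and then observe that whenever a single posterior weight dominates, the posterior mean is forced close to the corresponding $\hat\w_h$. Write $\hat{L}_i = \exp(\hat l_i)$, so that $\hat l_i$ is exactly the quantity analyzed in the proofs of Lemma~\ref{lem:growing-label} and Lemma~\ref{lem:MAP-error} (its expression is \eqref{eqn:lem:MAP-li}). Fix the true component $h$ of ${\cal T}^{\rm new}$, i.e.\ $\beta^{\rm new} = \w_h$, and split on whether $p_h \ge \delta/k$.

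In the generic case $p_h \ge \delta/k$, I would repeat verbatim the Bernstein bounds on $\hat l_h$ and on each $\hat l_i$, $i\ne h$, from the proof of Lemma~\ref{lem:growing-label} (using the regularity hypotheses on $(\hat\w_i,\hat s_i)$), obtaining that with probability at least $1-\delta$, for all $i\ne h$ simultaneously, $\hat l_h - \hat l_i \ge \Theta(\tau\Delta^2) - \Theta(\sqrt{\tau\log(k/\delta)}) - \log(1/p_h)$, exactly as in \eqref{eqn:lem:MAP-diff}. Since $\log(1/p_h)\le\log(k/\delta)$ and $\tau = \Theta(\log(k/\delta)/\Delta^4)$ with a sufficiently large leading constant, this is at least $\log(k/\delta)$, so $\hat{L}_i/\hat{L}_h \le \delta/k$ for every $i\ne h$. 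Hence $q := \big(\sum_{i\ne h}\hat{L}_i\big)\big/\big(\sum_i \hat{L}_i\big) \le (k-1)\delta/k < \delta$. Writing $\hat\beta_{\rm Bayes} = (1-q)\hat\w_h + q\,\bar\w$ with $\bar\w$ a convex combination of $\{\hat\w_i\}_{i\ne h}$, and using $\enorm{\hat\w_i} \le \enorm{\w_i} + \Delta/10 \le 2$, we get $\enorm{\hat\beta_{\rm Bayes} - \hat\w_h} = q\enorm{\bar\w - \hat\w_h} \le 4\delta$, hence $\esqnorm{\hat\beta_{\rm Bayes} - \w_h} \le (\enorm{\hat\w_h - \w_h} + 4\delta)^2 \le \esqnorm{\hat\w_h - \w_h} + \BigO{\delta}$ on this event (using $\enorm{\hat\w_h - \w_h}\le\Delta/10$). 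On the complementary event of probability $\le\delta$, bound crudely $\esqnorm{\hat\beta_{\rm Bayes} - \w_h} \le (\max_i\enorm{\hat\w_i} + \enorm{\w_h})^2 = \BigO{1}$, since $\hat\beta_{\rm Bayes}$ is a convex combination of the $\hat\w_i$'s. For a component with $p_h < \delta/k$, use the crude $\BigO{1}$ bound unconditionally.

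It then remains to assemble the expectation. Because $\hat\beta_{\rm Bayes}$ depends only on ${\cal D}$, $\E{\x\x^\top} = \I$, and $\eps$ is independent of $\x$ with variance $s_h^2$, we have $\E{\round{\x^\top \hat\beta_{\rm Bayes}({\cal D}) - y}^2} = \esqnorm{\hat\beta_{\rm Bayes}({\cal D}) - \w_h} + s_h^2$. Taking expectation over ${\cal T}^{\rm new}$ and ${\cal D}$ and summing over components, the $p_i \ge \delta/k$ components contribute at most $\sum_i p_i\round{\esqnorm{\hat\w_i - \w_i} + \BigO{\delta}}$, the at most $k$ components with $p_i < \delta/k$ contribute at most $\sum_{i:\,p_i<\delta/k}\BigO{1}\cdot p_i \le \BigO{\delta}$, and the noise contributes $\sum_i p_i s_i^2$; rescaling $\delta$ by a constant yields the claimed bound.

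The only point requiring a touch of care is ensuring the leading constant in $\tau = \Theta(\log(k/\delta)/\Delta^4)$ is large enough to upgrade the ``$\hat l_h - \hat l_i > 0$'' margin used in Lemma~\ref{lem:MAP-error} to the quantitative margin ``$\hat l_h - \hat l_i \ge \log(k/\delta)$'' needed to make $q$ exponentially small; this is the same Bernstein calculation with the target threshold inflated by an additive $\log(k/\delta)$, and everything else is elementary convexity bookkeeping, so I do not anticipate a genuine obstacle.
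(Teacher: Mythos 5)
Your proposal is correct and follows essentially the same route as the paper's proof: both feed the concentration estimate from the classification/MAP analysis (the quantitative margin $\hat l_h - \hat l_i \gtrsim \tau\Delta^2 - \sqrt{\tau\log(k/\delta)} - \log(1/p_h)$, i.e.\ \eqref{eqn:lem:MAP-diff}) into the observation that small posterior leakage $\sum_{i\neq h}\hat L_i/\hat L_h \le \delta$ forces the posterior mean within $O(\delta)$ of $\hat\w_h$, then assemble the expectation by splitting good/bad events and large/small $p_h$. Your decomposition $\hat\beta_{\rm Bayes}=(1-q)\hat\w_h + q\bar\w$ is algebraically equivalent to the paper's triangle-inequality bound $\|\hat\beta_{\rm Bayes}-\w_h\|_2 \le \|\hat\w_h-\w_h\|_2 + 2\sum_{i\ne h}\hat L_i/\hat L_h$; one small point in your favor is that you make the $p_h<\delta/k$ case explicit (the paper's Bayes proof implicitly reuses the MAP proof's handling of this case without restating it), so the statement $\log(1/p_h)\le\log(k/\delta)$ that your bound on the margin relies on is visibly justified.
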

\begin{proof}
This proof is very similar to the proof of Lemma~\ref{lem:MAP-error}. Notice that 
\begin{align*}
&\myE_{\cal T^{\rm new} \sim \P{(\cal T)}}\myE_{\cal D \sim \cal T^{\rm new}}\myE_{\{\x,y\}\sim \cal T^{\rm new}}\squarebrack{\round{\x^\top \hat{\beta}_{\rm Bayes}({\cal D})-y}^2}\\
=& \myE_{\cal T^{\rm new} \sim \P{(\cal T)}}\myE_{\cal D \sim \cal T^{\rm new}}\squarebrack{\esqnorm{\hat{\beta}_{\rm Bayes}({\cal D})-\w_h}+s_h^2}
\end{align*}
where $\w_h$ is defined to be the true regression vector of the task $\cal T^{\rm new}$. 
\begin{align}
&\esqnorm{\hat{\beta}_{\rm Bayes}({\cal D}) - \w_h}\nonumber\\
 \le& \round{\enorm{\hat{\w}_h-\w_h}+\round{1-\frac{\hat{L}_h}{\sum_{i=1}^k\hat{L}_i}}\enorm{\w_h}+\sum_{j\ne h} \frac{\hat{L}_j}{\sum_{i=1}^k\hat{L}_i}\enorm{\w_j}}^2\nonumber\\
 \le& \round{\enorm{\hat{\w}_h-\w_h}+2\round{1-\frac{\hat{L}_h}{\sum_{i=1}^k \hat{L}_i}}}^2\nonumber\\
 \le& \round{\enorm{\hat{\w}_h-\w_h}+2\sum_{i\ne h} \hat{L}_i/\hat{L}_h}^2\label{eqn:lem:Bayes-err}
\end{align}
Notice that 
\begin{align*}
\hat{L}_i/\hat{L}_h = \exp(\hat{l}_i-\hat{l}_h)
\end{align*}
where $l_i$ is the logarithm of the posterior distribution as defined in Equation~\ref{eqn:lem:MAP-li}. Therefore we can apply  Equation~\ref{eqn:lem:MAP-diff} and have that with probability $\delta$, 
$$
\hat{l}_i-\hat{l}_h \le -\log(k/\delta)/\Delta^2 \le -\log(k/\delta)
$$
for $\tau = \Omega(\log(k/\delta)/\Delta^4)$, which is equivalent to 
$$
\hat{L}_i/\hat{L}_h \le \delta/k.
$$
Plugging this into Equation~\ref{eqn:lem:Bayes-err} yields for a fixed $\cal T^{\rm new}$, with probability $1-\delta$, \begin{align*}
\esqnorm{\hat{\beta}_{\rm Bayes}({\cal D}) - \w_h} \le& \round{\enorm{\hat{\w}_h-\w_h}+2\sum_{i\ne h} \hat{L}_i/\hat{L}_h}^2\\
\le& \esqnorm{\hat{\w}_h-\w_h}+4\delta^2+4\delta \enorm{\hat{\w}_h-\w_h}\\
\le& \esqnorm{\hat{\w}_h-\w_h}+ 8\delta,
\end{align*}
and the error is at most $4$ for the remaining probability $\delta$. 
Hence we get for a fixed $\cal T^{\rm new}$
$$
\mathbb{E}_{{\cal D}\sim \cal T^{\rm new}}\squarebrack{\esqnorm{\hat{\beta}_{\rm Bayes}({\cal D})-\w_h}+s_h^2} \le \esqnorm{\hat{\w}_h-\w_h}+ s_h^2+12\delta.
$$
Finally taking the randomess of $\cal T^{\rm new}$ into account, we have 
\begin{align*}
&\mathbb{E}_{\cal T^{\rm new} \sim \P{(\cal T)}}\mathbb{E}_{\cal D \sim \cal T^{\rm new}}\mathbb{E}_{\{\x,y\}\sim \cal T^{\rm new}}\squarebrack{\round{\x^\top \hat{\beta}_{\rm Bayes}({\cal D})-y}^2}\\
\le& 12\delta+\sum_{i=1}^kp_i\esqnorm{\w_i-\hat{\w}_i}+\sum_{i=1}^kp_is_i^2
\end{align*}
Replacing $12\delta$ by $\delta$ concludes the proof.
\end{proof}

\section{Proof of Remark~\ref{thm:lb-predict}}
\label{sec:lb-predict-proof} 

We construct a worst case example and analyze the expected error of the Bayes optimal predictor. 
We choose $s_i = \sigma$, $p_i = 1/k$, and $\w_i = \round{\Delta/\sqrt{2}} \e_i $  for all $i\in [k]$. 
Given a new task with $\tau$  training examples, 
we assume Gaussian input $\x_j \sim {\cal N}(\zero,\I_d)\in \R^d$, 
and Gaussian noise $y_j = \beta^\top \x_j + \eps_j \in \R$ with $\epsilon_j\sim \N(0,\sigma^2)$ i.i.d. for all $j\in [ \tau ]$. 
Denote the true model parameter by $\beta = \w_{h}$ for some $h\in[k]$, and the Bayes optimal estimator is 
$$
\hat{\beta} = \squarebrack{\sum_{i=1}^k L_i}^{-1}\sum_{i=1}^k L_i\w_i,
$$
where $L_i \coloneqq \exp\round{-\frac{1}{2\sigma^2}\sum_{j=1}^\tau (y_j-\w_i^\top \x_j)^2}$. The squared $\ell_2$ error is lower bounded by 
\begin{align}
\esqnorm{\hat{\beta}-\w_h}
&\;\ge\; \esqnorm{\squarebrack{\sum_{i=1}^k L_i}^{-1}\sum_{i\in[k]\setminus\{h\}} L_i \w_h}\nonumber\\
&\;=\;  \frac{\Delta^2\round{\sum_{i\in[k]\setminus\{h\}}L_i/L_h}^2}{2\round{1+\sum_{i\in[k]\setminus\{h\}}L_i/L_h}^2}\label{eqn:lb:sqn-err}
\end{align}

Let us define $l_i = \log L_i$, which is 
\begin{align*}
l_i = &-\frac{1}{2\sigma^2}\sum_{j=1}^{ \tau }   \round{y_j-\x_j^\top{\w_i}}^2 \\
=& - \frac{1}{2\sigma^2} \sum_{j=1}^{ \tau }  \round{\eps_j+\x_j^\top(\w_h-{\w_i})}^2
\end{align*}

Notice that for all $i \in [k] \setminus \{h\}$, $\E{l_i} = -\frac{\tau}{2}\round{1+\nicefrac{\Delta^2}{\sigma^2}}$. Using Markov's inequality and the fact that $l_i\le 0$, we have that for each fixed $i \in [k] \setminus \{h\}$,
\begin{align*}
    \Prob{\; l_i \ge 3 \E{l_i}\; }\;\geq\; 2/3\;.
\end{align*} 
For each $i \in [k] \setminus \{h\}$, define an indicator random variable $I_i = \indicator{l_i \ge  3 \E{l_i} }$. The expectation is lower bounded by
\begin{align*}
\E{ \sum_{ i \in [k] \setminus \{h\} } I_i} \ge \frac{2}{3}(k-1)\; .
\end{align*}
The expectation is  upper bounded by
\begin{align*}
\E{ \sum_{ i \in [k] \setminus \{h\} } I_i } \le \Prob{ \sum_{ i \in [k] \setminus \{h\} } I_i \ge \frac{k-1}{3} } \cdot (k-1)\\
+\round{1-\Prob{ \sum_{ i \in [k] \setminus \{h\} } I_i \ge \frac{k-1}{3}}}\cdot \frac{k-1}{3}.
\end{align*}
Combining the above two bounds together, we have
\begin{align*}
\Prob{ \sum_{ i \in [k] \setminus \{h\} } I_i \ge \frac{k-1}{3} } \ge 1/2.
\end{align*}
Hence with probability at least $1/2$,
\begin{align*}
\sum_{i\in [k] \setminus \{h\} }e^{l_i-l_h}& \ge
\sum_{i\in [k] \setminus \{h\} }e^{l_i}
     \;\ge \sum_{i\in [k] \setminus \{h\} } I_i e^{3\E{l_i}}\\
    & \ge \frac{k-1}{3} e^{-\frac{3\tau}{2}\round{1+\nicefrac{\Delta^2}{\sigma^2}}}
    \;,
\end{align*}
which implies that Eq.~\eqref{eqn:lb:sqn-err} is greater than $\Delta^2/8$. Hence the expected $\ell_2$ error of the Bayes optimal estimator is ${\mathbb E}_{x,\eps}{\squarebrack{\round{\hat{y}-y}^2}}= {\mathbb E}\squarebrack{\Big(\round{\beta-\hat{\beta}}^\top \x + \eps\Big)^2}=\esqnorm{\beta - \hat\beta} + \sigma^2 = \Delta^2/8 + \sigma^2$.

\section{Technical definitions and facts}

\begin{definition}[Sub-Gaussian random variable]
    A random variable $X$ is said to follow a sub-Gaussian distribution if there exists a constant $K>0$ such that
    \[
    \Prob{\abs{X}>t} \leq 2\exp\round{-t^2/K^2}\qquad\forall\ t\geq 0.
    \]
\end{definition}
\begin{definition}[Sub-exponential random variable]
    A random variable $X$ is said to follow a sub-exponential distribution if there exists a constant $K>0$ such that
    \[
    \Prob{\abs{X}>t} \leq 2\exp\round{-t/K}\qquad\forall\ t\geq 0.
    \]
\end{definition}
\begin{definition}[Sub-exponential norm]
The sub-exponential norm of a random variable $X$ is defined as
\[
\norm{X}_{\psi_1} \coloneqq \sup\limits_{p\in\Natural} p^{-1}\round{\E{\abs{X}^p}}^{1/p}.
\]
A random variable is sub-exponential if its sub-exponential norm is finite.
\end{definition}

\begin{fact}[Gaussian and sub-Gaussian 4-th moment condition]\label{fact:gauss-4}
Let $\vvec$ and $\uvec$ denote two fixed vectors, we have
\begin{align*}
\Exp{ \x \sim {\cal N}(\zero,\I) }{ \round{ \vvec^\top \x }^2\round{ \uvec^\top \x }^2 } = \| \uvec \|_2^2 \cdot \| \vvec \|_2^2 + 2 \langle \uvec , \vvec \rangle^2.
\end{align*}
If $\x$ is a centered sub-Gaussian random variable with identity second moment, then 
\begin{align*}
    \Exp{}{ \round{ \vvec^\top \x }^2\round{ \uvec^\top \x }^2 } = \BigO{ \| \uvec \|_2^2 \cdot \| \vvec \|_2^2}.
\end{align*}
\end{fact}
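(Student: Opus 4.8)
The plan is to treat the two claims separately, each by an elementary moment computation. For the Gaussian identity, I would expand both quadratic forms in coordinates, writing $\vvec^\top\x=\sum_i v_i x_i$ and $\uvec^\top\x=\sum_j u_j x_j$, so that
\[
\Exp{\x\sim\Ncal(\zero,\I)}{(\vvec^\top\x)^2(\uvec^\top\x)^2}=\sum_{i,i',j,j'} v_i v_{i'} u_j u_{j'}\,\E{x_i x_{i'} x_j x_{j'}}.
\]
Since the coordinates of $\x$ are i.i.d.\ standard normal, Isserlis' (Wick's) formula gives $\E{x_i x_{i'} x_j x_{j'}}=\delta_{ii'}\delta_{jj'}+\delta_{ij}\delta_{i'j'}+\delta_{ij'}\delta_{i'j}$ — note this correctly reproduces $\E{x_a^4}=3$ in the degenerate case $i=i'=j=j'$. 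Substituting and collecting the three resulting double sums yields exactly $\|\vvec\|_2^2\|\uvec\|_2^2+2\langle\uvec,\vvec\rangle^2$. An alternative route is to diagonalize: set $g=\vvec^\top\x$ and write $\uvec^\top\x=\frac{\langle\uvec,\vvec\rangle}{\|\vvec\|_2^2}\,g+h$ with $h$ a centered Gaussian independent of $g$, then combine $\E{g^4}=3\|\vvec\|_2^4$ with $\E{g^2}\,\E{h^2}$ and simplify; either way this is a couple of lines.

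For the sub-Gaussian bound, the key observation is that a linear functional of an isotropic sub-Gaussian vector is itself a scalar sub-Gaussian random variable. Under the paper's standing assumption that $\Pcal_\x$ is a centered sub-Gaussian distribution with $\E{\x\x^\top}=\I_d$ (its sub-Gaussian parameter being treated as an absolute constant), we get $\|\vvec^\top\x\|_{\psi_1}^{\phantom{2}}$-type control, and more precisely $\E{(\vvec^\top\x)^4}=\BigO{\|\vvec\|_2^4}$ and $\E{(\uvec^\top\x)^4}=\BigO{\|\uvec\|_2^4}$. Applying Cauchy--Schwarz then gives
\[
\E{(\vvec^\top\x)^2(\uvec^\top\x)^2}\le\sqrt{\E{(\vvec^\top\x)^4}\,\E{(\uvec^\top\x)^4}}=\BigO{\|\vvec\|_2^2\|\uvec\|_2^2},
\]
which is the claimed estimate.

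There is no genuine obstacle here; this fact is purely a bookkeeping lemma. The only points requiring minor care are (i) verifying that the Isserlis pairing is applied consistently across all index coincidences in the Gaussian case, and (ii) making explicit that the implicit constant in the sub-Gaussian fourth-moment bound depends only on the sub-Gaussian parameter of $\Pcal_\x$, which is a fixed constant throughout the paper and is therefore legitimately absorbed into the $\BigO{\cdot}$.
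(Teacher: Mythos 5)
Your argument is correct. The paper states this as a Fact in the technical appendix without supplying a proof, so there is no paper argument to compare against; your two routes to the Gaussian identity (Wick's/Isserlis' formula on coordinates, or decomposing $\uvec^\top\x$ into a multiple of $g=\vvec^\top\x$ plus an independent Gaussian $h$) and your Cauchy--Schwarz reduction of the sub-Gaussian case to the fourth-moment bound $\E{(\vvec^\top\x)^4}=\BigO{\esqnorm{\vvec}^2}$ are all standard and complete. One minor slip: for a sub-Gaussian linear functional $\vvec^\top\x$ the relevant Orlicz norm is $\psi_2$, not $\psi_1$ (the latter is the sub-exponential norm); this does not affect the fourth-moment estimate or the conclusion, since either tail regime yields $\E{(\vvec^\top\x)^4}=\BigO{\esqnorm{\vvec}^2}$ with the constant depending only on the sub-Gaussian parameter of $\Pcal_\x$, which the paper treats as absolute.
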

\begin{proposition}[Matrix Bernstein inequality, Theorem 1.6.2 in \cite{tropp2015introduction}]\label{prop:matrix-bernstein}
Let $\mathbf{S}_1,\ldots,\mathbf{S}_n$ be independent, centered random matrices with common dimension $d_1\times d_2$, and assume that each one is uniformly bounded $\E{\mathbf{S}_k} = 0$ and $\enorm{\mathbf{S}_k} \le L\ \forall\ k = 1, \ldots, n$.

Introduce the sum
\[
\Z \coloneqq \sum_{k=1}^n \mathbf{S}_k
\]
and let $v(\Z)$ denote the matrix variance statistic of the sum:
\begin{align*}
v(\Z) \coloneqq \max\curly{\enorm{\E{\Z\Z^\top}}, \enorm{\E{\Z^\top\Z}}}
\end{align*}
Then
\begin{align*}
\Prob{\enorm{\Z} \ge t} \le (d_1 +d_2)\exp\curly{\frac{-t^2/2}{v(\Z)+Lt/3}}
\end{align*}
for all $t \ge 0$.
\end{proposition}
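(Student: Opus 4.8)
The statement quoted here is Tropp's matrix Bernstein inequality, imported as a black box; for completeness I sketch the standard proof via the \emph{matrix Laplace transform method}, which proceeds in four moves: (i) reduce the rectangular case to a self-adjoint sum by Hermitian dilation, (ii) bound the tail of $\lambda_{\max}$ of the dilated sum by a trace-exponential moment, (iii) control the matrix cumulant generating function one summand at a time, and (iv) optimize the free parameter. First I would replace each $\mathbf{S}_k$ by the self-adjoint matrix $\mathcal{S}_k\in\R^{(d_1+d_2)\times(d_1+d_2)}$ whose off-diagonal blocks are $\mathbf{S}_k$ and $\mathbf{S}_k^\top$ and whose diagonal blocks vanish. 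Dilation is linear, so $\mathcal{Z}\coloneqq\sum_k\mathcal{S}_k$ is the dilation of $\Z$; its eigenvalues are the numbers $\pm\sigma_i(\Z)$ together with zeros, so $\lambda_{\max}(\mathcal{Z})=\enorm{\Z}$ and no two-sided union bound is needed. One checks $\E{\mathcal{S}_k}=0$, $\enorm{\mathcal{S}_k}=\enorm{\mathbf{S}_k}\le L$, and that $\mathcal{S}_k^2$ is block-diagonal with blocks $\mathbf{S}_k\mathbf{S}_k^\top$ and $\mathbf{S}_k^\top\mathbf{S}_k$, whence $\enorm{\sum_k\E{\mathcal{S}_k^2}}=v(\Z)$. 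It therefore suffices to prove $\Prob{\lambda_{\max}(\mathcal{Z})\ge t}\le d'\exp\round{-\tfrac{t^2/2}{v+Lt/3}}$ with $d'\coloneqq d_1+d_2$ and $v\coloneqq v(\Z)$.

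For the tail bound, fix $\theta>0$ and apply Markov's inequality to $e^{\theta\lambda_{\max}(\mathcal{Z})}=\lambda_{\max}(e^{\theta\mathcal{Z}})\le\tr e^{\theta\mathcal{Z}}$, giving $\Prob{\lambda_{\max}(\mathcal{Z})\ge t}\le e^{-\theta t}\,\E{\tr e^{\theta\mathcal{Z}}}$. The essential step is the subadditivity of the matrix cumulant generating function: by Lieb's concavity theorem (concavity of $A\mapsto\tr\exp(H+\log A)$ over positive-definite $A$) and Jensen's inequality, peeling off one independent $\mathcal{S}_k$ at a time, $\E{\tr e^{\theta\mathcal{Z}}}\le\tr\exp\round{\sum_k\log\E{e^{\theta\mathcal{S}_k}}}$. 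Then I would bound each log-mgf: the scalar function $x\mapsto(e^{\theta x}-\theta x-1)/x^2$ is increasing, so $e^{\theta x}\le 1+\theta x+g(\theta)x^2$ for $x\le L$ with $g(\theta)\coloneqq(e^{\theta L}-\theta L-1)/L^2$; applying the transfer rule to $\mathcal{S}_k\preceq LI$ and using $\E{\mathcal{S}_k}=0$ gives $\E{e^{\theta\mathcal{S}_k}}\preceq I+g(\theta)\E{\mathcal{S}_k^2}$, and operator monotonicity of $\log$ together with $\log(I+A)\preceq A$ yields $\log\E{e^{\theta\mathcal{S}_k}}\preceq g(\theta)\E{\mathcal{S}_k^2}$.

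Summing these and using $\tr\exp(M)\le d'\exp\round{\lambda_{\max}(M)}$ together with $\lambda_{\max}\round{\sum_k\E{\mathcal{S}_k^2}}=v$ gives $\E{\tr e^{\theta\mathcal{Z}}}\le d'\exp\round{g(\theta)v}$. Finally, since $e^{\theta L}-\theta L-1=\sum_{j\ge2}(\theta L)^j/j!\le\tfrac{1}{2}(\theta L)^2/(1-\theta L/3)$ for $0<\theta<3/L$ (using $j!\ge2\cdot3^{j-2}$), one has $g(\theta)\le(\theta^2/2)/(1-\theta L/3)$, and the choice $\theta=t/(v+Lt/3)$ turns $e^{-\theta t}d'\exp\round{g(\theta)v}$ into $d'\exp\round{-\tfrac{t^2/2}{v+Lt/3}}$, which, via $\lambda_{\max}(\mathcal{Z})=\enorm{\Z}$, $d'=d_1+d_2$, and $v=v(\Z)$, is exactly the asserted inequality.

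The only genuinely hard ingredient is Lieb's concavity theorem (equivalently, Tropp's master tail inequality, i.e.\ the subadditivity of the matrix cumulant generating function): it is precisely what allows the classical scalar Bernstein computation to survive the passage to non-commuting matrices. Everything else --- the Hermitian dilation, the transfer rule, and the scalar optimization over $\theta$ --- is routine bookkeeping. In the present paper this proposition is used only as a quoted tool from \cite{tropp2015introduction}, so no argument beyond that citation is actually required here.
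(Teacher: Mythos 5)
Your reading is correct: the paper does not prove this proposition but imports it verbatim as Theorem~1.6.2 from \cite{tropp2015introduction}, so the citation is the ``proof.'' Your supplementary sketch via Hermitian dilation, the Laplace transform method, Lieb's concavity (for subadditivity of the matrix cgf), the transfer rule, and the scalar optimization $\theta = t/(v+Lt/3)$ is an accurate account of Tropp's argument and contains no gaps.
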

\begin{fact}[Hoeffding's inequality~\cite{hoeffding1963}]\label{fact:hoeffding}
Let $X_1,\ldots, X_n$ be independent random variables with bounded interval $0 \le X_i \le 1$. Let $\overline{X} = \frac{1}{n} \sum_{i=1}^n X_i$. Then
\begin{align*}
\Prob{\abs{ \overline{X}  - \E{ \overline{X} }}\ge z}\le 2\exp\curly{-2nz^2}.
\end{align*}
\end{fact}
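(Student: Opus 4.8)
The plan is to prove this by the classical Chernoff (exponential moment) method combined with Hoeffding's lemma on the moment generating function of a bounded, mean-zero random variable. First I would center the summands: set $Y_i \coloneqq X_i - \E{X_i}$, so that $\E{Y_i}=0$ and each $Y_i$ lies in an interval of length $1$ (namely $[-\E{X_i},\,1-\E{X_i}]$). It suffices to bound the two one-sided deviations $\Prob{\frac1n\sum_i Y_i \ge z}$ and $\Prob{\frac1n\sum_i Y_i \le -z}$ each by $\exp(-2nz^2)$; the stated bound with the leading factor $2$ then follows by a union bound over the two tails. By symmetry (apply the argument to $-Y_i$), it is enough to handle the upper tail.

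For the upper tail, for any $\lambda>0$, Markov's inequality applied to $\exp\round{\lambda\sum_i Y_i}$ together with independence gives $\Prob{\sum_i Y_i \ge nz} \le e^{-\lambda n z}\prod_{i=1}^n \E{e^{\lambda Y_i}}$. The key ingredient, and the technical heart of the argument, is \emph{Hoeffding's lemma}: if $Y$ is mean-zero and supported on $[a,b]$, then $\E{e^{\lambda Y}} \le \exp\round{\lambda^2(b-a)^2/8}$. I would prove it by using convexity of $u\mapsto e^{\lambda u}$ to write $e^{\lambda Y}\le \frac{b-Y}{b-a}e^{\lambda a}+\frac{Y-a}{b-a}e^{\lambda b}$, taking expectations and using $\E{Y}=0$ to obtain $\E{e^{\lambda Y}}\le e^{\varphi(u)}$ where $u=\lambda(b-a)$, $p=-a/(b-a)\in[0,1]$, and $\varphi(u)=-pu+\log\round{1-p+pe^{u}}$. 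One then checks $\varphi(0)=0$, $\varphi'(0)=0$, and $\varphi''(u)=\frac{p(1-p)e^{u}}{(1-p+pe^{u})^2}\le \tfrac14$ for all $u$, so Taylor's theorem with remainder gives $\varphi(u)\le u^2/8$. In our case $b-a=1$, so $\E{e^{\lambda Y_i}}\le e^{\lambda^2/8}$, and hence $\Prob{\sum_i Y_i \ge nz}\le \exp\round{-\lambda n z + n\lambda^2/8}$.

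Finally I would optimize the exponent over $\lambda>0$: the minimum of $-\lambda n z + n\lambda^2/8$ is attained at $\lambda=4z$, giving $\Prob{\sum_i Y_i\ge nz}\le \exp(-2nz^2)$, i.e. $\Prob{\overline X - \E{\overline X}\ge z}\le\exp(-2nz^2)$; combining with the mirror-image bound for the lower tail yields $2\exp(-2nz^2)$. The main obstacle is Hoeffding's lemma, and within it the uniform bound $\varphi''(u)\le 1/4$: recognizing $\varphi''(u)$ as a reparametrization of the variance of a two-point random variable on $[a,b]$ and bounding it by $(b-a)^2/4$ is the one genuinely non-mechanical step. Everything else, namely the exponential Markov bound, the MGF factorization via independence, the one-dimensional optimization over $\lambda$, and the two-tail union bound, is routine.
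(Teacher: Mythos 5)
The paper states this as a cited \textbf{Fact} (attributed to Hoeffding 1963) and does not supply a proof, so there is nothing to diverge from; your Chernoff-plus-Hoeffding's-lemma argument is correct and is in fact the canonical proof from the cited reference. All steps check out: centering, the MGF factorization via independence, the convexity bound $\varphi''(u)\le 1/4$ giving $\E{e^{\lambda Y_i}}\le e^{\lambda^2/8}$, the optimization at $\lambda=4z$, and the two-tail union bound yielding the factor $2$.
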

\begin{proposition}[$\ell_\infty$ deviation bound of multinomial distributions]\label{prop:mult-con}
Let $\p = \{p_1,\ldots,p_k\}$ be a vector of probabilities (i.e. $p_i\ge 0$ for all $i\in [k]$ and $\sum_{i=1}^kp_i = 1$). Let $\x \sim {\rm multinomial} (n, \p)$ follow a multinomial distribution with $n$ trials and probability $\p$. Then with probability $1-\delta$, for all $i\in [k]$,
$$
\abs{\frac{1}{n}x_i-p_i} \le \sqrt{\frac{3\log(k/\delta)}{n}p_i},
$$
which implies
$$
\norm{\frac{1}{n}\x-\p}_{\infty} \le \sqrt{\frac{3\log(k/\delta)}{n}}.
$$
for all $i\in\squarebrack{k}$.
\end{proposition}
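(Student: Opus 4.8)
This is a standard multiplicative (Bernstein/Chernoff-type) tail bound for the category counts of a multinomial, so the plan is a per-coordinate concentration bound followed by a union bound over the $k$ categories. First I would note that, marginally, $x_i=\sum_{j=1}^{n}Y_{j,i}$ where $Y_{j,i}=\indicator{\text{trial }j\text{ lands in category }i}$ are i.i.d.\ $\mathrm{Bernoulli}(p_i)$; hence $x_i\sim\mathrm{Binomial}(n,p_i)$ with $\E{x_i/n}=p_i$ and $\mathrm{Var}(x_i/n)=p_i(1-p_i)/n\le p_i/n$. It is precisely this variance bound (rather than the worst-case $1/4$) that produces the $\sqrt{p_i}$ factor.

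Next, for a fixed $i$, I would invoke Bernstein's inequality for sums of independent bounded variables (equivalently the multiplicative Chernoff bound): for all $z>0$,
\[
\Prob{\abs{\tfrac1n x_i - p_i}\ge z}\;\le\;2\exp\!\round{-\frac{nz^2/2}{p_i+z/3}}.
\]
Plugging in $z=z_i^\star:=\sqrt{3p_i\log(k/\delta)/n}$ makes the exponent at least $\log(k/\delta)$ (up to the absolute constant hidden in the ``$3$''), so the failure probability for coordinate $i$ is at most $\delta/k$. A union bound over $i\in[k]$ then yields, with probability at least $1-\delta$, that $\abs{x_i/n-p_i}\le\sqrt{3p_i\log(k/\delta)/n}$ simultaneously for all $i\in[k]$, which is the first claim; the second (``$\ell_\infty$'') claim follows at once from $p_i\le1$ inside the square root, and can alternatively be obtained directly by applying Hoeffding's inequality (Fact~\ref{fact:hoeffding}) coordinatewise to the $[0,1]$-valued indicators $Y_{j,i}$.

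I do not expect any real obstacle here — this is a textbook estimate — but the one point to be careful about is the regime of validity: the multiplicative/Bernstein bound delivers the clean $\sqrt{p_i}$ rate exactly when $np_i=\Omega(\log(k/\delta))$, which is also the only regime in which the claimed inequality says anything nontrivial for the upper tail (when $np_i\lesssim\log(k/\delta)$ one has $\sqrt{3p_i\log(k/\delta)/n}>p_i$). In the small-count regime one should instead keep the additive $\log(k/\delta)/n$ term that Bernstein also produces; since every application of this proposition in the paper is under $n=\Omega(\log(k/\delta)/p_{\rm min})$, this distinction never matters and the statement as used is fine.
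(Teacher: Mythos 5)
Your proposal is correct and follows essentially the same route as the paper: reduce to the binomial marginal $x_i\sim\mathrm{Binomial}(n,p_i)$, apply a multiplicative Chernoff/Bernstein bound per coordinate with the variance-aware scale $\sqrt{p_i/n}$, and union bound over $i\in[k]$ (the paper cites its Fact~\ref{fact:chernoff}; your Bernstein form is the same estimate). Your closing observation about the regime of validity is a genuine refinement that the paper's proof glosses over: Fact~\ref{fact:chernoff} as stated requires the multiplicative deviation $z=\sqrt{3\log(k/\delta)/(np_i)}\le 1$, i.e.\ $np_i\gtrsim\log(k/\delta)$, and you correctly note this is satisfied whenever the proposition is invoked.
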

\begin{proof}
For each element $x_i$, applying Chernoff Bound~\ref{fact:chernoff} with $z = \sqrt{\frac{3\log (k/\delta)}{n\E{\overline{X}}}}$ and taking a union bound over all $i$, we get 
$$
\abs{\frac{1}{n}x_i-p_i}\le \sqrt{\frac{3\log(k/\delta)p_i}{n}}.
$$
for all $i\in\squarebrack{k}$.\qedhere

\end{proof}
\begin{fact}[Chernoff Bound]\label{fact:chernoff}
Let  $X_1, \ldots, X_n$ be independent Bernoulli random variables.  Let $\overline{X} = \frac{1}{n} \sum_{i=1}^n X_i$. Then for all $0 < \delta \le 1$
\begin{align*}
\Prob{\abs{ \overline{X}  - \E{ \overline{X} }}\ge z \E{ \overline{X}}}\le \exp\curly{-z^2n\E{ \overline{X}}/3}.
\end{align*}
\end{fact}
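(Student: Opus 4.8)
The plan is to use the standard exponential-moment (Chernoff) method. Write $S = \sumi{1}{n} X_i$ and $\mu = \E{S} = n\E{\overline{X}}$; since the event $\abs{\overline{X}-\E{\overline{X}}}\ge z\E{\overline{X}}$ coincides with $\abs{S-\mu}\ge z\mu$, it suffices to show $\Prob{\abs{S-\mu}\ge z\mu}\le\exp\curly{-z^2\mu/3}$ (up to the usual factor of $2$, which I address at the end). I would treat the upper and lower deviations separately. For the upper tail, for any $t>0$ Markov's inequality applied to $e^{tS}$ together with independence gives $\Prob{S\ge(1+z)\mu}\le e^{-t(1+z)\mu}\prod_{i=1}^n\E{e^{tX_i}}$. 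Writing $p_i=\E{X_i}$, each factor satisfies $\E{e^{tX_i}}=1+p_i(e^t-1)\le\exp\curly{p_i(e^t-1)}$ by $1+x\le e^x$, so $\prod_i\E{e^{tX_i}}\le\exp\curly{\mu(e^t-1)}$ since $\sum_i p_i=\mu$. Hence $\Prob{S\ge(1+z)\mu}\le\exp\curly{\mu(e^t-1-t(1+z))}$; the minimizer $t=\ln(1+z)$ yields $\Prob{S\ge(1+z)\mu}\le\round{e^z/(1+z)^{1+z}}^\mu$, and the elementary inequality $(1+z)\ln(1+z)-z\ge z^2/3$ for $0\le z\le1$ gives $\Prob{S\ge(1+z)\mu}\le\exp\curly{-z^2\mu/3}$.

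For the lower tail I would argue symmetrically: for $t>0$, $\Prob{S\le(1-z)\mu}=\Prob{e^{-tS}\ge e^{-t(1-z)\mu}}\le e^{t(1-z)\mu}\prod_i\E{e^{-tX_i}}\le\exp\curly{\mu(e^{-t}-1+t(1-z))}$, and optimizing at $t=-\ln(1-z)$ gives $\round{e^{-z}/(1-z)^{1-z}}^\mu$. Using $(1-z)\ln(1-z)+z\ge z^2/2$ for $0\le z<1$ yields $\Prob{S\le(1-z)\mu}\le\exp\curly{-z^2\mu/2}\le\exp\curly{-z^2\mu/3}$. A union bound over the two one-sided events then gives $\Prob{\abs{S-\mu}\ge z\mu}\le\exp\curly{-z^2\mu/3}+\exp\curly{-z^2\mu/2}\le 2\exp\curly{-z^2\mu/3}$, from which the stated form follows; the constant $2$ is immaterial for every application of this fact in the paper (it can be absorbed into the constant in the exponent, or into the $\log(k/\delta)$ factors whenever the bound is invoked, e.g.\ in Proposition~\ref{prop:mult-con}).

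The only non-mechanical ingredients are the two scalar inequalities $(1+z)\ln(1+z)-z\ge z^2/3$ and $(1-z)\ln(1-z)+z\ge z^2/2$ on $[0,1]$; these are routine calculus facts — one checks that each side vanishes at $z=0$ with matching first derivative and then compares second derivatives — or one may simply cite a standard reference. I therefore do not anticipate any genuine obstacle: the argument is a textbook multiplicative Chernoff bound, and the main point of including it here is completeness, so I would keep the write-up short and defer the scalar estimates to a citation.
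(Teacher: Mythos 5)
The paper states this Chernoff bound as a citation-level fact and offers no proof of its own, so there is nothing to compare against: your exponential-moment argument is the standard and correct derivation. You are also right to flag that the two-sided statement as printed is missing the factor of $2$ from the union bound over the upper and lower tails (the clean one-sided bounds are $\exp\curly{-z^2\mu/3}$ and $\exp\curly{-z^2\mu/2}$ respectively, so only $2\exp\curly{-z^2\mu/3}$ follows for the absolute deviation); as you note, this constant is absorbed harmlessly everywhere the fact is invoked, e.g.\ in Proposition~\ref{prop:mult-con}.
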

\begin{proposition}[High probability bound on the error of random design linear regression]\label{prop:ols-error}
Consider the following linear regression problem where we are given $n$ i.i.d. samples
\[
\x_i \sim D~,~y_i = \beta^\top \x_i + \eps_i~,~i\in [n] 
\]
where $D$ is a $d$-dimensional ($d<n$) sub-Gaussian distribution with constant sub-gaussian norm, $\E{\x_i}=0$, $\E{\x_i\x_i^\top}=\I_d$, and $\eps_i$ is a sub-gaussian random variable and satisfies $\E{\eps_i} = 0$, $\E{\eps_i^2} = \sigma^2$.
\begin{enumerate}
\item Then, with probability $1-\delta$, the ordinary least square estimator $\hat{\beta} \coloneqq \argmin_{\w} \sum_{i=1}^n\round{y_i-\w^\top\x_i}^2$ satisfies
\[
\esqnorm{\hat{\beta}-\beta} \le \BigO{\frac{\sigma^2(d+\log(1/\delta))}{n}}.
\]
\item Define the estimator of the noise $\hat{\sigma}^2$ as
\[
\hat{\sigma}^2 \coloneqq \frac{1}{n-d}\sum_{i=1}^n\round{y_i-\hat{\beta}^\top \x_i}^2.
\]
Then with probability $1-\delta$, it holds that
\[
|\hat{\sigma}^2 - \sigma^2|\le \frac{\log(1/\delta)}{\sqrt{n-d}}\sigma^2.
\]
\end{enumerate}
\end{proposition}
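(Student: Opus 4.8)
The plan is to reduce both parts to the concentration of a single quadratic form of the noise vector $\boldsymbol{\eps}\coloneqq(\eps_1,\ldots,\eps_n)^\top$ around its mean, after conditioning on the design. First I would set $\X\coloneqq[\x_1,\ldots,\x_n]^\top\in\R^{n\times d}$, write the ordinary least squares estimator as $\hat\beta=(\X^\top\X)^{-1}\X^\top\y$ with $\y=\X\beta+\boldsymbol{\eps}$, so that $\hat\beta-\beta=(\X^\top\X)^{-1}\X^\top\boldsymbol{\eps}$, and introduce the orthogonal projector $\Pi\coloneqq\X(\X^\top\X)^{-1}\X^\top$ onto the column span of $\X$, which has rank $d$ (so $\tr(\Pi)=d$, $\normF{\Pi}^2=d$, $\enorm{\Pi}=1$, and $\tr(\I-\Pi)=n-d$). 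Throughout I treat $\boldsymbol{\eps}$ as independent of the $\x_i$'s, as in the model.

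\textbf{Spectrum control and part 1.} I would first control the sample covariance: since the rows $\x_i$ are i.i.d.\ centered sub-Gaussian with $\E\x_i\x_i^\top=\I_d$, standard non-asymptotic bounds for sub-Gaussian design matrices (e.g.\ \cite{vershynin2018high}) give that, provided $n\ge C(d+\log(1/\delta))$ for an absolute constant $C$ — which holds with a wide margin whenever this proposition is invoked, in particular inside Lemma~\ref{lem:refined-est} — the event $\Ecal_1\coloneqq\curly{\lambda_{\min}(\X^\top\X)\ge n/2}$ occurs with probability at least $1-\delta/2$. On $\Ecal_1$, writing $\Q\coloneqq\X(\X^\top\X)^{-1/2}$ (which has orthonormal columns with $\Q\Q^\top=\Pi$) and using $\hat\beta-\beta=(\X^\top\X)^{-1/2}\Q^\top\boldsymbol{\eps}$, one gets $\esqnorm{\hat\beta-\beta}\le\lambda_{\min}(\X^\top\X)^{-1}\,\boldsymbol{\eps}^\top\Pi\boldsymbol{\eps}\le\frac{2}{n}\,\boldsymbol{\eps}^\top\Pi\boldsymbol{\eps}$. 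Conditioning on $\X$ so that $\Pi$ is fixed while $\boldsymbol{\eps}$ remains a vector of independent mean-zero sub-Gaussian coordinates with variance $\sigma^2$, the Hanson--Wright inequality \cite{vershynin2018high} gives $\boldsymbol{\eps}^\top\Pi\boldsymbol{\eps}=\sigma^2\tr(\Pi)\pm\BigO{\sigma^2\round{\sqrt{d\log(1/\delta)}+\log(1/\delta)}}=\BigO{\sigma^2(d+\log(1/\delta))}$ with probability $1-\delta/2$; a union bound with $\Ecal_1$ then yields $\esqnorm{\hat\beta-\beta}\le\BigO{\sigma^2(d+\log(1/\delta))/n}$.

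\textbf{Part 2.} Since $\hat\beta$ is the least-squares solution, $\X\hat\beta=\Pi\y$, and because $(\I-\Pi)\X=\zero$ the residual satisfies $\y-\X\hat\beta=(\I-\Pi)\boldsymbol{\eps}$; hence $\hat\sigma^2=\frac{1}{n-d}\boldsymbol{\eps}^\top(\I-\Pi)\boldsymbol{\eps}$ with $\I-\Pi$ a rank-$(n-d)$ projector, so $\E[\boldsymbol{\eps}^\top(\I-\Pi)\boldsymbol{\eps}\mid\X]=\sigma^2(n-d)$. Applying Hanson--Wright once more (conditioned on $\X$) gives $\abs{\boldsymbol{\eps}^\top(\I-\Pi)\boldsymbol{\eps}-\sigma^2(n-d)}=\BigO{\sigma^2\round{\sqrt{(n-d)\log(1/\delta)}+\log(1/\delta)}}$ with probability $1-\delta$; dividing by $n-d$ and crudely bounding $\sqrt{(n-d)\log(1/\delta)}\le\log(1/\delta)\sqrt{n-d}$ and $\log(1/\delta)\le\log(1/\delta)\sqrt{n-d}$ (valid once $\log(1/\delta)\ge1$ and $n-d\ge1$) recovers the stated, intentionally loose, bound $\abs{\hat\sigma^2-\sigma^2}\le\frac{\log(1/\delta)}{\sqrt{n-d}}\sigma^2$ up to the absorbed constant.

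\textbf{Main obstacle.} I expect the only genuine subtlety to be the coupling between the projector $\Pi$ (a function of $\X$) and the noise $\boldsymbol{\eps}$; the clean way around it is to condition on $\X$ before invoking the quadratic-form tail bound, after which everything reduces to one application of Hanson--Wright plus a union bound. The one hypothesis that is tacitly used but not spelled out in the statement is $n\gtrsim d$ (needed so that $\lambda_{\min}(\X^\top\X)$ is bounded below in Step~1), which is harmless in every place the result is used.
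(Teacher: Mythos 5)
Your proof is correct, and your Part~2 argument is essentially identical to the paper's: write the residual as $(\I-\Pi)\bm{\eps}$, observe $\E[\bm{\eps}^\top(\I-\Pi)\bm{\eps}\mid\X]=\sigma^2(n-d)$, and apply Hanson--Wright conditionally on $\X$. For Part~1, however, you take a genuinely different (and more self-contained) route: the paper simply invokes \cite[Remark~12]{hsu2012random} as a black box, whereas you re-derive the bound from first principles by (i) controlling $\lambda_{\min}(\X^\top\X)$ via standard sub-Gaussian covariance concentration, (ii) reducing $\esqnorm{\hat\beta-\beta}$ to $\frac{2}{n}\bm{\eps}^\top\Pi\bm{\eps}$ via the factorization $\hat\beta-\beta=(\X^\top\X)^{-1/2}\Q^\top\bm{\eps}$, and (iii) hitting the quadratic form with Hanson--Wright again. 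Your approach buys transparency and a single unified tool (Hanson--Wright for both parts), and it makes explicit the tacit hypothesis $n\gtrsim d+\log(1/\delta)$ that the paper's citation hides inside the $o(1/n)$ term of Hsu--Kakade--Zhang; the paper's approach buys brevity. Both versions require conditioning on $\X$ before applying the quadratic-form tail bound, and both implicitly assume $\X$ has full column rank almost surely so that $\Pi$ has rank exactly $d$ (a mild point neither treatment dwells on). No gaps.
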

\begin{proof}
\citep[Remark~12]{hsu2012random} shows that in the setting stated in the proposition, with probability $1-\exp(-t)$, it holds that the least square estimator 
\[
\esqnorm{\hat{\beta}-\beta} \le  \BigO{\frac{\sigma^2\round{d+2\sqrt{dt}+2t}}{n}}+o\round{\frac{1}{n}}.
\]
This implies that with probability $1-\delta$, it holds that 
\[
\esqnorm{\hat{\beta}-\beta} =   \BigO{\frac{\sigma^2(d+\log(1/\delta))}{n}}.
\]
To prove the second part of the proposition, we first show that $\hat{\sigma}^2$ is an unbiased estimator for $\sigma^2$ and then apply Hanson-Wright inequality to show the concentration. Define vector $\y \coloneqq (y_1,\ldots, y_n)$, $\bm{\eps} \coloneqq (\eps_1, \ldots, \eps_n)$ and matrix $\X \coloneqq \begin{bmatrix}
\x_1,
\ldots,
\x_n
\end{bmatrix}^\top$. Notice that 
\begin{align*}
\E{\hat{\sigma}^2} &= \frac{1}{n-d}\E{\sum_{i=1}^n\round{y_i-\hat{\beta}^\top \x_i}^2}\\
&= \frac{1}{n-d}\E{\bm{\eps}^\top\round{\I_n - \X\round{\X^\top\X}^{-1}\X^\top}\bm{\eps}}\\
& = \inv{n-d}\E{\tr\squarebrack{\I_n - \X\round{\X^\top\X}^{-1}\X^\top}} = \sigma^2,
\end{align*}
where the last equality holds since $\X\round{\X^\top\X}^{-1}\X^\top$ has exactly $d$ eigenvalues equal to $1$ almost surely. For a fixed $\X$ with rank $d$, by Hanson-Wright inequality~\citep[Theorem~6.2.1]{vershynin2018high}, it holds that 
\[
\Prob{\abs{\hat{\sigma}^2-\sigma^2}\ge z} \le 2\exp\curly{-c\min\curly{(n-d)z^2/\sigma^4,(n-d)z/\sigma^2}},
\]
which implies that with probability $1-\delta$
\[
\abs{\hat{\sigma}^2-\sigma^2}=\BigO{ \frac{\log(1/\delta)}{\sqrt{n-d}}\sigma^2}.\qedhere
\]
\end{proof}

\section{Simulations}\label{sec:simulations}
We set $d=8k$, $\p=\one_k/k$, $\s=\one_k$, and $\Pcal_\x$ and $\Pcal_\epsilon$ are standard Gaussian distributions.

\subsection{Subspace estimation}\label{sec:sim_subspace}

We compute the subspace estimation error $\rho^{-1}\max_{i\in\squarebrack{k}}\enorm{\round{\U\U^\top-\I}\w_i}$ for various $(t_{L1},n_{L1})$ pairs for $k=16$ and present them in Table~\ref{tab:subspace_estimation}.

\begin{table}[ht]
    \caption{Error in subspace estimation for $k=16$, varying $n_{L1}$ \& $t_{L1}$.}
    \label{tab:subspace_estimation}
    \vskip 0.15in
    \begin{center}
    \begin{small}
    \begin{sc}
    \resizebox{0.48\textwidth}{!}{
    \def\arraystretch{1.1}
    \begin{tabular}{c||c c c c c c c }
    \toprule
        $(t_{L1},n_{L1})$ & $2^{14}$ & $2^{15}$ & $2^{16}$ & $2^{17}$ & $2^{18}$ & $2^{19}$ & $2^{20}$ \\
    \midrule
        $2^1$ & $0.652$ & $0.593$ & $0.403$ & $0.289$ & $0.195$ & $0.132$ & $0.101$ \\
        $2^2$ & $0.383$ & $0.308$ & $0.194$ & $0.129$ & $0.101$ & $0.069$ & $0.05$ \\
        $2^3$ & $0.203$ & $0.153$ & $0.099$ & $0.072$ & $0.052$ & $0.034$ & $0.03$ \\
    \bottomrule
    \end{tabular}}
    \end{sc}
    \end{small}
    \end{center}
    \vskip -0.1in
\end{table}

\subsection{Clustering}\label{sec:sim_clustering}
Given a subspace estimation error is $\sim0.1$, the clustering step is performed with $n_H=\max\curly{k^{3/2},256}$ tasks for various $t_H$. The minimum $t_H$ such that the clustering accuracy is above $99\%$ for at-least $1-\delta$ fraction of $10$ random trials is denoted by $t_{\rm min}(1-\delta)$. Figure~\ref{fig:clustering_tmin}, and Table~\ref{tab:clustering} illustrate the dependence of $k$ on $t_{\rm min}(0.5)$, and $t_{\rm min}(0.9)$.

\begin{figure}[ht]
    \vskip 0.2in
    \begin{center}
    \centerline{\includegraphics[width=0.5\linewidth]{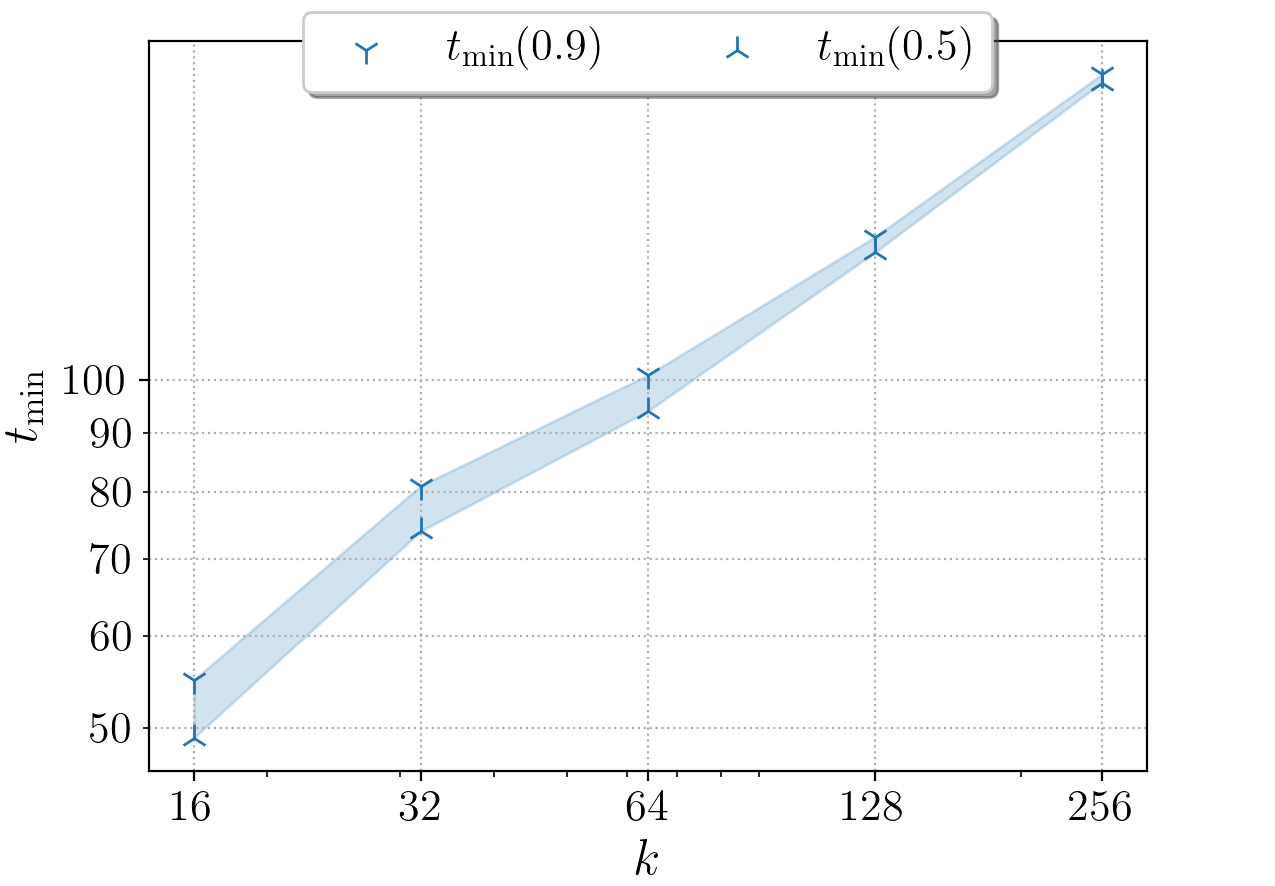}}
    \caption{$t_{\rm min}(0.9)$ and $t_{\rm min}(0.5)$ for various $k$}
    \label{fig:clustering_tmin}
    \end{center}
    \vskip -0.2in
\end{figure}

\begin{table}[ht]
    \caption{$t_{\rm min}$ for various $k$, for $99\%$ clustering w.h.p.}
    \label{tab:clustering}
    \vskip 0.15in
    \begin{center}
    \begin{small}
    \begin{sc}
    \def\arraystretch{1.1}
    \begin{tabular}{ c||c c c c c }
    \toprule
        $k$ & $16$ & $32$ & $64$ & $128$ & $256$ \\ \hline
        $t_{\rm min}(0.9)$ & $55$ & $81$ & $101$ & $133$ & $184$ \\
        $t_{\rm min}(0.5)$ & $49$ & $74$ & $94$ & $129$ & $181$ \\
    \bottomrule
    \end{tabular}
    \end{sc}
    \end{small}
    \end{center}
    \vskip -0.1in
\end{table}

\subsection{Classification and parameter estimation}\label{sec:sim_classification}
Given a subspace estimation error is $\sim 0.1$, and a clustering accuracy is $>99\%$, the classification step is performed on $n_{L2} = \max\curly{512,k^{3/2}}$ tasks for variour $t_{L2}\in\Natural$. The empirical mean of the classification accuracy is computed for every $t_{L2}$, and illustrated in Figure~\ref{fig:clustering_acc}. Similar to the simulations in the clustering step, $t_{\rm min}(1-\delta)$ is estimated such that the classification accuracy is above $99\%$ for at-least $1-\delta$ fraction times of $10$ random trials, and is illustrated in Table~\ref{tab:classification}. With $t_{L2} = t_{\rm min}(0.9)$, and various $n_{L2}\in\Natural$, the estimation errors of $\hat{\W}$, $\hat{\s}$, and $\hat{\p}$ are computed as the infimum of $\eps$ satisfying~\eqref{eq:estimation_err}, and is illustrated in Figure~\ref{fig:est_err}.

\begin{table}[ht]
    \caption{$t_{\rm min}$ for various $k$, for $99\%$ classification w.h.p.}
    \label{tab:classification}
    \vskip 0.15in
    \begin{center}
    \begin{small}
    \begin{sc}
    \def\arraystretch{1.1}
    \begin{tabular}{ c||c c c c c }
    \toprule
        $k$ & $16$ & $32$ & $64$ & $128$ \\ \hline
        $t_{\rm min}(0.9)$ & $31$ & $34$ & $36$ & $38$ \\
        $t_{\rm min}(0.5)$ & $28$ & $28$ & $34$ & $36$ \\
    \bottomrule
    \end{tabular}
    \end{sc}
    \end{small}
    \end{center}
    \vskip -0.1in
\end{table}

\begin{figure}[ht]
    \vskip 0.2in
    \begin{center}
    \centerline{\includegraphics[width=0.5\linewidth]{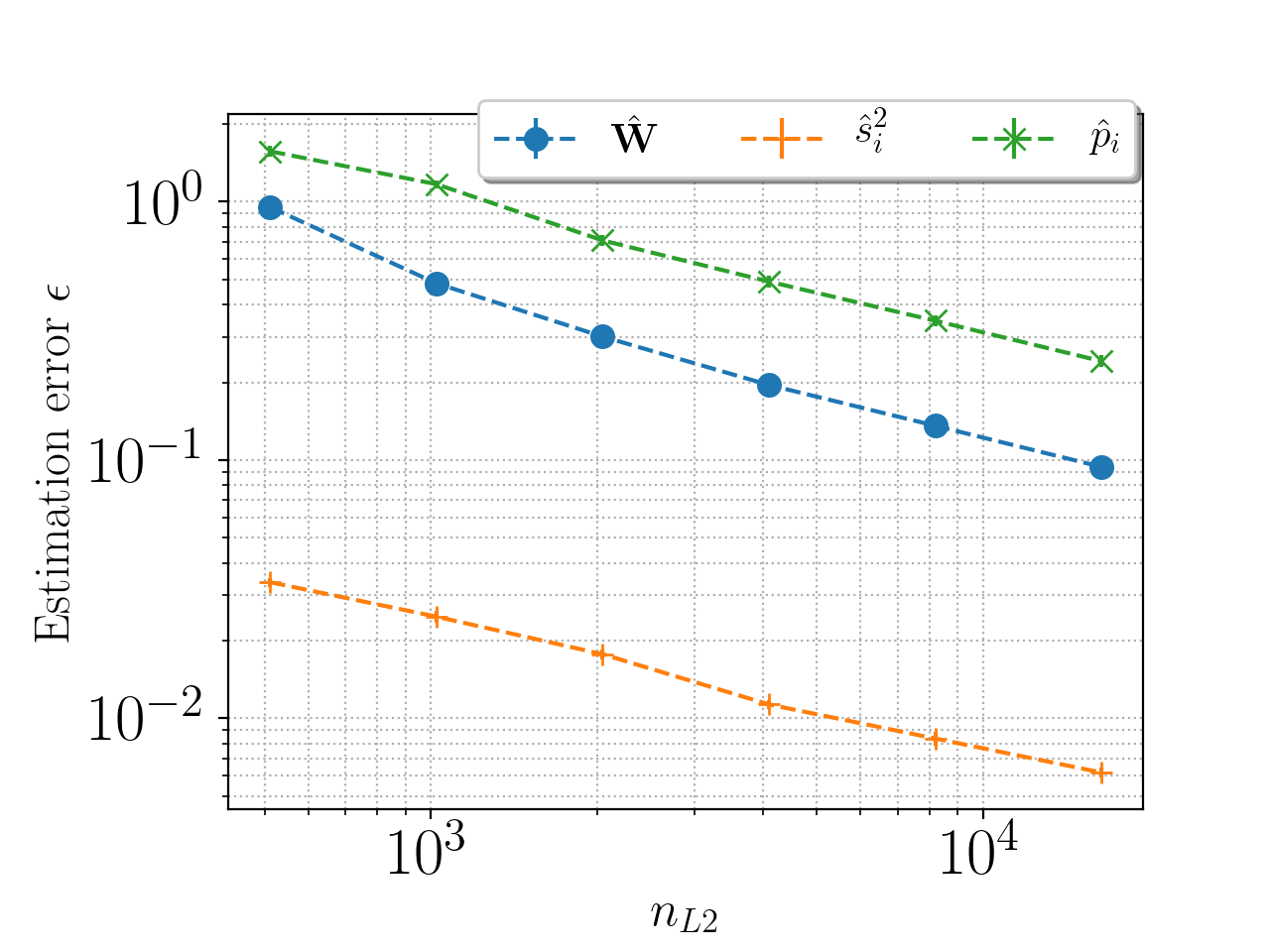}}
    \caption{Estimation errors for $k=32$.}
    \label{fig:est_err}
    \end{center}
    \vskip -0.2in
\end{figure}

\begin{figure}[ht]
    \vskip 0.2in
    \begin{center}
    \subfigure[$k=32$]{\label{fig:clustering_acc_32}\includegraphics[scale=0.32]{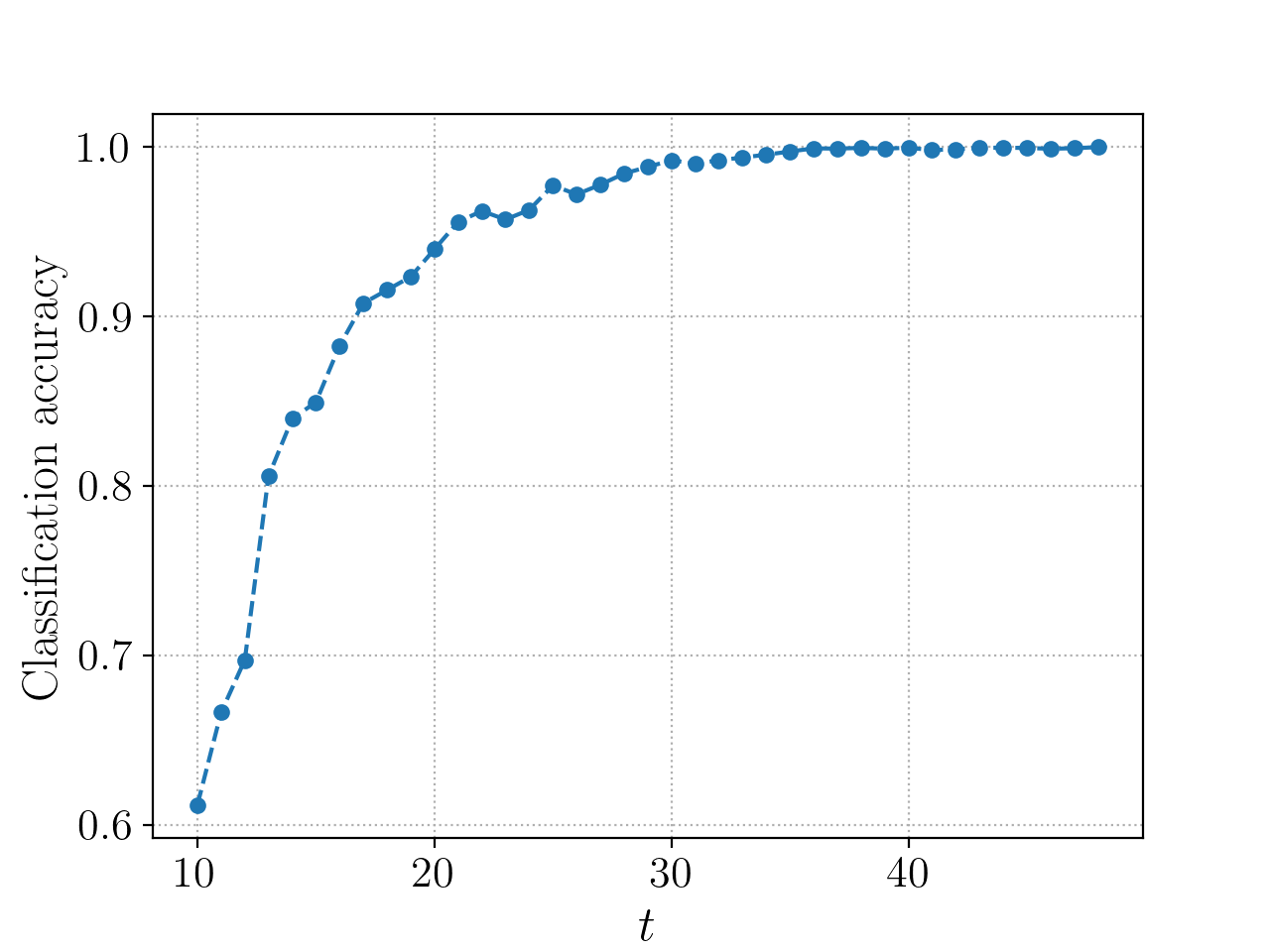}}
    \subfigure[$k=64$]{\label{fig:clustering_acc_64}\includegraphics[scale=0.32]{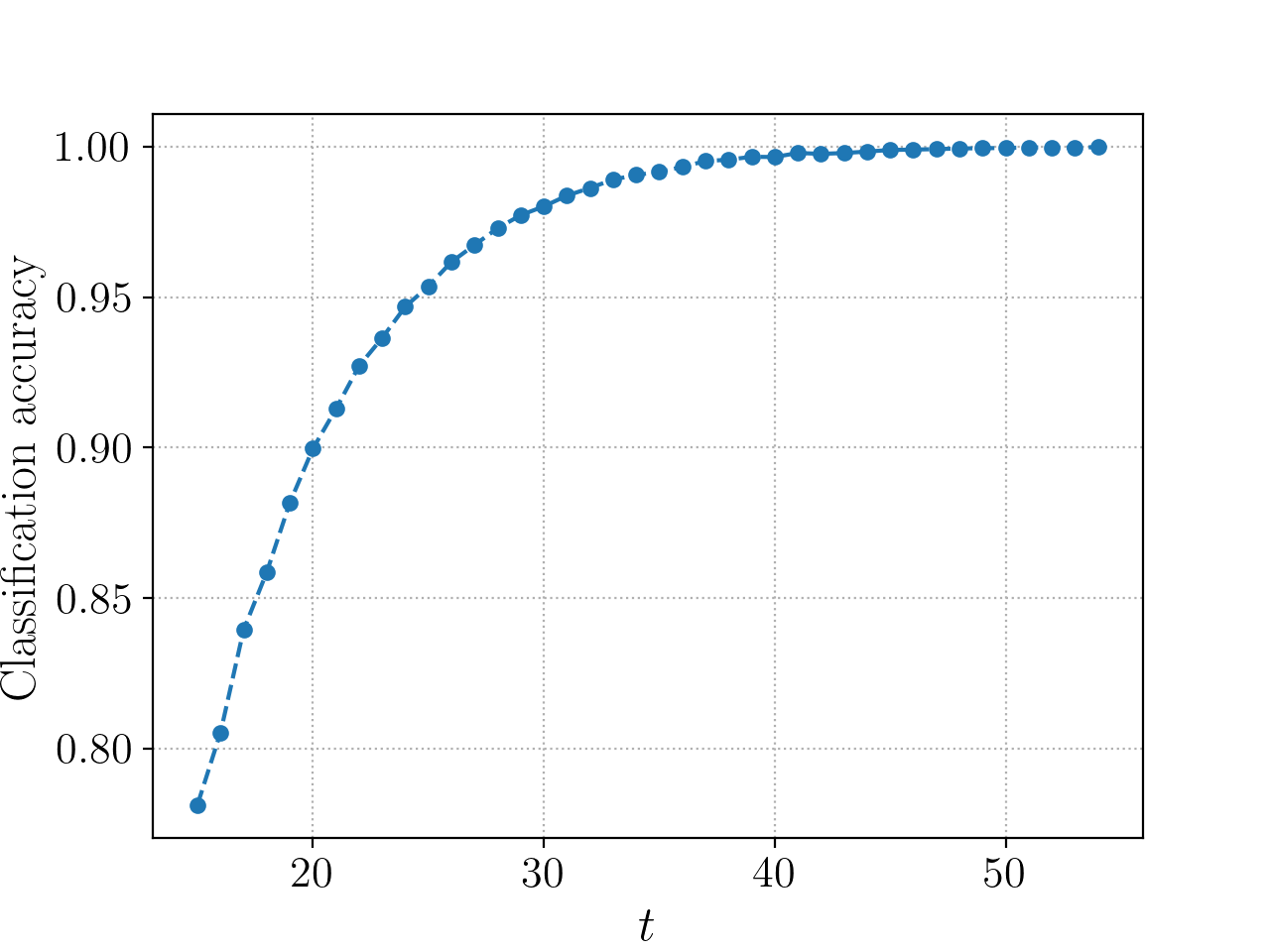}}
    \subfigure[$k=128$]{\label{fig:clustering_acc_128}\includegraphics[scale=0.32]{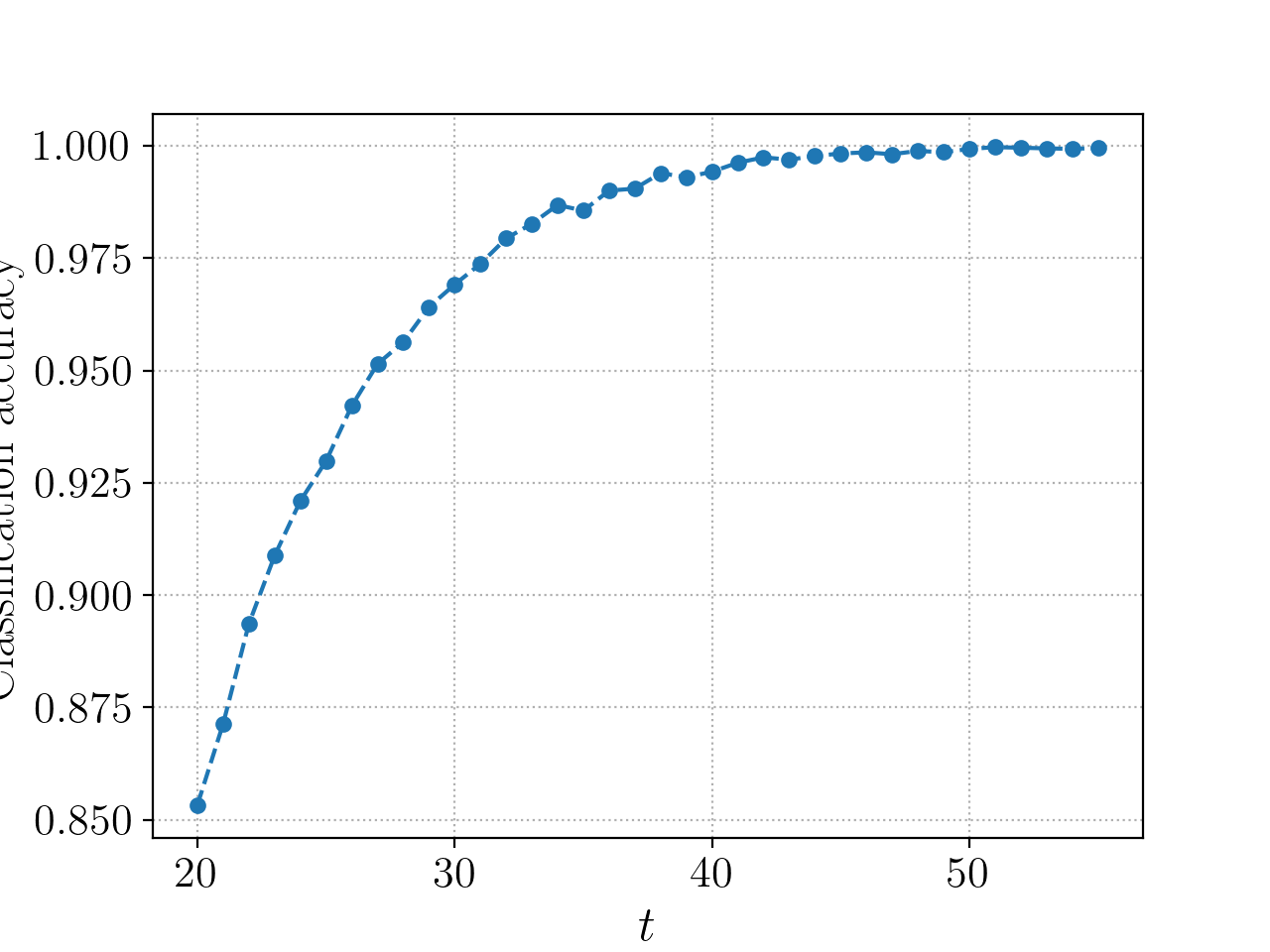}}
    \caption{Classification accuracies for various $k$}
    \label{fig:clustering_acc}
    \end{center}
    \vskip -0.2in
\end{figure}

\subsection{Prediction}
As a continuation of the simulations in this section, we proceed to the prediction step for $k=32$ and $d=256$. We use both the estimators: Bayes estimator, and the MAP estimator and illustrate the training and prediction errors in Figure~\ref{fig:pred}. We also compare the prediction error with the vanilla least squares estimator if each task were learnt separately to contrast the gain in meta-learning.

\subsection{Comparison for parameter estimation against Expectation Maximization (EM) algorithm}
For fair comparisons, we consider our meta dataset for $k=32$, and $d=256$ to jointly have $n_{L1}$ tasks with $t_{L1}$ examples, $n_{H}$ tasks with $t_{H}$ examples, and $n_{L2}$ tasks with $t_{L2}$ examples as were used in Section~\ref{sec:sim_classification}. We observe that the convergence of EM algorithm is very sensitive to the initialization, thus we investigate the sensitivity with the following experiment. We initialize $\W^{(0)} = \Pcal_{B_{2,d}(\zero,1)}\round{\W+\Z}$, where $Z_{i,j}\sim\Ncal(0,\gamma^2)\ \forall\ i\in\squarebrack{d}, j\in\squarebrack{k}$, $\s = \abs{\q}$, where $\q \sim \Ncal\round{\s,0.1\I_k}$, and $\p^{(0)} = \abs{\z}/\norm{\z}_1$ where $\z\sim\Ncal\round{\p,\I_k/k}$. $\Pcal_{\Xcal}(\cdot)$ denotes the projection operator that projects each column of its argument on set $\Xcal$. We observe that EM algorithm fails to converge for $\gamma^2\geq 0.5$ for this setup unlike our algorithm.

\end{document}